\newenvironment{proofSketch}{%
  \proof%
}{\endproof}
\DeclareMathOperator*{\minimize}{minimize}
\DeclareMathOperator*{\argmax}{arg\,max}
\DeclareMathOperator*{\argmin}{arg\,min} 
\DeclareMathAlphabet{\mathsfit}{T1}{\sfdefault}{\mddefault}{\sldefault}
\renewcommand*{\backref}[1]{}
\renewcommand*{\backrefalt}[4]{({\footnotesize%
  \ifcase #1 Not cited.%
    \or page~#2%
    \else pages #2%
  \fi%
  })}
\title{
    Nearly Dimension-Independent Convergence of\\Mean-Field Black-Box Variational Inference
}
\author{
  Kyurae Kim\\
  University of Pennsylvania\\
  \href{mailto:kyrkim@seas.upenn.edu}{\texttt{kyrkim@seas.upenn.edu}}\\
  \And
  Yi-An Ma \\
  University of California San Diego \\
  \href{mailto:yianma@ucsd.edu}{\texttt{yianma@ucsd.edu}}\\
  \And
  Trevor Campbell \\
  University of British Columbia\\
  \href{mailto:trevor@stat.ubc.ca}{\texttt{trevor@stat.ubc.ca}}
  \And
  Jacob R. Gardner\\
  University of Pennsylvania\\
  \href{mailto:jacobrg@seas.upenn.edu}{\texttt{jacobrg@seas.upenn.edu}}
}
\begin{document}

\maketitle
\begin{abstract}
    We prove that, given a mean-field location-scale variational family, black-box variational inference (BBVI) with the reparametrization gradient converges at a rate that is nearly independent of explicit dimension dependence.
    Specifically, for a $d$-dimensional strongly log-concave and log-smooth target, the number of iterations for BBVI with a sub-Gaussian family to obtain a solution $\epsilon$-close to the global optimum has a dimension dependence of $\mathrm{O}(\log d)$.
    This is a significant improvement over the $\mathrm{O}(d)$ dependence of full-rank location-scale families.
    For heavy-tailed families, we prove a weaker $\mathrm{O}(d^{2/k})$ dependence, where $k$ is the number of finite moments of the family.
    Additionally, if the Hessian of the target log-density is constant, the complexity is free of any explicit dimension dependence.
    We also prove that our bound on the gradient variance, which is key to our result, cannot be improved using only spectral bounds on the Hessian of the target log-density.
\end{abstract}

\section{Introduction}
Variational inference (VI; \citealp{peterson_explorations_1989,jordan_introduction_1999,hinton_keeping_1993,blei_variational_2017}) is an effective method for approximating intractable high-dimensional distributions and models with tall datasets.
Among various VI algorithms, black-box VI~(BBVI; \citealp{kucukelbir_automatic_2017,ranganath_black_2014,wingate_automated_2013,titsias_doubly_2014}), which minimizes the exclusive KL divergence~\citep{kullback_information_1951} via stochastic gradient descent (SGD; \citealp{robbins_stochastic_1951,bottou_optimization_2018}) in the space of parameters.
BBVI is widely used in practice due to its flexibility to apply to a wide range of variational families with only minor modifications~\citep{patil_pymc_2010,bingham_pyro_2019,carpenter_stan_2017,ge_turing_2018,fjelde_turingjl_2025}.
Specifically, location-scale variational families---in which 
a base distribution is mutated by an affine transformation---remain a popular choice, encompassing
those with diagonal scale matrices (the ``mean-field'' approximation;~\citealp{peterson_explorations_1989,hinton_keeping_1993}), as well as scale matrices with low rank~\citep{rezende_stochastic_2014,ong_gaussian_2018,tomczak_efficient_2020} and full-rank~\citep{titsias_doubly_2014,kucukelbir_automatic_2017} factors.

The choice of the variational family is generally known to affect the convergence speed of BBVI, where families that are more ``expressive,'' those that contain more complex distributions, result in slower convergence.
For example, in location-scale families, it has been empirically observed that mean-field families often provide faster convergence to an accurate posterior approximation than full-rank families~\citep{ko_provably_2024,giordano_covariances_2018,agrawal_advances_2020,zhang_pathfinder_2022,giordano_black_2024}.
This is because full-rank families often require running SGD with a smaller step size and for longer; even given a large computation budget, BBVI on a full-rank family may not converge adequately~\citep{ko_provably_2024}.
Therefore, choosing the expressiveness of the family corresponds to trading statistical accuracy for computational efficiency~\citep{bhatia_statistical_2022}.
In order to control this trade-off for our benefit, a clear theoretical understanding of the relationship between convergence speed and expressiveness is needed.

Formally, consider the setting of approximating a $\mu$-strongly log-concave and $L$-log-smooth target, where $\kappa \triangleq L/\mu$ is the condition number.
For BBVI with the reparametrization gradient~\citep{titsias_doubly_2014,rezende_stochastic_2014,kingma_autoencoding_2014} on a full-rank location-scale family, an $\epsilon$-close solution to the global optimum in squared distance in parameter space can be obtained after at least $\mathrm{O}(d \kappa^2 \epsilon^{-1})$ iterations~\citep{domke_provable_2019,kim_convergence_2023}.
For mean-field location-scale families, on the other hand, the iteration complexity improves to $\mathrm{O}(\sqrt{d} \kappa^2 \epsilon^{-1})$~\citep{kim_convergence_2023}.
While this is clearly better than the $\mathrm{O}(d)$ explicit dimension dependence of full-rank families, it has been conjectured that a better dependence is more likely~\citep{kim_convergence_2023}.

In this work, we positively resolve this conjecture by obtaining stronger convergence guarantees for BBVI on mean-field location-scale families (\cref{section:main_results}).
In particular, under the conditions stated above, we prove that BBVI with a mean-field location-scale family with sub-Gaussian tails can obtain an $\epsilon$-accurate solution in squared distance after $\mathrm{O}( (\log d) \kappa^2 \epsilon^{-1})$ iterations. 
Heavier-tailed families achieve a weaker $\mathrm{O}( d^{2/k} \kappa^2 \epsilon^{-1})$ iteration complexity guarantee, where $k$ is the number of finite moments of the variational family.
For the Student-$t$ variational family with a high-enough degrees of freedom $\nu$, this corresponds to a $\mathrm{O}(d^{2/(\nu - 2)})$ explicit dimension dependence.
In addition, if the Hessian of the target log-density is constant, any mean-field location-scale family attains a $\mathrm{O}(\kappa^2 \epsilon^{-1})$ iteration complexity without any explicit dependence on $d$.

The key element of the proof is a careful probabilistic analysis of the variance of the reparametrization gradient (\cref{section:gradient_variance_analysis}):
In general, the reparametrization gradient of the scale parameters contains heavy-tailed components that grow not-so-slowly in $d$.
However, for mean-field families, only a \textit{single} random coordinate turns out to be heavy-tailed.
Through a probabilistic decomposition, the influence of this heavy-tailed component can be averaged out over all $d$ coordinates.
Then the lighter-tailed components of the gradient dominate as $d$ increases, resulting in a benign dimension dependence (\cref{thm:gradient_variance_upperbound_meanfield_general}).
We also provide a lower bound (\cref{thm:gradient_variance_lowerbound}) showing that our analysis cannot be improved when using only spectral bounds on the Hessian of the target log density.

\vspace{-1ex}
\section{Preliminaries}\label{section:background}
\vspace{-1ex}

\paragraph{Notation}
We denote random variables in sans serif (\textit{e.g.}, $\mathsfit{u}$, $\mathsfit{U}$).
\(\mathbb{S}_{\succ 0}^d \subset \mathbb{R}^{d \times d}\) denotes the set of $d \times d$ positive definite (PD) matrices, \(\mathbb{D}^d \subset \mathbb{R}^{d \times d}\) denotes the set of diagonal matrices, and \(\mathbb{D}^d_{\succ 0} \subset \mathbb{D}^d \cap \mathbb{S}_{\succ 0}^{d \times d}\) is its positive definite subset.
\(\inner{\cdot, \cdot}\) and $\norm{\cdot}_2$ denote the Euclidean inner product and norm.
For a matrix \(A \in \mathbb{R}^{d \times d}\), \(\norm{A}_{\mathrm{F}} = \sqrt{\mathrm{tr}\lt(A^{\top}A\rt)}\) is the Frobenius norm, \(\norm{A}_2 = \sigma_{\mathrm{max}}\lt(A\rt)\) is the \(\ell_2\) operator norm, where \(\sigma_{\mathrm{max}}\lt(\cdot\rt)\) and \(\sigma_{\mathrm{min}}\lt(\cdot\rt)\) are the largest and smallest singular values.

\vspace{-1ex}
\subsection{Problem Setup}
\vspace{-1ex}
Our problem of interest is an optimization problem over some space \(\Lambda \subseteq \mathbb{R}^p \) of the form of
{%
\setlength{\belowdisplayskip}{.5ex} \setlength{\belowdisplayshortskip}{.5ex}
\setlength{\abovedisplayskip}{.5ex} \setlength{\abovedisplayshortskip}{.5ex}
\[
    \minimize_{\lambda \in \Lambda} \; 
    \lt\{ 
        F\lt(\lambda\rt) 
        \triangleq 
        f\lt(\lambda\rt) + h\left(\lambda\right) 
    \rt\} \; ,
    \quad\text{where}\quad
    f\lt(\lambda\rt)
    \triangleq
    \mathbb{E}_{\mathsfit{z} \sim q_{\lambda}} \ell\left(\mathsfit{z}\right) \; ,
    \label{eq:objective}
\]
}%
$\ell : \mathbb{R}^{d} \to \mathbb{R}$ is a measurable function we refer to as the ``target function'', $h : \Lambda \to \mathbb{R}$ is a potentially non-smooth convex regularizer, and the expectation $\mathbb{E}_{\mathsfit{z} \sim q_{\lambda}} \ell\lt(\mathsfit{z}\rt)$ is assumed to be intractable.

BBVI is a special case of \cref{eq:objective} where \(\ell = -\log \pi\) is the negative (unnormalized) log density of some distribution $\pi$ with respect to the Lebesgue measure and \(h(\lambda) = -\mathbb{H}[q_{\lambda}]\) is the negative differential entropy of $q_{\lambda}$.
Then $F$ is the exclusive Kullback-Leibler divergence $\mathrm{D}_{\mathrm{KL}}$~\citep{kullback_information_1951} up to an additive constant~\citep{jordan_introduction_1999}, where \cref{eq:objective} reduces to 
{%
\setlength{\belowdisplayskip}{1ex} \setlength{\belowdisplayshortskip}{1ex}
\setlength{\abovedisplayskip}{2ex} \setlength{\abovedisplayshortskip}{2ex}
\[
    \minimize_{\lambda \in \Lambda} \; 
    \Big\{ 
        \mathrm{D}_{\mathrm{KL}}\lt(q_{\lambda}, \pi\rt) 
        \propto
        - \mathbb{E}_{\mathsfit{z} \sim q_{\lambda}} \log \pi\lt(\mathsfit{z}\rt) - \mathbb{H}\left(q_{\lambda}\right) 
    \Big\} \; ,
\]
}%
We assume $\pi$ is supported on $\mathbb{R}^d$, which, unless discrete-valued variables are involved, is often valid after appropriate support transformations~\citep[\S 2.2]{kim_convergence_2023}.
Such a setup for BBVI has been proposed by~\citet{kucukelbir_automatic_2017}, and now encompasses most practical use of BBVI with the reparametrization gradient as implemented in Stan~\citep{carpenter_stan_2017}, PyMC~\citep{patil_pymc_2010}, Pyro~\citep{bingham_pyro_2019}, and Turing~\citep{ge_turing_2018,fjelde_turingjl_2025}.

For the purpose of a quantitative theoretical analysis, we will consider the following properties:
\begin{definition*}[Smoothness]
For some $\phi : \mathbb{R}^d \to \mathbb{R}$, we say $\phi$ is L-(Lipschitz )smooth if there exists some \(L \in (0, +\infty)\) such that, for all \(z, z' \in \mathbb{R}^d\), 
{%
\setlength{\belowdisplayskip}{0ex} \setlength{\belowdisplayshortskip}{0ex}
\setlength{\abovedisplayskip}{1ex} \setlength{\abovedisplayshortskip}{1ex}
\[
    \norm{\nabla \phi \lt(z\rt) - \nabla \phi \lt(z'\rt) }_2 \leq L \norm{z - z'}_2 \; .
    \nonumber
\]
}%
\end{definition*}
\vspace{-1ex}
\begin{definition*}[Strong Convexity]
For some $\phi : \mathbb{R}^d \to \mathbb{R}$, we say $\phi$ is $\mu$-strongly convex if there exists some constant \(\mu \in (0, L]\) such that, for all \(z, z' \in \mathbb{R}^d\), 
{%
\setlength{\belowdisplayskip}{0ex} \setlength{\belowdisplayshortskip}{0ex}
\setlength{\abovedisplayskip}{0ex} \setlength{\abovedisplayshortskip}{0ex}
\[
    \inner{ \nabla \phi(z) , z - z' } \geq  \phi(z) - \phi(z')  + \frac{\mu}{2} \norm{ z - z' }_2^2 \; .
    \nonumber
\]
}%
\end{definition*}

In the context of BBVI, assuming that $\ell = -\log \pi$ is both $\mu$-strongly convex and $L$-smooth is equivalent to assuming $\pi$ is $\mu$-strongly log-concave and $L$-log-Lipschitz smooth, respectively, which is common in the analysis of MCMC~\citep{chewi_logconcave_2024} and VI~\citep{kim_convergence_2023,domke_provable_2023,lambert_variational_2022,diao_forwardbackward_2023,arnese_convergence_2024,lavenant_convergence_2024}.

\vspace{-1ex}
\subsection{Variational Family}
\vspace{-1ex}

We consider the location-scale family~\citep[\S 3.5]{casella_statistical_2001}:

\begin{definition}[Location-Scale Variational Family]\label{def:locscale}
    A family of distributions \(\mathcal{Q}\) is referred to as a location-scale variational family if there exists some univariate distribution \(\varphi\) dominated by the Lebesgue measure such that each member of \(\mathcal{Q}\) indexed by \(\lambda = (m, C) \in \mathbb{R}^d \times \mathcal{C}\), where \(\mathcal{C} \subset \mathbb{R}^{d \times d}\) and \(q_{\lambda} \in \mathcal{Q}\), satisfies
    {%
    \setlength{\belowdisplayskip}{0ex} \setlength{\belowdisplayshortskip}{0ex}
    \setlength{\abovedisplayskip}{0ex} \setlength{\abovedisplayshortskip}{0ex}
    \[
        \mathsfit{z} \sim q_{\lambda}  \qquad\Leftrightarrow\qquad  \mathsfit{z} \stackrel{\mathrm{d}}{=} \mathcal{T}_{\lambda}\left(\mathsfit{u}\right) \; ,
        \nonumber
    \]
    }%
    where 
    {%
    \setlength{\belowdisplayskip}{0ex} \setlength{\belowdisplayshortskip}{0ex}
    \setlength{\abovedisplayskip}{0ex} \setlength{\abovedisplayshortskip}{0ex}
    \[
        \mathcal{T}_{\lambda}\left(\mathsfit{u}\right) &\triangleq C \mathsfit{u} + m, 
        \qquad
        \mathsfit{u} \triangleq \lt(\mathsfit{u}_1, \ldots, \mathsfit{u}_d \rt), 
        \qquad 
        \mathsfit{u}_i \stackrel{\mathrm{i.i.d.}}{\sim} \varphi \; ,
        \nonumber
    \]
    }%
    and \(\stackrel{\mathrm{d}}{=}\) is equivalence in distribution. 
    Then $\mathcal{T}_{\lambda}$ is referred to as the ``reparametrization function,'' while $m$ and $C$ are referred to as the location and scale parameters, respectively.
\end{definition}
In addition, we impose mild regularity assumptions on the moments of the base distribution:
\begin{assumption}\label{assumption:noise}
    $\varphi$ satisfies the following: 
    \begin{enumerate*}[label=(\roman*)]
        \item It is standardized such that \(\mathbb{E}\mathsfit{u}_i = 0\) and \(\mathbb{E}\mathsfit{u}_i^2 = 1\),
        \item symmetric such that \(\mathbb{E}\mathsfit{u}_i^3 = 0\), and 
        \item its kurtosis is finite such that \(\mathbb{E} \mathsfit{u}_i^4 = r_4 < \infty\).
    \end{enumerate*}
\end{assumption}

The location-scale family with \cref{assumption:noise} encompasses many variational families used in practice, such as Gaussians, Student-$t$ with a high-enough degrees of freedom $\nu$, Laplace, and so on, and enables the use of the reparametrization gradient.

While the choice of $\varphi$ gives control over the tail behavior of the family, the choice of the structure of the scale matrix $C$ gives control over how much correlation between coordinates of $\ell$ the variational approximation can represent.
This ability to represent correlations is often referred to as the ``expressiveness'' of a variational family, where the most expressive choice is the following:

\begin{definition}[Full-Rank Location-Scale Family]\label{def:fullrank}
    We say \(\mathcal{Q}\) is a full-rank location-scale family if it satisfies \cref{def:locscale} and, for any $C \in \mathcal{C}$, $C$ is invertible and the squared $C$s, \(CC^{\top}\), span the whole space of dense \(\mathbb{R}^{d \times d}\) positive definite matrices as \(\{ CC^{\top} \mid C \in \mathcal{C} \} = \mathcal{S}_{\succ 0}^d\).
\end{definition}

Typically, full-rank location-scale families are formed by setting \(\mathcal{C}\) to be the set of invertible triangular matrices (the ``Cholesky factor parametrization''; \citealp{titsias_doubly_2014,kucukelbir_automatic_2017}) or the set of symmetric square roots~\citep{domke_provable_2020,domke_provable_2023}.
Adding further restrictions on \(\mathcal{C}\) forms various subsets of the broader location-scale family.
In this work, we focus on the case where $C \in \mathcal{C}$ is restricted to be diagonal such that $\mathcal{C} \subset \mathbb{D}^d$, which is known as the mean-field approximation~\citep{peterson_explorations_1989,hinton_keeping_1993}:

\begin{definition}[Mean-Field Location-Scale Family]\label{def:meanfield}
    We say \(\mathcal{Q}\) is a mean-field location-scale family if it satisfies \cref{def:locscale} and all \(C \in \mathcal{C}\) are diagonal such that
    \(
         \mathcal{C} \subset \mathbb{D}^{d} \, .
    \)
\end{definition}

\vspace{-1ex}
\subsection{Algorithm Setup}
\vspace{-1ex}

Recall that BBVI is essentially SGD in the space of parameters of the variational distribution.
Therefore, we have to define the space of parameters.
For this, we use the ``linear'' parametrization:
{%
\setlength{\belowdisplayskip}{1ex} \setlength{\belowdisplayshortskip}{1ex}
\setlength{\abovedisplayskip}{1ex} \setlength{\abovedisplayshortskip}{1ex}
\[
    \Lambda = \left\{ \lambda = (m, C) \mid m \in \mathbb{R}^d, C \in \mathbb{D}^d_{\succ 0} \right\}
    \subset \mathbb{R}^p
     \; .
    \label{eq:linear_parametrization}
\]
}%
Under this parametrization, the desirable properties of $\ell$ easily transfer to $f$. 
For instance, if $\ell$ is $\mu$-strongly convex and $L$-smooth, $f$ is also $\mu$-strongly convex and $L$-smooth~\citep{domke_provable_2020}.
This contrasts with ``non-linear parametrizations'' commonly used in practice, such as making the diagonal positive by $C_{ii} = \exp\lt(\lambda_{C_{ii}}\rt)$.
Such practice rules
out transfer of strong convexity and smoothness~\citep{kim_convergence_2023} unless constraints such as $C_{ii} \geq \delta$ for some $\delta > 0$ are enforced~\citep{hotti_benefits_2024}.
(Though they can sometimes be beneficial by reducing gradient variance;~\citealp{kim_practical_2023,hotti_benefits_2024}.)
The flip side of using the linear parametrization is that we must now enforce the constraint $C \succ 0$.
Furthermore, $h$ then becomes non-smooth with respect to $C$:
{%
\setlength{\belowdisplayskip}{1ex} \setlength{\belowdisplayshortskip}{1ex}
\setlength{\abovedisplayskip}{1ex} \setlength{\abovedisplayshortskip}{1ex}
\[
    h\lt(\lambda\rt) 
    \quad=\quad 
    - \mathbb{H}\lt(q_{\lambda}\rt) 
    \quad=\quad 
    -\log \abs{\det C} - d \, \mathbb{H}\lt(\varphi\rt) 
    \quad=\quad 
    - {\textstyle\sum_{i=1}^d} \log \abs{ C_{ii} } - d \, \mathbb{H}\lt(\varphi\rt) 
    \; .
    \label{eq:entropy}
\]
}%
This corresponds to log-barrier functions~\citep[\S 6.7.5]{parikh_proximal_2014}, which are non-smooth.
Thus, the optimization algorithm must somehow deal with these difficulties~\citep{domke_provable_2020}.

In this work, we will rely on the proximal variant of stochastic gradient descent (SGD; \citealp{robbins_stochastic_1951,bottou_online_1999,bottou_optimization_2018,shalev-shwartz_pegasos_2011,nemirovski_robust_2009}), often referred to as stochastic proximal gradient descent (SPGD; \citealp{nemirovski_robust_2009}).
Proximal methods are a family of methods that rely on proximal operators~\citep{parikh_proximal_2014}, which are well defined as long as the following hold:
\begin{assumption}\label{assumption:hregular}
    $h : \Lambda \to \mathbb{R} \cup \{+\infty\}$ is convex, bounded below, and lower semi-continuous.
    \vspace{-1ex}
\end{assumption}
The non-smoothness of $h$ and the domain constraint are handled by the proximal operator
{%
\setlength{\belowdisplayskip}{1ex} \setlength{\belowdisplayshortskip}{1ex}
\setlength{\abovedisplayskip}{1ex} \setlength{\abovedisplayshortskip}{1ex}
\[
    \mathrm{prox}_{\gamma h}\lt(\lambda\rt)
    &\triangleq
    \argmin_{\lambda^{\prime} \in \Lambda} \Big\{ h\lt(\lambda^{\prime}\rt) +
    (1/\gamma) \norm{\lambda - \lambda^{\prime}}_2^2
    \Big\} \; ,
    \nonumber
\]
}%
while the intractability of $f$ is handled through stochastic estimates of $\nabla f$ (\cref{def:reparam_gradient}).
For a step size schedule ${(\gamma_t)}_{t \geq 0}$, $\widehat{\nabla f}$, an unbiased estimator of $\nabla f\lt(\lambda_t\rt) = \mathbb{E} \widehat{\nabla f}\lt(\lambda_t; \mathsfit{u}\rt)$, and a sequence of i.i.d. noise ${(\mathsfit{u}_t)}_{t \geq 0}$, for each $t \geq 0$, SPGD iterates
{%
\setlength{\belowdisplayskip}{1ex} \setlength{\belowdisplayshortskip}{1ex}
\setlength{\abovedisplayskip}{0ex} \setlength{\abovedisplayshortskip}{0ex}
\[
    \lambda_{t+1} = \mathrm{prox}_{\gamma_t h}\big( \lambda_t - \gamma_t \widehat{\nabla f}\lt(\lambda; \mathsfit{u}_t\rt) \big) \; .
    \nonumber
\]
}%
In the case of BBVI with a mean-field location-scale family, the proximal operator of \cref{eq:entropy} is identical to that of log-barrier functions~\citep[\S 6.7.5]{parikh_proximal_2014}:
{%
\setlength{\belowdisplayskip}{1ex} \setlength{\belowdisplayshortskip}{1ex}
\setlength{\abovedisplayskip}{1ex} \setlength{\abovedisplayshortskip}{1ex}
\[
    \mathrm{prox}_{\gamma h}\lt(\lambda = (m, C)\rt)
    =
    (m, C'), \quad \text{where} \quad C_{ii}' = (1/2) \Big( C_{ii} + \sqrt{C_{ii}^2 + 4 \gamma} \Big) \;  .
    \nonumber
\]
}%
Instead of using SPGD, one can also use projected SGD, where $C$ is projected to a subset where $F$ is smooth~\citep{domke_provable_2020} and use the ``closed-form entropy'' gradient $\widehat{\nabla F} \triangleq \widehat{\nabla f} + \nabla h$~\citep{titsias_doubly_2014,kucukelbir_automatic_2017}.
However, the resulting theoretical guarantees are indistinguishable~\citep{domke_provable_2023}, and the need for setting a closed domain of $C$ is inconvenient.
Therefore, we only consider SPGD.
But our results can easily be applied to projected SGD.

For $\widehat{\nabla f}$, we will use the classic \textit{reparametrization gradient}~\citep{ho_perturbation_1983,rubinstein_sensitivity_1992}:

\begin{definition}[Reparametrization Gradient]\label{def:reparam_gradient}
    For a differentiable function \(\ell : \mathbb{R}^{d} \to \mathbb{R}\),
{%
\setlength{\belowdisplayskip}{1ex} \setlength{\belowdisplayshortskip}{1ex}
\setlength{\abovedisplayskip}{1ex} \setlength{\abovedisplayshortskip}{1ex}
    \[
        \widehat{\nabla f}\lt(\lambda; \mathsfit{u}\rt) 
        \triangleq 
        \nabla_{\lambda} \ell\left( \mathcal{T}_{\lambda} \left(\mathsfit{u}\right) \right) 
        =
        \frac{ \partial \mathcal{T}_{\lambda}\lt(\mathsfit{u
        }\rt) }{ \partial \lambda } 
        \nabla \ell\left( \mathcal{T}_{\lambda} \left(\mathsfit{u}\right) \right) 
        \; , \quad\text{where}\quad \mathsfit{u} \sim \varphi \; ,
        \nonumber
    \]
}%
    is an unbiased estimator of \(\nabla f\) such that \( \nabla_{\lambda} \mathbb{E}_{\mathsfit{z} \sim q_{\lambda}} \ell\lt(\mathsfit{z}\rt) = \nabla f\lt(\lambda\rt) \).
\end{definition}
The reparametrization gradient, also known as the push-in gradient or pathwise gradient, was introduced to VI by \citet{titsias_doubly_2014,rezende_stochastic_2014,kingma_autoencoding_2014}.
(See also the reviews by \citealt{mohamed_monte_2020,glasserman_gradient_1991,pflug_optimization_1996}.)
It is empirically observed to outperform alternatives~\citep{kucukelbir_automatic_2017,mohamed_monte_2020} such as the score gradient~\citep{williams_simple_1992,glynn_likelihood_1990} and \textit{de facto} standard whenever $\ell$ is differentiable.
(Though theoretical evidence of this superiority is limited to the quadratic setting;~\citealp{xu_variance_2019}.)

\vspace{-1ex}
\subsection{General Analysis of Stochastic Proximal Gradient Descent}
\vspace{-1ex}
Analyzing the convergence of BBVI corresponds to analyzing the convergence of SPGD (or more broadly, of SGD) for the class of problems that corresponds to BBVI.
For this, we will first discuss sufficient conditions for the convergence of SPGD and the resulting consequences.

\begin{assumption}[Lipschitz Gradients in Expectation]\label{assumption:expected_smoothness}
There exists some constant $\mathcal{L} \in [0, \infty)$ such that, for all $\lambda, \lambda' \in \Lambda$, 
{%
\setlength{\belowdisplayskip}{0ex} \setlength{\belowdisplayshortskip}{0ex}
\setlength{\abovedisplayskip}{-1ex} \setlength{\abovedisplayshortskip}{-1ex}
\[
    \mathbb{E}\norm{ \widehat{\nabla f}\lt(\lambda; \mathsfit{u}\rt) - \widehat{\nabla f}\lt(\lambda'; \mathsfit{u}\rt) }_2^2 &\leq \mathcal{L}^2 \norm{ \lambda - \lambda' }_2^2 \; .
    \nonumber
\]
}%
\end{assumption}

\begin{assumption}[Bounded Variance]\label{assumption:bounded_variance}
There exists some constant $\sigma \in [0, \infty)$ such that, for all $\lambda_* \in \argmin_{\lambda \in \Lambda} F\lt(\lambda\rt)$, 
{%
\setlength{\belowdisplayskip}{0ex} \setlength{\belowdisplayshortskip}{0ex}
\setlength{\abovedisplayskip}{-1ex} \setlength{\abovedisplayshortskip}{-1ex}
\[
    \mathbb{E}\norm{ \widehat{\nabla f}\lt(\lambda_*; \mathsfit{u}\rt) }_2^2 \leq \sigma^2 \; .
    \nonumber
\]
}%
\end{assumption}
\vspace{-1ex}

Both assumptions were initially used by \citet[Assumptions H2 and H4]{bach_nonasymptotic_2011} to analyze the convergence of SGD.
Here,~\cref{assumption:expected_smoothness} serves as an analog of $L$-smoothness, and thus determines the largest stepsize we can use.
The strategy of combining~\cref{assumption:expected_smoothness,assumption:bounded_variance} is referred to as ``variance transfer''~\citep[\S 4.3.3]{garrigos_handbook_2023}.
Previously, for analyzing BBVI, a slightly different assumption called quadratically-bounded variance (QV)---which assumes the existence of $\alpha, \beta \in [0, +\infty)$ such that, for all $\lambda \in \Lambda$, $\mathbb{E}\norm{\widehat{\nabla f}\lt(\lambda; \mathsfit{u}\rt)}_2^2 \leq \alpha \norm{\lambda - \lambda_*}_2^2 + \beta$ holds---has been commonly used~\citep{domke_provable_2019,domke_provable_2023,kim_linear_2024}.
While similar, our assumptions result in a constant-factor improvement in the resulting bounds.

For the analysis, we will use a two-stage step size schedule~\citep[Theorem 3.2]{gower_sgd_2019}:
{%
\setlength{\belowdisplayskip}{0.5ex} \setlength{\belowdisplayshortskip}{0.5ex}
\setlength{\abovedisplayskip}{0.5ex} \setlength{\abovedisplayshortskip}{0.5ex}
    \[
        \gamma_t = \begin{cases} 
            \gamma_0 & \text{if $t \leq t_*$} \\
            \frac{1}{\mu} \frac{2 t + 1}{ {\lt(t + 1\rt)}^2 } & \text{if $t \geq t_* + 1$} \\
        \end{cases} \; ,
        \quad\text{where}\quad
        0 < \gamma_0 \leq \frac{\mu}{2 \mathcal{L}^2} \; 
        \label{eq:stepsize_schedule}
    \]
}%
This operates by first maintaining a fixed step size $\gamma_0$ until some switching time $t_* \in \{0, \ldots, T\}$, and then switches to the $1/t$ schedule of~\citet{lacoste-julien_simpler_2012}.

Under \cref{assumption:expected_smoothness,assumption:bounded_variance}, we can now provide a complexity guarantee for solving \cref{eq:objective} via SPGD.
Since BBVI consists of a subset of \cref{eq:objective}, establishing  \cref{assumption:expected_smoothness,assumption:bounded_variance} and invoking the following result will constitute our complexity guarantee for BBVI.

\begin{theoremEnd}[%
    restate,
    category=spgdcomplexity,
    text proof={},
    text link={\textit{Proof}. See the \hyperref[proof:prAtEnd\pratendcountercurrent]{\textit{full proof}} in~\cref{section:proof_spgd_complexity}, p.~\pageref{proof:prAtEnd\pratendcountercurrent}. \qed},
    text proof={Proof.}
]{proposition}\label{thm:spgd_complexity}
    Suppose $f$ is $\mu$-strongly convex, $h$ satisfies \cref{assumption:hregular}, and $\widehat{\nabla f}$ satisfies \cref{assumption:expected_smoothness,assumption:bounded_variance}.
    Then, for the global optimum $\lambda_* = \argmin_{\lambda \in \Lambda} F\lt(\lambda\rt)$ and $\Delta \triangleq \norm{\lambda_{0} - \lambda_*}_2$, there exists some $t_*$ and $\gamma_0$ such that SPGD with the step size schedule in \cref{eq:stepsize_schedule} guarantees
    \[
        T 
        \geq 
        \mathrm{O}\lt\{
        \frac{\sigma^2}{\mu^2}
        \frac{1}{\epsilon}
        + 
        \frac{\sigma \mathcal{L}}{\mu^2}
        \log\lt( \frac{\mathcal{L}^2}{\sigma^2} \Delta^2 \rt) 
        \frac{1}{\sqrt{\epsilon}}
        +
        \frac{\mathcal{L}^2}{\mu^2} \log\lt( \Delta^2 \frac{1}{\epsilon} \rt) + 1
        \rt\}
        \quad\Rightarrow\quad
        \mathbb{E}\norm{\lambda_T - \lambda_*}_2^2 \leq \epsilon
        \nonumber \; .
    \]
\end{theoremEnd}
\vspace{-1ex}
\begin{proofEnd}
    Since $f$ is strongly convex and $h$ is convex, $F$ is also strongly convex.
    This implies that, by the property of strictly convex functions, $F$ has a unique global optimum, which we denote as $\lambda_*$.
    
    From \cref{thm:spgd_convergence_bound}, we have
    \[
        \mathbb{E} \norm{\lambda_{T} - \lambda_*}_2^2
        &\leq
        \norm{\lambda_{0} - \lambda_*}_2^2 \,
        \rho^{t_*}
        \lt( \frac{{t_*}^2}{T^2} \rt)
        +
        2 \gamma_0
        \frac{\sigma^2}{\mu}
        \frac{{t_*}^2}{T^2}
        +
        \frac{8 \sigma^2}{\mu^2}
        \frac{T - t_*}{T^2}
        \; ,
        \label{eq:thm_spgd_complexity_eq1}
    \]
    where $\rho = 1 - \gamma_0 \mu$.
    We will optimize the upper bound over the parameters $t_*$, $\gamma_0$, and $T$ so that we can ensure the $\epsilon$-accuracy guarantee $\mathbb{E} \norm{\lambda_{T} - \lambda_*}_2^2 \leq \epsilon$.

    Consider the choice
    \[
        t_* = \min\lt\{ \lt\lceil \frac{1}{\log 1/\rho} \log\lt( \frac{\mu}{2 \gamma_0 \sigma^2} \norm{\lambda_0 - \lambda_*}_2^2 \rt)  \rt\rceil , T \rt\}
        \quad\text{and}\quad
        \gamma_0 = \frac{\mu}{2 \mathcal{L}^2} \; .
        \label{eq:schedule_parameters}
    \]
    Using this, we will separately analyze the total error in \cref{eq:thm_spgd_complexity_eq1} for the cases of $t_* = T$ and $t_* \neq T$.
    
    The case $t_* = T$ happens only if
    \[
        \lt\lceil \frac{1}{\log 1/\rho} \log\lt( \frac{\mu}{2 \gamma_0 \sigma^2} \norm{\lambda_0 - \lambda_*}_2^2 \rt)  \rt\rceil \geq T
        \nonumber
    \]
    is true.
    Then an immediate implication is that 
    \[
        &
        &\frac{1}{\log 1/\rho} \log\lt( \frac{\mu}{2 \gamma_0 \sigma^2} \norm{\lambda_0 - \lambda_*}_2^2 \rt) + 1
        &\geq T
        \nonumber
        \\
        &\Leftrightarrow
        &\log\lt( \frac{\mu}{2 \gamma_0 \sigma^2} \norm{\lambda_0 - \lambda_*}_2^2 \rt)
        &\geq \lt(\log 1/\rho\rt)  \lt( T - 1 \rt)
        \nonumber
        \\
        &\Leftrightarrow
        &\frac{\mu}{2 \gamma_0 \sigma^2} \norm{\lambda_0 - \lambda_*}_2^2
        &\geq \rho^{- (T - 1)}
        \nonumber
        \\
        &\Leftrightarrow
        &\norm{\lambda_0 - \lambda_*}_2^2 \rho^{T-1}
        &\geq 
        2 \gamma_0 
        \frac{\sigma^2}{\mu}
        \; .
        \label{eq:thm_spgd_complexity_case1_condition}
    \]
    Considering this fact, \cref{eq:thm_spgd_complexity_eq1} becomes
    \[
        \mathbb{E} \norm{\lambda_{T} - \lambda_*}_2^2
        &\leq
        \norm{\lambda_{0} - \lambda_*}_2^2
        \, \rho^{T}
        +
        2 \gamma_0
        \frac{\sigma^2}{\mu}
        &&\text{($t_* = T$)}
        \nonumber
        \\
        &\leq
        \norm{\lambda_{0} - \lambda_*}_2^2
        \, \rho^{T}
        +
        \norm{\lambda_0 - \lambda_*}_2^2 \rho^{T-1}
        &&\text{(\cref{eq:thm_spgd_complexity_case1_condition})}
        \nonumber
        \\
        &\leq
        2 \norm{\lambda_{0} - \lambda_*}_2^2 \rho^{T-1} \; .
        &&\text{($\rho < 1$)}
        \nonumber
    \]
    The number of required steps for achieving the $\epsilon$-accuracy requirement follows from
    \[
        &
        &
        2 \norm{\lambda_{0} - \lambda_*}_2^2 \rho^{T-1} &\leq \epsilon
        \nonumber
        \\
        &\Leftrightarrow
        &  2 \norm{\lambda_{0} - \lambda_*}_2^2 \frac{1}{\epsilon}  &\leq {(1/\rho)}^{T-1}
        \nonumber
        \\
        &\Leftrightarrow
        &  \log\lt( 2 \norm{\lambda_{0} - \lambda_*}_2^2 \frac{1}{\epsilon} \rt)  &\leq \lt(T - 1\rt) \log{(1/\rho)}
        \nonumber
        \\
        &\Leftrightarrow
        & \frac{1}{\log 1/\rho} \log\lt( 2 \norm{\lambda_{0} - \lambda_*}_2^2\frac{1}{\epsilon} \rt)  &\leq T - 1
        \nonumber
        \\
        &\Leftarrow
        &\frac{1}{1 - \rho} \log\lt( 2 \norm{\lambda_{0} - \lambda_*}_2^2 \frac{1}{\epsilon} \rt)  &\leq T - 1
        && \text{($\log (1/\rho) \geq 1 - \rho$)}
        \nonumber
        \\
        &\Leftrightarrow
        &\frac{2 \mathcal{L}^2}{\mu^2} \log\lt( 2 \norm{\lambda_{0} - \lambda_*}_2^2 \frac{1}{\epsilon} \rt) + 1 &\leq T
        &&\text{($1 - \rho = \gamma_0 \mu = \mu^2/(2 \mathcal{L}^2)$)}
        \label{eq:thm_spgd_complexity_case1}
    \]
    
    For the case $t_* \neq T$, 
    \[
        t_*
        &= 
        \lt\lceil \frac{1}{\log 1/\rho} \log\lt( \frac{\mu}{2 \gamma_0 \sigma^2} \norm{\lambda_0 - \lambda_*}_2^2 \rt) \rt\rceil
        \label{eq:thm_spgd_complexity_tneqtstar_constant}
        \\
        &\geq
        \frac{1}{\log 1/\rho} \log\lt( \frac{\mu}{2 \gamma_0 \sigma^2} \norm{\lambda_0 - \lambda_*}_2^2 \rt)
        \nonumber
        \\
        &=
        \frac{1}{\log \rho} \log\lt( \frac{2 \gamma_0 \sigma^2}{\mu} \frac{1}{\norm{\lambda_0 - \lambda_*}_2^2} \rt) \, .
        \nonumber
    \]
    This implies 
    \[
        \rho^{t_*}
        \leq
        \frac{2 \gamma_0 \sigma^2}{\mu} \frac{1}{\norm{\lambda_0 - \lambda_*}_2^2} \; .
        \nonumber
    \]
    Substituting for this in \cref{eq:thm_spgd_complexity_eq1},
    \[
        \mathbb{E} \norm{\lambda_{T} - \lambda_*}_2^2
        &\leq
        2 \gamma_0 
        \frac{\sigma^2}{\mu} 
        \frac{{t_*}^2}{T^2}
        +
        2 \gamma_0
        \frac{\sigma^2}{\mu}
        \frac{{t_*}^2}{T^2}
        +
        8
        \frac{\sigma^2}{\mu^2}
        \frac{T - t_*}{T^2}
        \nonumber
        \\
        &=
        4 \gamma_0 
        \frac{\sigma^2}{\mu} 
        \frac{{t_*}^2}{T^2}
        +
        8
        \frac{\sigma^2}{\mu^2} 
        \frac{T - t_*}{T^2}
        \nonumber
        \\
        &\leq
        4 \gamma_0 
        \frac{\sigma^2}{\mu} 
        \frac{{t_*}^2}{T^2}
        +
        8
        \frac{\sigma^2}{\mu^2} 
        \frac{1}{T} 
        \nonumber
        \\
        &=
        a
        \frac{1}{T^2}
        +
        b
        \frac{1}{T} 
        \nonumber
        \; ,
        \nonumber
    \]
    which is a quadratic function of $1/T$ with the coefficients
    \[
        a \triangleq
        4 \gamma_0 
        \frac{\sigma^2}{\mu} 
        {t_*}^2
        \qquad\text{and}\qquad
        b 
        \triangleq
        8 \frac{\sigma^2}{\mu^2} \; .
        \nonumber
    \]
    
    Achieving the $\epsilon$-accuracy guarantee is equivalent to finding the largest $x = 1/T$ satisfying the inequalities $x > 0$ and
    \[
        a x^2 + bx \leq \epsilon \; .
        \nonumber
    \]
    By the quadratic formula, this is equivalent to finding the largest $x$ satisfying
    \[
        0 \leq x \leq \frac{- b + \sqrt{b^2 + 4 a \epsilon} }{2 a} \; .
        \nonumber
    \]
    Therefore, picking any
    \[
        T \geq \frac{2 a}{- b + \sqrt{b^2 + 4 a \epsilon}}
        \nonumber
    \]
    is sufficient to obtain an $\epsilon$-accurate solution.
    To make the bound more interpretable, after defining $\alpha = 4 a \epsilon$ and $\beta = b$, we can use the inequality~\citep{symbol-1_answer_2022}
    \[
        \frac{\alpha}{2 \sqrt{\beta^2 + \alpha }}
        \leq
        -\beta + \sqrt{\beta^2 + \alpha} \; .
        \nonumber
    \]
    Then
    \[
        \frac{2 a}{- b + \sqrt{b^2 + 4 a \epsilon}}
        \quad&\leq\quad
        2 a \,
        \frac{2 \sqrt{b^2 + 4 a \epsilon } }{ 4 a \epsilon }
        \quad=\quad
        \sqrt{b^2 + 4 a \epsilon } \,
        \frac{1}{\epsilon}
        \quad\leq\quad
        b
        \frac{1}{\epsilon}
        + 2 \sqrt{a} 
        \frac{1}{\sqrt{\epsilon}} \; ,
        \nonumber
    \]
    where we used the inequality $\sqrt{a + b} \leq \sqrt{a} + \sqrt{b}$.
    Thus, we have
    \[
        T  
        \geq
        b
        \frac{1}{\epsilon}
        + 2 \sqrt{a} 
        \frac{1}{\sqrt{\epsilon}}
        \qquad\Rightarrow\qquad
        \mathbb{E}\norm{\lambda_T - \lambda_*}_2^2  \leq \epsilon \; .
        \nonumber
    \]
    Substituting $t_*$ and $\gamma_0$ with the expressions in \cref{eq:schedule_parameters},
    \[
        & &
        T 
        &\geq
        8 \frac{\sigma^2}{\mu^2}
        \frac{1}{\epsilon}
        + 
        2 
        \sqrt{
        4 \gamma_0 
        \frac{\sigma^2}{\mu} 
        {t_*}^2
        } 
        \frac{1}{\sqrt{\epsilon}}
        \quad=\quad
        8 \frac{\sigma^2}{\mu^2}
        \frac{1}{\epsilon}
        + 
        4
        \sqrt{\gamma_0}
        \frac{\sigma}{\mu^{1/2}} 
        {t_*}
        \frac{1}{\sqrt{\epsilon}}
        \nonumber
        \\
        &\Leftarrow&
        T &\geq
        8 \frac{\sigma^2}{\mu^2}
        \frac{1}{\epsilon}
        + 
        4
        \sqrt{\gamma_0}
        \frac{\sigma}{\mu^{1/2}} 
        \lt(
          \frac{1}{\log 1/\rho} \log\lt( \frac{\mu}{2 \gamma_0 \sigma^2} \norm{\lambda_0 - \lambda_*}_2^2 \rt) + 1
        \rt)
        \frac{1}{\sqrt{\epsilon}} 
        &&\text{(\cref{eq:thm_spgd_complexity_tneqtstar_constant})}
        \nonumber
        \\
        &\Leftarrow&
        T &\geq
        8 \frac{\sigma^2}{\mu^2}
        \frac{1}{\epsilon}
        + 
        4
        \sqrt{ \frac{\mu}{2 \mathcal{L}^2} }
        \frac{\sigma}{\mu^{1/2}} 
        \lt(
          \frac{2 \mathcal{L}^2 }{\mu^2}
          \log\lt( \frac{\mu}{2 \sigma^2} \frac{2 \mathcal{L}^2}{\mu} \norm{\lambda_0 - \lambda_*}_2^2 \rt) + 1
        \rt)
        \frac{1}{\sqrt{\epsilon}} 
        \nonumber
        &&\text{($\log 1/\rho \geq 1 - \rho = \mu^2/(2 \mathcal{L}^2)$)}
        \\
        & &
        &=
        8 \frac{\sigma^2}{\mu^2}
        \frac{1}{\epsilon}
        + 
        2 \sqrt{2}
        \frac{\sigma}{\mathcal{L}}
        \lt(
          \frac{2 \mathcal{L}^2 }{\mu^2}
          \log\lt( \frac{\mathcal{L}^2}{\sigma^2} \norm{\lambda_0 - \lambda_*}_2^2 \rt) + 1
        \rt)
        \frac{1}{\sqrt{\epsilon}} 
        \nonumber
    \]
    Now, \cref{thm:smoothness_implication} asserts that $\mathcal{L} \geq \mu$.
    This allows us to further simplify the term
    \[
        2 \sqrt{2}
        \frac{\sigma}{\mathcal{L}}
        \lt(
          \frac{2 \mathcal{L}^2 }{\mu^2}
          \log\lt( \frac{\mathcal{L}^2}{\sigma^2} \norm{\lambda_0 - \lambda_*}_2^2 \rt) + 1
        \rt)
        &\leq
        2 \sqrt{2}
        \frac{\sigma}{\mathcal{L}}
        \lt(
          \frac{2 \mathcal{L}^2 }{\mu^2}
          \log\lt( \frac{\mathcal{L}^2}{\sigma^2} \norm{\lambda_0 - \lambda_*}_2^2 \rt) + \frac{2 \mathcal{L}^2}{\mu^2} \log 3
        \rt)
        \nonumber
        \\
        &=
        2 \sqrt{2}
        \frac{\sigma}{\mathcal{L}}
        \frac{2 \mathcal{L}^2 }{\mu^2}
        \log\lt( 3 \frac{\mathcal{L}^2}{\sigma^2} \norm{\lambda_0 - \lambda_*}_2^2 \rt) 
        \nonumber
        \\
        &=
        4 \sqrt{2}
        \frac{\sigma \mathcal{L}}{\mu^{2}}
        \log\lt( 3 \frac{\mathcal{L}^2}{\sigma^2} \norm{\lambda_0 - \lambda_*}_2^2 \rt)  \; .
        \nonumber
    \]
    Considering this, the sufficient condition for $\mathbb{E}\norm{\lambda_T - \lambda_*}_2^2 \leq \epsilon$ is now
    \[
        & &
        T
        &\geq
        \frac{8 \sigma^2}{\mu^2}
        \frac{1}{\epsilon}
        + 
        4 \sqrt{2}
        \frac{\sigma \mathcal{L}}{\mu^2}
        \log\lt( 3 \frac{\mathcal{L}^2}{\sigma^2} \norm{\lambda_0 - \lambda_*}_2^2 \rt) 
        \frac{1}{\sqrt{\epsilon}} \; .
        \label{eq:thm_spgd_complexity_case2}
    \]
    
    Combining both cases, that is, \cref{eq:thm_spgd_complexity_case1,eq:thm_spgd_complexity_case2}, we have
    \[
        & &
        T 
        &\geq 
        \max\lt(
        \frac{8 \sigma^2}{\mu^2}
        \frac{1}{\epsilon}
        + 
        4 \sqrt{2}
        \frac{\sigma \mathcal{L}}{\mu^2}
        \log\lt( 3 \frac{\mathcal{L}^2}{\sigma^2} \norm{\lambda_0 - \lambda_*}_2^2 \rt) 
        \frac{1}{\sqrt{\epsilon}}
        , \; 
        \frac{2 \mathcal{L}^2}{\mu^2} \log\lt( 2 \norm{\lambda_{0} - \lambda_*}_2^2 \frac{1}{\epsilon} \rt) + 1
        \rt) 
        \; .
        \nonumber
    \]
    This implies the stated result. 
\end{proofEnd}

This result is a slight improvement over past analysis of SPGD with \cref{eq:stepsize_schedule}~\citep[Theorem 7]{domke_provable_2023}.
In particular, the dependence on the initialization $\Delta$ has been improved to be logarithmic instead of polynomial.
Furthermore, it encompasses the case where we have ``interpolation'' ($\sigma^2 = 0$; \citealp{schmidt_fast_2013,vaswani_fast_2019,kim_linear_2024}) automatically resulting in a $\mathrm{O}(\log 1/\epsilon)$ complexity.
The key difference in the analysis is that we choose a different switching time $t_*$ in a way adaptive to $\sigma^2$ and $\Delta$, ensuring that the dependence on both is optimized.

For a non-strongly convex $f$, using the strategy of~\citet[Theorem 8 and 11]{domke_provable_2023} should yield a corresponding $\mathrm{O}\lt(1/\epsilon^2\rt)$ complexity guarantee under the same set of assumptions.
However, this requires fixing the horizon $T$ in advance, and it is currently unknown how to obtain an anytime $\mathrm{O}(1/\sqrt{T})$ convergence bound for SGD under \cref{assumption:expected_smoothness,assumption:bounded_variance} or QV.
If one moves away from the canonical SGD update by incorporating  Halpern iterations~\citep{halpern_fixed_1967}, it is possible to obtain any-time convergence under a QV-like assumption~\citep{alacaoglu_weaker_2025}.

\vspace{-1ex}
\section{Main Results}\label{section:main_results}
\vspace{-1ex}
\subsection{General Result}\label{section:general_result}
\vspace{-1ex}

For our results, we impose an additional assumption that is a generalization of $L$-smoothness under twice differentiability of $\ell$.
\begin{assumption}\label{assumption:almost_constant_hessian}
    \(\ell\) is twice differentiable and, for all \(z \in \mathbb{R}^d\), there exist some matrix \(H \in \mathbb{R}^{d \times d}\) and constant \(\delta \in [0, \infty)\) satisfying
    {%
    \setlength{\belowdisplayskip}{0ex} \setlength{\belowdisplayshortskip}{0ex}
    \setlength{\abovedisplayskip}{1ex} \setlength{\abovedisplayshortskip}{1ex}
    \[
        \norm{H}_2 < \infty
        \quad\text{and}\quad
        \norm{ \nabla^2 \ell\lt(z\rt) - H}_2 \leq \delta  \; .
        \nonumber
    \]
    }
\end{assumption}
Notably, if $\ell$ is twice differentiable, $\mu$-strongly convex, $L$-smooth, it already satisfies \cref{assumption:almost_constant_hessian} with \(H = \frac{L + \mu}{2} \mathrm{I}_d\) and \(\delta = \frac{L - \mu}{2}\).
If $\ell$ is only $L$-smooth, it satisfies it with \(H = \mathrm{0}_{d \times d}\) and \(\delta = L\).
The key advantage of this assumption, however, is that it characterizes Hessians that are not necessarily well-conditioned, but almost constant. 
This crucially affects the dimension dependence.


Given our assumptions on the target function $\ell$, variational family $\mathcal{Q}$, and our choice of gradient estimator, we can guarantee that SPGD applied to a problem structure corresponding to BBVI (\cref{eq:objective}) achieves a given level of accuracy $\epsilon$ after $\mathrm{O}( g\lt(d, H, \delta, \mu, \varphi\rt) \epsilon^{-1} )$ number of iterations:

\begin{theoremEnd}[%
    restate,
    category=bbvicomplexity, 
    text proof={},
    text link={\textit{Proof.} The \hyperref[proof:prAtEnd\pratendcountercurrent]{\textit{full proof}} can be found in~\cref{section:proof_bbvi_complexity}, p.~\pageref{proof:prAtEnd\pratendcountercurrent}. \qed}
]{theorem}\label{thm:bbvi_complexity}
    Suppose the following hold:
    \begin{enumerate}[topsep=0ex,itemsep=0.5ex,partopsep=0ex,parsep=0ex,leftmargin=2em]
        \vspace{-1ex}
        \item $\ell$ is $\mu$-strongly convex and satisfies \cref{assumption:almost_constant_hessian} and $\mu \leq  \sigma_{\mathrm{min}}\lt(H\rt) \leq \sigma_{\mathrm{max}}\lt(H\rt) \leq L$.
        \item $h$ satisfies \cref{assumption:hregular}.
        \item $\mathcal{Q}$ is a mean-field location-scale family, where \cref{assumption:noise} holds.
        \item $\widehat{\nabla f}$ is the reparametrization gradient.
        \vspace{-.5ex}
    \end{enumerate}
    Denote the global optimum $\lambda_* = (m_*, C_*) = \argmin_{\lambda \in \Lambda} F\lt(\lambda\rt)$, the irreducible gradient noise as $\sigma_*^2 \triangleq \norm{m_* - \bar{z}}_2^2 + \norm{C_*}_{\mathrm{F}}^2$, and the stationary point of $\ell$ as $\bar{z} \triangleq \argmin_{z \in \mathbb{R}^d} \ell\lt(z\rt)$.
    Then there exists some $t_*$ and $\gamma_0$ such that SPGD with the step size schedule in \cref{eq:stepsize_schedule} guarantees
    {%
    \setlength{\belowdisplayskip}{0ex} \setlength{\belowdisplayshortskip}{0ex}
    \setlength{\abovedisplayskip}{1ex} \setlength{\abovedisplayshortskip}{1ex}
    \[
        T
        &\geq
        \mathrm{O}\lt\{
        g\lt(d, H, \delta, \mu, \varphi\rt)
        \lt(
        \sigma_*^2 \epsilon^{-1}
        +
        \sigma_* \log\lt(\norm{ \lambda_0 - \lambda_* }^2_2\rt) \epsilon^{-1/2}
        \rt)
        \rt\}
        \quad\Rightarrow\quad
        \mathbb{E}\norm{\lambda_T - \lambda_*}_2^2 \leq \epsilon \; ,
        \nonumber
    \]
    }%
    where 
    {%
    \setlength{\belowdisplayskip}{-1ex} \setlength{\belowdisplayshortskip}{-1ex}
    \setlength{\abovedisplayskip}{0ex} \setlength{\abovedisplayshortskip}{0ex}
    \[
        g\lt(d, H, \delta, \mu, \varphi\rt)
        \triangleq
        2 \, \lt(1 + r_4\rt) \lt( \norm{H}_2^2/\mu^2 \rt)
        +
        4 \lt( \delta^2 / \mu^2 \rt)
        \Big(
        ({1}/{2})
        +
        r_4 
        +
        \mathbb{E}\max_{j = 1, \ldots, d} \mathsfit{u}_j^2
        \Big)
        \; .
        \nonumber
    \]
    }%
\end{theoremEnd}
\vspace{-2ex}
\begin{proofEnd}
    The proof consists of establishing the sufficient conditions of \cref{thm:spgd_complexity} as follows:
    \begin{enumerate}[label=(\roman*),topsep=1ex,itemsep=1ex,partopsep=1ex,parsep=1ex]
        \item $\text{$\ell$ is $\mu$-strongly convex} \quad\Rightarrow\quad \text{$f$ is $\mu$-strongly convex} $.\label{item:ellstrongconvexity_implies_fstrongconvexity}
    
        \item \cref{assumption:almost_constant_hessian} $\quad\Rightarrow\quad$ \cref{assumption:expected_smoothness,assumption:bounded_variance}.
        \label{item:gradient_variance_conditions}
    \end{enumerate}
    Under the linear parametrization, \labelcref{item:ellstrongconvexity_implies_fstrongconvexity} was established by \citet[Thm. 9]{domke_provable_2020}.
    It remains to establish \labelcref{item:gradient_variance_conditions}.
    Therefore, the proof focuses on analyzing the variance of the gradient estimator $\widehat{\nabla f}$.

    Since \cref{assumption:almost_constant_hessian} holds, \cref{thm:gradient_variance_upperbound_meanfield_general} states that, for all $\lambda, \lambda' \in \mathbb{R}^d \times \mathbb{D}^{d}$, the inequality 
    \[
        \mathbb{E}\norm{ \widehat{\nabla f}\lt(\lambda; \mathsfit{u}\rt) - \widehat{\nabla f}\lt(\lambda'; \mathsfit{u}\rt) }_2^2
        \leq
        \Big\{ 
            2 \lt(1 + r_4\rt) \norm{H}_2^2
            +
            4 \delta^2
            \Big(
            \nicefrac{1}{2}
            +
            r_4 
            +
            \mathbb{E}\max_{j = 1, \ldots, d} \mathsfit{u}_j^2
            \Big)
        \Big\}
        \norm{\lambda - \lambda'}_2^2
        \nonumber
    \]
    holds.
    Since $\Lambda \subset \mathbb{R}^d \times \mathbb{D}^{d}$ under the linear parametrization, this implies we satisfy \cref{assumption:expected_smoothness} with 
    \[
        \mathcal{L}^2 
        =
        2 \lt(1 + r_4\rt) \norm{H}_2^2 
        +
        4 \delta^2
        \Big(
        \nicefrac{1}{2}
        +
        r_4 
        +
        \mathbb{E}\max_{j = 1, \ldots, d} \mathsfit{u}_j^2
        \Big)
        \label{eq:expected_smoothness_constant}
        \; .
    \]
    Furthermore, For the specific choice of $\lambda_* = (m_*, C_*) = \argmin_{\lambda \in \Lambda} F\lt(\lambda\rt)$ and $\bar{\lambda} = (\bar{z}, 0_{d \times d})$ (which is not part of $\Lambda$), we have the equality
    \[
        \mathbb{E}
        \norm{ 
            \widehat{\nabla f}\lt(\lambda_*; \mathsfit{u}\rt) - \widehat{\nabla f}\lt(\bar{\lambda}; \mathsfit{u}\rt)
        }_2^2
        =
        \mathbb{E}
        \norm{ 
            \widehat{\nabla f}\lt(\lambda_*; \mathsfit{u}\rt) - \widehat{\nabla f}\lt(\bar{z}; \mathsfit{u}\rt)
        }_2^2
        =
        \mathbb{E}
        \norm{ 
            \widehat{\nabla f}\lt(\lambda_*; \mathsfit{u}\rt)
        }_2^2 
        \; .
        \nonumber
    \]
    This means \cref{thm:gradient_variance_upperbound_meanfield_general} also implies \cref{assumption:bounded_variance} with the constant
    \[
        \sigma^2 
        \quad=\quad 
        \mathcal{L}^2 \norm{\lambda_* - \bar{\lambda}}_2^2
        \quad=\quad  
        \mathcal{L}^2 \lt( \norm{m_* - \bar{z}} + \norm{C_*}_{\mathrm{F}}^2 \rt)
        \quad=\quad  
        \mathcal{L}^2 \sigma_*^2
        \; .
        \label{eq:variance_on_optimum_constant}
    \]

    We are now able to invoke \cref{thm:spgd_complexity}.
    Substituting $\mathcal{L}$ and $\sigma^2$ in \cref{thm:spgd_convergence_precise} with the expressions above, we obtain the condition
    \[
        & &
        T 
        &\geq 
        \max\lt(
        \frac{8 \sigma_*^2 \mathcal{L}^2 }{\mu^2}
        \frac{1}{\epsilon}
        + 
        4 \sqrt{2}
        \frac{\sigma_* \mathcal{L}^2 }{\mu^2}
        \log\lt( \frac{3}{\sigma_*^2} \norm{\lambda_0 - \lambda_*}_2^2 \rt) 
        \frac{1}{\sqrt{\epsilon}}
        , \; 
        \frac{2 \mathcal{L}^2}{\mu^2} \log\lt( 2 \norm{\lambda_{0} - \lambda_*}_2^2 \frac{1}{\epsilon} \rt) + 1
        \rt)  \; .
        \nonumber
    \]
    Using the fact $\mathcal{L} \geq \mu$ from \cref{thm:smoothness_implication}, we finally have
    \[
        T
        &\geq
        \frac{ \mathcal{L}^2 }{\mu^2}
        \max\lt(
        8 \sigma_*^2
        \frac{1}{\epsilon}
        + 
        4 \sqrt{2}
        \sigma_* 
        \log\lt( \frac{3}{\sigma_*^2} \norm{\lambda_0 - \lambda_*}_2^2 \rt) 
        \frac{1}{\sqrt{\epsilon}}
        , \; 
        2 \log\lt( 2 \norm{\lambda_{0} - \lambda_*}_2^2 \frac{1}{\epsilon} \rt) + 1
        \rt) \; .
        \nonumber
    \]
    Finally, substituting for \cref{eq:expected_smoothness_constant} yields our stated result.
\end{proofEnd}

Due to the identity 
$
        \norm{\lambda - \lambda'}_2^2 =  \mathbb{E}_{\mathsfit{u} \sim \varphi^{\otimes d}} \norm{ \mathcal{T}_{\lambda}\lt(\mathsfit{u}\rt) - \mathcal{T}_{\lambda'}\lt(\mathsfit{u}\rt) }_2^2
$
(\cref{thm:reparam_identity}), which is the squared cost of a coupling between $q_{\lambda_T}$ and $q_{\lambda_*}$, our guarantee also translates to a guarantee in Wasserstein-2 distance:
$
    \mathbb{E}\norm{\lambda_T - \lambda_*}_2^2 \leq \epsilon
    \,\Rightarrow\,
    \mathbb{E} {\mathrm{W}_2\lt(q_{\lambda_T}, q_{\lambda_*}\rt)}^2 \leq \epsilon
$.
In the general case where $\delta > 0$, the dimension dependence enters through $\mathbb{E} \max_{j = 1, \ldots, d} \mathsfit{u}_{j}^2 $, which depends on the order-statistics of the base distribution $\varphi$.
In case $\ell$ is a quadratic, corresponding to $\pi$ being a Gaussian target distribution in the BBVI context, there exists some $H$ such that $\nabla^2\ell\lt(z\rt) = H$ for all $z \in \mathbb{R}^d$.
Thus, \cref{assumption:almost_constant_hessian} holds with $\delta = 0$, implying a dimension-independent convergence rate.
We will present additional special cases with more explicit choices of $\varphi$ in the next section.

In case we do not want to assume \cref{assumption:almost_constant_hessian} and only assume that $\ell$ is $\mu$-strongly convex and $L$-smooth instead, we can replace them with the generic choices of $H = \frac{L + \mu}{2} \mathrm{I}_d$ and $\delta = \frac{L - \mu}{2}$, which hold for all $\ell$s that are $\mu$-strongly convex, $L$-smooth, and twice differentiable.
This then makes the role of the condition number $\kappa \triangleq L/\mu$ more explicit.
\begin{corollary}
    Suppose $\ell$ is is twice differentiable, $\mu$-strongly convex, and $L$-smooth.
    Then, denoting the condition number as $\kappa \triangleq L/\mu$, \cref{thm:bbvi_complexity} holds with 
{%
\setlength{\belowdisplayskip}{0ex} \setlength{\belowdisplayshortskip}{0ex}
\setlength{\abovedisplayskip}{0ex} \setlength{\abovedisplayshortskip}{0ex}
    \[
        {\textstyle g\lt(d, \frac{L + \mu}{2} \mathrm{I}_d, \frac{L - \mu}{2}, \mu, \varphi\rt)} = (1/2) (1 + r_4) {(\kappa + 1)}^2 + {(\kappa - 1)}^2 \Big((1/2) + r_4 + \mathbb{E}\max_{j = 1, \ldots, d} \mathsfit{u}_j^2 \Big) \; .
        \nonumber
    \]
}%
\end{corollary}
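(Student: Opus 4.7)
The corollary is a direct specialization of \cref{thm:bbvi_complexity} to the setting in which only $\mu$-strong convexity and $L$-smoothness of $\ell$ are assumed. The plan is to verify that the specific choice $H = \frac{L+\mu}{2}\mathrm{I}_d$ and $\delta = \frac{L-\mu}{2}$ realizes \cref{assumption:almost_constant_hessian} together with the spectral constraint $\mu \leq \sigma_{\mathrm{min}}(H) \leq \sigma_{\mathrm{max}}(H) \leq L$ required by \cref{thm:bbvi_complexity}, and then to substitute these values into the definition of $g$.

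First I would check \cref{assumption:almost_constant_hessian}. Since $\ell$ is twice differentiable, $\mu$-strongly convex, and $L$-smooth, standard convex analysis gives $\mu \mathrm{I}_d \preceq \nabla^2 \ell(z) \preceq L \mathrm{I}_d$ for every $z \in \mathbb{R}^d$. Subtracting the midpoint multiple of the identity yields $-\tfrac{L-\mu}{2}\mathrm{I}_d \preceq \nabla^2 \ell(z) - H \preceq \tfrac{L-\mu}{2}\mathrm{I}_d$, so the spectral norm of $\nabla^2 \ell(z) - H$ is at most $\tfrac{L-\mu}{2} = \delta$, as desired. Moreover $\sigma_{\mathrm{min}}(H) = \sigma_{\mathrm{max}}(H) = \tfrac{L+\mu}{2}$, which clearly lies between $\mu$ and $L$, so the hypotheses of \cref{thm:bbvi_complexity} are met and the theorem can be invoked.

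Finally I would substitute the chosen $H$ and $\delta$ into the formula for $g$. Since $\norm{H}_2^2 / \mu^2 = (L+\mu)^2/(4\mu^2) = (\kappa+1)^2/4$ and $\delta^2/\mu^2 = (L-\mu)^2/(4\mu^2) = (\kappa-1)^2/4$, the definition of $g$ in \cref{thm:bbvi_complexity} becomes $2(1+r_4)\cdot(\kappa+1)^2/4 + 4\cdot(\kappa-1)^2/4\cdot\bigl((1/2) + r_4 + \mathbb{E}\max_{j}\mathsfit{u}_j^2\bigr)$, which simplifies to the stated expression. There is no real obstacle here: the content of the corollary is purely a bookkeeping translation that makes the worst-case dependence on the condition number $\kappa$ explicit when no refined structure of $\nabla^2 \ell$ is assumed, and the midpoint choice of $H$ is simply the one minimizing $\sup_z \norm{\nabla^2 \ell(z) - H}_2$ over multiples of the identity.
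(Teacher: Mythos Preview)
Your proposal is correct and matches the paper's own approach: the paper states just before the corollary that the generic choices $H=\frac{L+\mu}{2}\mathrm{I}_d$ and $\delta=\frac{L-\mu}{2}$ hold for all twice-differentiable $\mu$-strongly convex $L$-smooth $\ell$, and the corollary is obtained by direct substitution into $g$. You have simply spelled out the verification of \cref{assumption:almost_constant_hessian} and the arithmetic that the paper leaves implicit.
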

\vspace{-1ex}
This makes the $\mathrm{O}(\kappa^2)$ condition number dependence explicit, but the downside is that we lose dimension independence in the case of ill-conditioned quadratic $\ell$s.
This fact suggests that dimension dependence is more fundamentally related to how close the Hessian is to a constant rather than how well-conditioned it is.

\vspace{-1ex}
\subsection{Special Cases with Benign Dimension Dependence}\label{section:special_cases}
\vspace{-1ex}

We now present some special cases of \cref{thm:bbvi_complexity}, which has yet to exhibit an explicit dependence on dimensionality.
As mentioned in the previous section, dimension dependence depends on the order statistics of $\varphi$, which is related to the tail behavior of $\varphi$.

\vspace{-0.5ex}
\paragraph{Variational Families with Sub-Gaussian Tails.}
The most commonly used variational family in practice is the Gaussian variational family.
More broadly, for sub-Gaussian variational families, \(\mathsfit{u}_i^2\) is sub-exponential and therefore admits a moment generating function (MGF)~\citep[Theorem 2.6]{wainwright_highdimensional_2019}, which leads to a $\mathrm{O}\lt(\log d\rt)$ explicit dimension dependence.

\begin{theoremEnd}[%
    restate,
    category=specialcasegaussian,
    text link={\textit{Proof.} The \hyperref[proof:prAtEnd\pratendcountercurrent]{\textit{full proof}} can be found in~\cref{section:proof_special_case_gaussian}, p.~\pageref{proof:prAtEnd\pratendcountercurrent}. \qed
    },
    text proof={Proof.}
]{proposition}
\label{thm:special_case_gaussian}
    Suppose there exists some $t > 0$ such that the MGF of $\mathsfit{u}_i^2$ satisfies $M_{\mathsfit{u}_i^2}\lt(t\rt) < \infty$.
    Then
{%
\setlength{\belowdisplayskip}{0ex} \setlength{\belowdisplayshortskip}{0ex}
\setlength{\abovedisplayskip}{0ex} \setlength{\abovedisplayshortskip}{0ex}
    \[
        \mathbb{E} \max_{i=1,\ldots, d}\mathsfit{u}_i^2 \quad\leq\quad \left({1}/{t}\right) \big( \log M_{\mathsfit{u}_i^2}\lt(t\rt) + \log d \big) \; .
        \nonumber
    \]
}%
    For example, if \(\varphi\) is a standard Gaussian, then
{%
\setlength{\belowdisplayskip}{1ex} \setlength{\belowdisplayshortskip}{1ex}
\setlength{\abovedisplayskip}{1ex} \setlength{\abovedisplayshortskip}{1ex}
    \[
        g\lt(d, H, \delta, \mu, \varphi\rt)
        \quad\leq\quad
        8 \lt(\norm{H}_2^2/\mu^2\rt)
        +
        \lt( \delta^2/\mu^2 \rt)
        \lt(
        22
        + 
        16 \log d
        \rt)
        \; .
        \nonumber
    \]
}%
\end{theoremEnd}
\vspace{-1ex}
\begin{proofEnd}
    The first part of the statement is a re-statement of \cref{thm:expected_maximum_mgf}.
    
    For the special case of \(\mathsfit{u}_i \sim \mathcal{N}\lt(0, 1\rt)\), we know that \(\mathsfit{u}_i^2 \sim \chi_1^2\)~\citep[Eq. 29.1]{johnson_continuous_1995}, which is the \(\chi^2\) distribution with 1 degree of freedom.
    The MGF of \(\chi^2_1\) is given as 
    \[
        M_{\mathsfit{u}_i^2}\lt(t\rt) = {\lt(1 - 2 t\rt)}^{-1/2}
        \qquad\text{\citep[Eq. 29.6]{johnson_continuous_1995}}
        \nonumber
    \]
    for \( t \in (0, 1/2)\).
    Then we can invoke \cref{thm:expected_maximum_mgf}, which suggests
    \[
        \mathbb{E} \max_{i = 1, \ldots, d} \mathsfit{u}_i^2
        &\leq 
        \min_{t \in (0, 1/2)} \frac{1}{t} \lt( -\frac{1}{2} \log {\lt(1 - 2 t\rt)} + \log d \rt) \; .
        \nonumber
    \]
    Any fixed choice of \(t \in (0, 1/2)\) is a valid upper bound.
    Picking \(t = \frac{1}{2} \lt(1 - \frac{1}{\mathrm{e}}\rt) \geq \frac{1}{4}\) yields 
    \[
        \mathbb{E} \max_{i = 1, \ldots, d} \mathsfit{u}_i^2
        &\leq 
        4 \lt( \frac{1}{2} + \log d \rt) \; .
        \label{eq:gaussian_expected_maximum_bound}
    \]
    Furthermore, the kurtosis of the standard Gaussian is \(r_4 = 3\)~\citep[Eq. 13.11]{johnson_continuous_1994}.
    Plugging $r_4$ and \cref{eq:gaussian_expected_maximum_bound} into $g$ in \cref{thm:bbvi_complexity} yields the statement.
\end{proofEnd}

\vspace{-0.5ex}
\paragraph{Variational Families with Finite Higher Moments.}
For families with tails heavier than sub-Gaussian, however, $\mathsfit{u}_i^2$ may not have an MGF.
While we then lose the $\mathrm{O}(\log d)$ dependence, we may still obtain a polynomial dependence that can be better than $\mathrm{O}(\sqrt{d})$ obtained in previous works~\citep{kim_practical_2023}.
In particular, the result that will follow states that the highest order of the available moments determines the order of dimension dependence.
For Student-$t$ families, this implies that using a high-enough degree of freedom $\nu$ can make the dimension dependence benign.

\begin{theoremEnd}[%
    restate,
    category=specialcasestudentt,
    text link={\textit{Proof}. See the \hyperref[proof:prAtEnd\pratendcountercurrent]{\textit{full proof}} in~\cref{section:proof_special_case_studentt}, p.~\pageref{proof:prAtEnd\pratendcountercurrent}. \qed},
    text proof={Proof.}
]{proposition}\label{thm:special_case_studentt}
    Suppose, for $k \geq 2$, the $k$th moment of $\mathsfit{u}_i^2$ is finite as $r_{2k} = \mathbb{E}\mathsfit{u}_i^{2 k} < \infty$.
    Then
{%
\setlength{\belowdisplayskip}{0ex} \setlength{\belowdisplayshortskip}{0ex}
\setlength{\abovedisplayskip}{0ex} \setlength{\abovedisplayshortskip}{0ex}
    \[
        \mathbb{E} \max_{i=1, \ldots, d} \mathsfit{u}_i^{2} \quad\leq\quad \sqrt{2} \, d^{1/k} \,  r_{2 k}^{1/k}
        \nonumber
        \; .
    \]
}%
    For example, if $\varphi$ is a Student-$t$ with $\nu > 4$ degrees of freedom and unit variance, then
{%
\setlength{\belowdisplayskip}{0ex} \setlength{\belowdisplayshortskip}{0ex}
\setlength{\abovedisplayskip}{0ex} \setlength{\abovedisplayshortskip}{0ex}
    \[
        g\lt(d, H, \delta, \mu, \varphi\rt)
        \quad\leq\quad
        8 (\norm{H}_2^2/\mu^2)
        +
        (\delta^2/\mu^2)
        \lt(
        16
        +
        \sqrt{2}
        \,
        \nu^3
        d^{\frac{2}{\nu - 2}}
        \rt)
        \; .
        \nonumber
    \]
}%
\end{theoremEnd}
\vspace{-2ex}
\begin{proofEnd}
    The first part of the statement directly follows from \cref{thm:expected_maximum_highestmoment}, where we simplified $\lt(\nicefrac{k}{k - 1}\rt)^{(k-1)/k}$. 
    In particular, for \( k \geq 2\), $\lt(\nicefrac{k}{k - 1}\rt)^{(k-1)/k}$ is monotonically decreasing.
    Since an order \(k \geq 2\) moment exists by the assumption on the degrees of freedom, $\lt(\nicefrac{k}{k - 1}\rt)^{(k-1)/k} \leq \sqrt{2}$.

    Let's turn to the second part of the statement. 
    We will denote a Student-$t$ distribution with $\nu$-degrees of freedom as $t_{\nu}$.
    Since $t_{\nu}$ does not have unit variance~\citep[Eq. 28.7a]{johnson_continuous_1995}, we have to set the sampling process from $\varphi$ to be 
    \[
        \mathsfit{u}_i \sim \varphi
        \qquad\Leftrightarrow\qquad
        \mathsfit{u}_i \stackrel{\mathrm{d}}{=} \frac{\nu - 2}{\nu} \mathsfit{v}_i \; ,
        \quad\text{where}\quad
        \mathsfit{v}_i \stackrel{\text{i.i.d.}}{\sim} t_{\nu} \; .
        \nonumber
    \]
    Now, it is known that $\mathsfit{v}_i^2 \stackrel{\mathrm{d}}{=} \mathsfit{w}_i \sim \mathrm{FDist}\lt(1, \nu_2\rt)$~\citep[\S 28.7]{johnson_continuous_1995}, where $\mathrm{FDist}\lt(\nu_1, \nu_2\rt)$ is Fisher's $F$-distribution with $(\nu_1, \nu_2)$ degrees of freedom.
    The $k$th raw moment of $\mathrm{FDist}\lt(\nu_1, \nu_2\rt)$, denoted as $m_k \triangleq \mathbb{E} \mathsfit{w}_i^k$, exists up to $2 k < \nu_2 = \nu$ and is given as
    \[
        m_k
        = 
        {\lt(\frac{\nu_2}{\nu_1}\rt)}^k 
        \frac{\Gamma\lt(\nu_1/2 + k\rt)}{\Gamma\lt(\nu_1/2\rt)}
        \frac{\Gamma\lt(\nu_2/2 + k\rt)}{\Gamma\lt(\nu_2/2\rt)} \; .
        \qquad\text{\citep[Eq. 27.43]{johnson_continuous_1995}} 
        \nonumber
    \]
    This means that we can invoke \cref{thm:expected_maximum_highestmoment} as
    \[
        \mathbb{E}\max_{i = 1, \ldots, d} \mathsfit{u}_i^2
        \quad=\quad
        {\lt(\frac{\nu - 2}{\nu}\rt)}^{2}
        \mathbb{E}\max_{i = 1, \ldots, d}  \mathsfit{w}_i
        \quad\leq\quad
        \sqrt{2}
        {\lt(\frac{\nu - 2}{\nu}\rt)}^{2}
        d^{1/k}
        m_k^{1/k}
        \; ,
        \nonumber
    \]
    with any \(k < \nu/2\).
    
    For $m_k^{1/k}$, we can use the fact that the gamma function satisfies the recursion $\Gamma\lt(z + 1\rt) = z \Gamma\lt(z\rt) $, which implies $\Gamma\lt(a/2 + k\rt) = \Gamma\lt(a/2\rt) \prod_{i=0}^{k-1} \lt(a / 2 + i\rt) $ for any \(a > 0\).
    Therefore, 
    \[
        {\lt(
        \frac{\Gamma\lt(a/2 + k\rt)}{\Gamma\lt(a/2\rt)} 
        \rt)}^{1/k}
        &=
        {\lt(
        \prod_{i=0}^{k-1} \lt(\frac{a}{2} + i\rt)
        \rt)}^{1/k}
        \nonumber
        \\
        &\leq
        \frac{1}{k}
        \sum^{k-1}_{i=0} \lt(\frac{a}{2} + i\rt)
        &&\text{(AM-GM inequality)}
        \nonumber
        \\
        &=
        \frac{a}{2}
        +
        \frac{1}{k} \frac{k \lt(k - 1\rt)}{2}
        &&\text{(geometric series sum formula)}
        \nonumber
        \\
        &=
        \frac{a + k - 1 }{2} \; .
        \nonumber
    \]
    Applying this bound to \(a = \nu_2 = \nu\) and \(a = \nu_1 = 1\) respectively,
    \[
        m_k^{1/k}
        &= 
        {\lt(
        \nu^k 
        \frac{\Gamma\lt(1/2 + k\rt)}{\Gamma\lt(1/2\rt)}
        \frac{\Gamma\lt(\nu/2 + k\rt)}{\Gamma\lt(\nu/2\rt)} 
        \rt)}^{1/k}
        \nonumber
        \\
        &\leq
        \nu 
        \,
        \frac{k}{2}
        \,
        \frac{\nu + k - 1 }{2}
        \nonumber
        \\
        &<
        \nu 
        \,
        \frac{\nu}{4}
        \,
        \frac{3 \nu}{4}
        && \text{($k < \nu/2$)}
        \nonumber
        \\
        &<
        \frac{\nu^3}{4} \; .
        \nonumber
    \]
    Also, choosing $k = \lceil \nu/2 - 1 \rceil$, we have $d^{1/k} \leq d^{2/\lt(\nu - 2\rt)}$.
    This yields
    \[
        \mathbb{E}\max_{i = 1, \ldots, d} \mathsfit{u}_i^2
        \quad<\quad
        \sqrt{2}
        {\lt(\frac{\nu - 2}{\nu}\rt)}^{2}
        d^{\frac{2}{\nu - 2}}
        \frac{\nu^3}{4}
        \quad<\quad
        \frac{1}{2\sqrt{2}}
        \nu^3
        d^{\frac{2}{\nu - 2}}
        \; .
        \label{eq:expected_maximum_squared_upperbound_studentt}
    \]

    Lastly, the kurtosis of \(\mathsfit{u}_i = (\nu - 2)/\nu \, \mathsfit{v}_i\) follows as~\citep[Eq. 28.5]{johnson_continuous_1995}
    \[
        r_4 
        \quad=\quad 
        {\lt( \frac{\nu - 2}{\nu} \rt)}^4
        \mathbb{E} \mathsfit{w}_i^2
        \quad=\quad
        {\lt(\frac{\nu - 2}{\nu}\rt)}^4
        \frac{3 \nu^2}{ (\nu - 2) (\nu - 4) }
        \quad=\quad
        3 \frac{ {\lt(\nu - 2\rt)}^3 }{ \nu^2 {\lt(\nu - 4\rt)} }
        \quad\leq\quad
        3
        \; .
        \nonumber
    \]
    Plugging the bound in \cref{eq:expected_maximum_squared_upperbound_studentt} and the value of \(r_4\) into $g$ in \cref{thm:bbvi_complexity} yields the statement.
\end{proofEnd}

\vspace{-1ex}
\section{Analysis of Gradient Variance}\label{section:gradient_variance_analysis}
\vspace{-1ex}

\subsection{Overview}

The key technical contribution of this work is analyzing the gradient variance and thus establishing the constants $\mathcal{L}$ (\cref{assumption:expected_smoothness}) and $\sigma^2$ (\cref{assumption:bounded_variance}), which boils down to analyzing 
{%
\setlength{\belowdisplayskip}{0ex} \setlength{\belowdisplayshortskip}{0ex}
\setlength{\abovedisplayskip}{0ex} \setlength{\abovedisplayshortskip}{0ex}
\[
    \mathbb{E} \norm{ \widehat{\nabla f}\lt(\lambda; \mathsfit{u}\rt) - \widehat{\nabla f}\lt(\lambda'; \mathsfit{u}\rt) }_2^2
    &=
    \mathbb{E} \norm*{ 
        \frac{ \partial \mathcal{T}_{\lambda}\lt(\mathsfit{u}\rt) }{\partial \lambda} 
        \nabla
        \ell\lt(\mathcal{T}_{\lambda}\lt(\mathsfit{u}\rt)\rt) 
        - 
        \frac{ \partial \mathcal{T}_{\lambda'}\lt(\mathsfit{u}\rt) }{\partial \lambda'}  
        \nabla \ell\lt(\mathcal{T}_{\lambda'}\lt(\mathsfit{u}\rt)\rt) 
    }_2^2
    \nonumber
    \\
    &=
    \mathbb{E} \norm*{ 
        \frac{ \partial \mathcal{T}_{\lambda}\lt(\mathsfit{u}\rt) }{\partial \lambda}  
        \lt(
        \nabla
        \ell\lt(\mathcal{T}_{\lambda}\lt(\mathsfit{u}\rt)\rt) 
        - 
        \nabla \ell\lt(\mathcal{T}_{\lambda'}\lt(\mathsfit{u}\rt)\rt) 
        \rt)
    }_2^2 \; ,
    \nonumber
\]
}%
where the equality follows from the fact that the Jacobian $\nicefrac{ \partial \mathcal{T}_{\lambda}\lt(\mathsfit{u}\rt) }{\partial \lambda}$ does not depend on $\lambda$.
For mean-field location-scale variational families, the squared Jacobian follows as
{%
\setlength{\belowdisplayskip}{0ex} \setlength{\belowdisplayshortskip}{0ex}
\setlength{\abovedisplayskip}{0ex} \setlength{\abovedisplayshortskip}{0ex}
\[
    {
    \left(
    \frac{
        \partial \mathcal{T}_{\lambda}\left(\mathsfit{u}\right)
    }{
        \partial \lambda
    }
    \right)
    }^{\top}
    \frac{
        \partial \mathcal{T}_{\lambda}\left(\mathsfit{u}\right)
    }{
        \partial \lambda
    }
    =
    \mathrm{I}_d + \mathsfit{U}^2 \; ,
    \nonumber
\]
}%
where $\mathsfit{U} \triangleq \mathrm{diag}\lt(\mathsfit{u}_1, \ldots, \mathsfit{u}_d\rt)$~\citep{kim_practical_2023}.
This implies that 
{%
\setlength{\belowdisplayskip}{0ex} \setlength{\belowdisplayshortskip}{0ex}
\setlength{\abovedisplayskip}{1ex} \setlength{\abovedisplayshortskip}{1ex}
\[
    &\mathbb{E} \norm{
        \widehat{\nabla f}\left(\lambda; \mathsfit{u}\right) - \widehat{\nabla f}\lt(\lambda'; \mathsfit{u}\rt)
    }_2^2
    \nonumber
    \\
    &\qquad\qquad\qquad=
    \underbrace{
    \mathbb{E} 
    \norm{
    \nabla \ell\left(\mathcal{T}_{\lambda}\left(\mathsfit{u}\right)\right)
    -
    \nabla \ell\left(\mathcal{T}_{\lambda'}\left(\mathsfit{u}\right)\right)
    }_2^2
    }_{\triangleq V_{\text{loc}}}
    +
    \underbrace{
    \mathbb{E}\norm{
    \mathsfit{U}
    \lt(
    \nabla \ell\left(\mathcal{T}_{\lambda}\left(\mathsfit{u}\right)\right)
    -
    \nabla \ell\left(\mathcal{T}_{\lambda'}\left(\mathsfit{u}\right)\right)
    \rt)
    }_{2}^2 
    }_{\triangleq V_{\text{scale}}} \; .
    \label{eq:meanfield_gradient_variance_decomposition}
\]
}%
Our goal is to bound each term by $\norm{\lambda - \lambda'}_2^2$.

In order to solve the expectations, we need to simplify the $\nabla \ell$ terms.
For instance, for the gradient of the location $V_{\text{loc}}$, assuming that $\ell$ is $L$-smooth allows for a quadratic approximation.
That is, 
{%
\setlength{\belowdisplayskip}{1ex} \setlength{\belowdisplayshortskip}{1ex}
\setlength{\abovedisplayskip}{1ex} \setlength{\abovedisplayshortskip}{1ex}
\[
    V_{\text{loc}}
    \;\;=\;\;
    \mathbb{E}
    \norm{
    \nabla \ell\left(\mathcal{T}_{\lambda}\left(\mathsfit{u}\right)\right)
    -
    \nabla \ell\left(\mathcal{T}_{\lambda'}\left(\mathsfit{u}\right)\right)
    }_2^2
    \;\;\leq\;\;
    L^2
    \mathbb{E}
    \norm{
    \mathcal{T}_{\lambda}\left(\mathsfit{u}\right)
    -
    \mathcal{T}_{\lambda'}\left(\mathsfit{u}\right)
    }_2^2
    \;\;=\;\;
    L^2 \norm{\lambda - \lambda'}_2^2 \; ,
    \nonumber
\]
}%
where the last equality is by \cref{thm:reparam_identity}.

Now, it is tempting to use the same quadratic approximation strategy for the gradient of the scale $V_{\text{scale}}$.
Indeed, this strategy was used by \citet{domke_provable_2019} to bound the gradient variance of full-rank location-scale variational families and by \citet{ko_provably_2024} for structured location-scale variational families.
Unfortunately, this strategy does not immediately apply to mean-field families due to the matrix $\mathsfit{U}$.
We somehow have to decouple $\nabla \ell\lt(\mathcal{T}_{\lambda}\lt(\mathsfit{u}\rt)\rt) - \nabla \ell\lt(\mathcal{T}_{\lambda'}\lt(\mathsfit{u}\rt)\rt)$ and $\mathsfit{U}$, but in a way that does not lose the correlation between the two; the correlation leads to cancellations critical to obtaining a tight bound.
\citet{kim_practical_2023} used the inequality
{%
\setlength{\belowdisplayskip}{1ex} \setlength{\belowdisplayshortskip}{1ex}
\setlength{\abovedisplayskip}{1ex} \setlength{\abovedisplayshortskip}{1ex}
\[
    V_{\text{scale}} 
    \quad\leq\quad
    \mathbb{E}\norm{\mathsfit{U}^2}_{\mathrm{F}} \norm{ \nabla \ell\lt(\mathcal{T}_{\lambda}\lt(\mathsfit{u}\rt)\rt) - \nabla \ell\lt(\mathcal{T}_{\lambda'}\lt(\mathsfit{u}\rt) \rt) }_2^2 \; ,
    \label{eq:previous_inequality}
\]
}%
which resulted in a dimension dependence of $\mathrm{O}(r_4 \sqrt{d} )$ after solving the expectation.
The key question is whether this dimension dependence can be improved.
Due to the ordering of norms $\norm{\cdot}_2 \leq \norm{\cdot}_{\mathrm{F}}$, it is natural to consider the tighter inequality
{%
\setlength{\belowdisplayskip}{1ex} \setlength{\belowdisplayshortskip}{1ex}
\setlength{\abovedisplayskip}{1ex} \setlength{\abovedisplayshortskip}{1ex}
\[
    V_{\text{scale}} 
    \quad\leq\quad
    \mathbb{E}\norm{\mathsfit{U}}_{2}^2 \norm{ \nabla \ell\lt(\mathcal{T}_{\lambda}\lt(\mathsfit{u}\rt)\rt) - \nabla \ell\lt(\mathcal{T}_{\lambda'}\lt(\mathsfit{u}\rt) \rt) }_2^2 \; .
    \nonumber
\]
}%
(This step corresponds to \cref{eq:thm_gradient_variance_upperbound_sketch_eq1} in the proof sketch of the upcoming result.)
The main challenge, however, is solving the resulting expectation in a way that is also tight with respect to $d$.
We will see that this requires a careful probabilistic analysis.

\vspace{-1ex}
\subsection{Upper Bound on Gradient Variance}
\vspace{-1ex}

We now formally state our upper bound on the gradient variance.
In the context of proving \cref{thm:bbvi_complexity}, the following lemma implies both \cref{assumption:expected_smoothness} and \cref{assumption:noise}.
(See the proof of \cref{thm:bbvi_complexity}.)
We provide a corresponding unimprovability result in \cref{section:unimprovability}.

\begin{theoremEnd}[%
    restate,
    category=gradientvarianceupperboundmeanfieldgeneral, 
    text link={See the \hyperref[proof:prAtEnd\pratendcountercurrent]{\textit{full proof}} in~\cref{section:proof_gradient_variance_upperbound_meanfield_general}, p.~\pageref{proof:prAtEnd\pratendcountercurrent}.},
    text proof={Proof.}
]{lemma}\label{thm:gradient_variance_upperbound_meanfield_general}
    Suppose \cref{assumption:noise,assumption:almost_constant_hessian} hold, $\mathcal{Q}$ is a mean-field location-family, and $\widehat{\nabla f}$ is the reparametrization gradient.
    Then, for any \(\lambda, \lambda' \in \mathbb{R}^d \times \mathbb{D}^d\).
    {%
    \setlength{\belowdisplayskip}{0ex} \setlength{\belowdisplayshortskip}{0ex}
    \setlength{\abovedisplayskip}{1ex} \setlength{\abovedisplayshortskip}{1ex}
    \[
        \mathbb{E}\norm{ \widehat{\nabla f}\lt(\lambda; \mathsfit{u}\rt) - \widehat{\nabla f}\lt(\lambda'; \mathsfit{u}\rt) }_2^2
        \leq
        \Big\{ 
            2 \lt(1 + r_4\rt) \norm{H}_2^2 
            +
            4 \delta^2
            \Big(
            \nicefrac{1}{2}
            +
            r_4 
            +
            \mathbb{E}\max_{j = 1, \ldots, d} \mathsfit{u}_j^2
            \Big)
        \Big\} 
        \norm{ \lambda - \lambda' }_2^2
        \; .
        \nonumber
    \]
    }%
\end{theoremEnd}
\vspace{-2.3ex}
\begin{proofSketch}
    For the proof sketch, we will assume that $\ell$ is $L$-smooth instead of taking \cref{assumption:almost_constant_hessian}.
    This will vastly simplify the analysis and let us focus on the key elements.

    Recall $V_{\text{scale}}$ in  \cref{eq:meanfield_gradient_variance_decomposition}.
    Applying the operator norm and the $L$-smoothness of $\ell$ yields 
    {%
    \[
        V_{\text{scale}} 
        \quad\leq\quad
        \mathbb{E}\norm{\mathsfit{U}}_{2}^2 \norm{ \nabla \ell\lt(\mathcal{T}_{\lambda}\lt(\mathsfit{u}\rt)\rt) - \nabla \ell\lt(\mathcal{T}_{\lambda'}\lt(\mathsfit{u}\rt) \rt) }_2^2
        \quad\leq\quad
        L^2 \mathbb{E}\norm{\mathsfit{U}}_{2}^2 \norm{ \mathcal{T}_{\lambda}\lt(\mathsfit{u}\rt) - \mathcal{T}_{\lambda'}\lt(\mathsfit{u}\rt) }_2^2 \; .
        \label{eq:thm_gradient_variance_upperbound_sketch_eq1}
    \]
    }%
    It remains to solve the expectation over $\mathsfit{u}_1, \ldots, \mathsfit{u}_d$.
    Denote
    {%
    \[
        \lambda = (m, C), \qquad \lambda = (m', C'), \qquad \bar{m} \triangleq m - m',  \quad\text{and}\quad \bar{C} \triangleq C - C' \; ,
        \nonumber
    \]
    }%
    recall that $\mathcal{T}_{\lambda}\lt(u\rt) = C u + m$ (\cref{def:locscale}), and notice that, since $\mathsfit{U}$ is a diagonal matrix, $\norm{\mathsfit{U}}^2_2 = \max_{i = 1, \ldots, d} \mathsfit{u}_j^2$. 
    Then we can rewrite \cref{eq:thm_gradient_variance_upperbound_sketch_eq1} as
    {%
    \[
        V_{\text{scale}} 
        \;\;&\leq\;\;
        L^2 \mathbb{E} \Big( \max_{i=1, \ldots, d} \mathsfit{u}_j^2 \Big) {\textstyle\sum_{i=1}^d} {\lt( C_{ii} \mathsfit{u}_i + m_i - C_{ii}' \mathsfit{u}_i - m_i' \rt)}^2
        \nonumber
        \\
        \;\;&=\;\;
        L^2 \mathbb{E} \Big( \max_{i=1, \ldots, d} \mathsfit{u}_j^2 \Big) {\textstyle\sum_{i=1}^d} {\lt( \bar{C}_{ii} \mathsfit{u}_i + \bar{m}_i \rt)}^2
        \nonumber
        \\
        \;\;&\leq\;\;
        L^2 \mathbb{E} \Big( \max_{i=1, \ldots, d} \mathsfit{u}_j^2 \Big) {\textstyle\sum_{i=1}^d} {\lt( 2 \bar{C}_{ii}^2 \mathsfit{u}_i^2 + 2 \bar{m}_i^2 \rt)} \; .
        &&\text{(Young's inequality)}
        \nonumber
    \]
    }%
    The problematic term is 
    {%
    \[
        \mathbb{E}
        \Big( \max_{j=1, \ldots, d} \mathsfit{u}_j^2 \Big)
        {\textstyle\sum}_{i=1}^d
        \bar{C}_{ii}^2 \mathsfit{u}_i^2
        \quad=\quad
        \mathbb{E}
        \mathsfit{u}_{\mathsfit{i}_*}^2
        {\textstyle\sum}_{i=1}^d
        \bar{C}_{ii}^2 \mathsfit{u}_i^2
        \quad=\quad
        \mathbb{E}
        \lt[
            \bar{C}_{\mathsfit{i}_*\mathsfit{i}_*}^2 \mathsfit{u}_{\mathsfit{i}_*}^4 
            +
            {\textstyle\sum}^d_{j \neq \mathsfit{i}_*} \bar{C}_{jj}^2 \mathsfit{u}_{\mathsfit{i}_*}^2 \mathsfit{u}_j^2  \, 
        \rt]
        \; ,
        \nonumber
    \]
    }%
    where $\mathsfit{i}_*  = \argmax_{i=1, \ldots, d} \mathsfit{u}_i^2 $ is the coordinate of maximum magnitude.
    Here, $\mathsfit{u}_{\mathsfit{i}_*}^4 = \max_{i=1, \ldots, d} \mathsfit{u}_i^4$ is a heavy-tailed quantity that generally grows fast in $d$, unlike $\mathsfit{u}_{\mathsfit{i}_*}^2$.
    (\textit{e.g.}, for a Gaussian $\mathsfit{u}_i$, $\mathsfit{u}_{\mathsfit{i}_*}^2$ has an MGF but $\mathsfit{u}_{\mathsfit{i}_*}^4$ does not.)
    Therefore, a benign dimension dependence might appear futile.
    Notice, however, that the problematic term only affects a single dimension: the maximal axis indicated by $\mathsfit{i}_*$.
    A probabilistic analysis reveals that as $d$ increases, the effect of $\mathsfit{u}_{\mathsfit{i}_*}^4$ becomes averaged out and the effect of the remaining term involving $\mathsfit{u}_{\mathsfit{i}}^2$ dominates.
    More formally, 
    {%
    \setlength{\belowdisplayskip}{0ex} \setlength{\belowdisplayshortskip}{0ex}
    \[
        \mathbb{E}
        \mathsfit{u}_{\mathsfit{i}_*}^2
        {\textstyle\sum}_{i=1}^d
        \bar{C}_{ii}^2 \mathsfit{u}_i^2
        \quad=\quad
        {\textstyle\sum}_{i=1}^d
        \bar{C}_{ii}^2 \,
        \mathbb{E}
        \lt[
            \mathsfit{u}_{\mathsfit{i}_*}^4
            \mathds{1}\lt\{ \mathsfit{i}_{*} = i \rt\}
            +
            \mathsfit{u}_{\mathsfit{i}_*}^2 \mathsfit{u}_{i}^2
            \mathds{1}\lt\{ \mathsfit{i}_{*} \neq i \rt\}
        \rt] \; ,
        \nonumber
    \]
    }%
    where 
    {%
    \[
        \mathbb{E}
        \lt[
            \mathsfit{u}_{\mathsfit{i}_*}^4
            \mathds{1}\lt\{ \mathsfit{i}_{*} = i \rt\}
        \rt]
        \;\;=\;\;
        \mathbb{E} \lt[ \mathsfit{u}_{\mathsfit{i}_*}^4 \rt]
        \mathbb{E} \lt[ \mathds{1}\lt\{ \mathsfit{i}_{*} = i \rt\} \rt]
        \;\;=\;\;
        \mathbb{E} \lt[ \mathsfit{u}_{\mathsfit{i}_*}^4 \rt]
        \mathbb{P} \lt[ \mathsfit{i}_{*} = i \rt]
        \;\;=\;\;
        \mathbb{E} \lt[ \mathsfit{u}_{\mathsfit{i}_*}^4 \rt]
        (1/d)
        \; .
        \nonumber
    \]
    }%
    Since the maximum of $d$ random variable is always smaller than their sum, the probability of the maximally random event, $\mathbb{P}[\mathsfit{i}_* = i] = 1/d$, kills off the dimensional growth of $\mathsfit{u}_{\mathsfit{i}_*}^4$.
    In fact, using the crude bound $\mathbb{E} \mathsfit{u}_{\mathsfit{i}_*}^4 \leq \mathbb{E} \sum_{i=1}^d \mathsfit{u}_i^4 = d r_4$, where the last equality is due to \cref{assumption:noise}, is enough to make this term independent of $d$.
    The remaining dimension dependence comes from $\mathsfit{u}_{\mathsfit{i}_*}^2$:
    {%
    \[
        \mathbb{E}\lt[
        \mathsfit{u}_{\mathsfit{i}_*}^2 \mathsfit{u}_{i}^2
        \mathds{1}\lt\{ \mathsfit{i}_{*} \neq i \rt\}
        \rt]
        \;=\;
        \mathbb{E} 
        \Big[
            \max_{j \neq i} \mathsfit{u}_{j}^2 \mathsfit{u}_{i}^2
            \mathds{1}\lt\{ \mathsfit{i}_{*} \neq i \rt\}
        \Big]
        \;\leq\;
        \mathbb{E}\Big[ \max_{j = 1, \ldots, d - 1} \mathsfit{u}_{j}^2 \Big]
        \mathbb{E}\lt[ \mathsfit{u}_{i}^2 \rt]
        \;=\;
        \mathbb{E} \max_{j = 1, \ldots, d - 1} \mathsfit{u}_{j}^2  \; ,
        \nonumber
    \]
    }%
    where the last equality follows from \cref{assumption:noise}.
    Therefore, we finally obtain
    {%
    \setlength{\abovedisplayskip}{0ex} \setlength{\abovedisplayshortskip}{0ex}
    \[
        V_{\text{scale}}
        &\leq
        2 L^2 \sum^d_{i=1}  \lt[
        \Big( \mathbb{E} \max_{j = 1, \ldots, d - 1} \mathsfit{u}_{j}^2 + r_4 \Big) 
        \bar{C}_{ii}^2
        +
        \mathbb{E} \max_{j = 1, \ldots, d} \mathsfit{u}_{j}^2 \bar{m}_i^2
        \rt]
        \nonumber
        \\
        &\leq
        2 L^2
        \Big( \mathbb{E} \max_{j = 1, \ldots, d} \mathsfit{u}_{j}^2 + r_4 \Big) \lt( \norm{\bar{m}}_2^2 + \norm{\bar{C}}_{\mathrm{F}}^2 \rt)
        \nonumber
        \\
        &=
        2 L^2
        \Big( \mathbb{E} \max_{j = 1, \ldots, d} \mathsfit{u}_{j}^2 + r_4 \Big) \norm{\lambda - \lambda'}_2^2 \; .
        \nonumber
    \]
    }%
    The full proof performs an analogous analysis under the more general \cref{assumption:almost_constant_hessian}.
\end{proofSketch}
\vspace{-2ex}
\begin{proofEnd}
Recall \cref{eq:thm_gradient_variance_upperbound_meanfield_general_vscale}.
The proof consists of bounding the two terms $V_{\text{const}}$ and $V_{\text{non-const}}$.
First, for $V_{\text{const}}$, under \cref{assumption:almost_constant_hessian}, 
\[
    V_{\text{const}} 
    &\leq
    r_4 \, \norm{H}_2^2 \, \norm{ \lambda - \lambda' }_2^2 \; .
    &&\text{(\cref{thm:constant_hessian})}
    \label{eq:thm_gradient_variance_upperbound_meanfield_general_vconst}
\]
It remains to bound $V_{\text{non-const}}$, which is our main challenge.

Denote $\bar{m} \triangleq m - m'$ \text{and} $\bar{C} \triangleq C - C'$
such that
\[
    \mathcal{T}_{\lambda}\lt(\mathsfit{u}\rt)   
    -
    \mathcal{T}_{\lambda'}\lt(\mathsfit{u}\rt)   
    &=
    \lt( C \mathsfit{u} + m \rt)
    -
    \lt( C' \mathsfit{u} + m' \rt)
    \nonumber
    \\
    &=
    \lt( C - C'\rt) \mathsfit{u} + \lt( m - m' \rt)
    \nonumber
    \\
    &=
    \bar{C} \mathsfit{u} + \bar{m} \; .
    \nonumber
\]
Then
\[
    V_{\text{non-const}}
    &=
    \mathbb{E}\norm{\mathsfit{U}}_2^2 \norm{ \mathcal{T}_{\lambda}\lt(\mathsfit{u}\rt) - \mathcal{T}_{\lambda'}\lt(\mathsfit{u}\rt) }_{2}^2
    \nonumber
    \\
    &=
    \mathbb{E}\norm{\mathsfit{U}}_2^2 \norm{ \bar{C}\mathsfit{u} + \bar{m} }_2^2
    \nonumber
    \\
    &\leq
    \mathbb{E}\norm{\mathsfit{U}}_2^2 \lt( 2 \norm{ \bar{C}\mathsfit{u} }_2^2 + 2 \norm{\bar{m}}_2^2 \rt)
    &&\text{(Young's inequality)}
    \nonumber
    \\
    &=
    \mathbb{E}
    \lt( \max_{j=1, \ldots, d} \mathsfit{u}_j^2 \rt)
    \sum_{i=1}^d
    \lt(
    2 \bar{C}_{ii}^2 \mathsfit{u}_i^2
    +
    2 \bar{m}_i^2
    \rt)
    \nonumber
    \\
    &=
    2
    \mathbb{E}
    \sum_{i=1}^d
    \lt( \max_{j=1, \ldots, d} \mathsfit{u}_j^2 \rt)
    \bar{C}_{ii}^2 \mathsfit{u}_i^2
    +
    2
    \mathbb{E}
    \lt( \max_{j=1, \ldots, d} \mathsfit{u}_j^2 \rt)
    \sum_{i=1}^d
    \bar{m}_i^2
    \; .
    \nonumber
\]
We will focus on the first term.
Denoting $\mathsfit{i}_*  = \argmax_{i=1, \ldots, d} \mathsfit{u}_i^2 $, the coordinate of maximum magnitude, we can decompose the expectation by the contribution of the event $\mathsfit{i}_* = i$ and $\mathsfit{i}_* \neq i$.
That is,
\[
    \mathbb{E}
    \mathsfit{u}_{\mathsfit{i}_*}^2
    {\textstyle\sum}_{i=1}^d
    \bar{C}_{ii}^2 \mathsfit{u}_i^2
    &=
    {\textstyle\sum}_{i=1}^d
    \bar{C}_{ii} \,
    \mathbb{E}
    \Big[
        \underbrace{
        \mathsfit{u}_{\mathsfit{i}_*}^4
        \mathds{1}\lt\{ \mathsfit{i}_{*} = i \rt\}
        }_{V_{\text{max}}}
        +
        \underbrace{
        \mathsfit{u}_{\mathsfit{i}_*}^2 \mathsfit{u}_{i}^2
        \mathds{1}\lt\{ \mathsfit{i}_{*} \neq i \rt\}
        }_{V_{\text{non-max}}}
    \Big] \; .
    \nonumber
\]
The expectation of the event $\mathsfit{i}_* = i$ follows as
\[
    V_{\text{max}}
    &=
    \mathbb{E}
    \lt[
        \mathsfit{u}_{\mathsfit{i}_*}^4
        \mathds{1}\lt\{ \mathsfit{i}_{*} = i \rt\}
    \rt]
    \nonumber
    \\
    &=
    \mathbb{E} \lt[ \mathsfit{u}_{\mathsfit{i}_*}^4 \rt]
    \mathbb{E} \lt[ \mathds{1}\lt\{ \mathsfit{i}_{*} = i \rt\} \rt]
    \nonumber
    &&\text{($\mathsfit{u}_{\mathsfit{i}_*} \indep \mathsfit{i}_{*}$)}
    \\
    &=
    \mathbb{E} \lt[ \mathsfit{u}_{\mathsfit{i}_*}^4 \rt]
    \mathbb{P} \lt[ \mathsfit{i}_{*} = i \rt]
    \nonumber
    \\
    &=
    \mathbb{E} \lt[ \mathsfit{u}_{\mathsfit{i}_*}^4 \rt] \, \frac{1}{d}
    \nonumber
    \\
    &\leq
    \mathbb{E}\lt[ \sum_{i=1}^d \mathsfit{u}_i^4 \rt] \, \frac{1}{d}
    \nonumber
    &&\text{($\max_{j=1, \ldots, d} \mathsfit{u}_j^4 \leq {\textstyle\sum_{j=1}^d} \mathsfit{u}_j^4$)}
    \\
    &=
    \lt( d r_4 \rt) \, \frac{1}{d}
    \nonumber
    &&\text{(\cref{assumption:noise})}
    \\
    &=
    r_4
    \; .
    \label{eq:gradient_variance_upperbound_vmax}
\]
On the other hand, for the event $\mathsfit{i}_* \neq i$,
\[
    V_{\text{non-max}}
    &=
    \mathbb{E}\lt[
    \mathsfit{u}_{\mathsfit{i}_*}^2 \mathsfit{u}_{i}^2
    \mathds{1}\lt\{ \mathsfit{i}_{*} \neq i \rt\}
    \rt]
    \nonumber
    \\
    &=
    \mathbb{E} 
    \Big[
    \max_{j \neq i} \mathsfit{u}_{j}^2 \mathsfit{u}_{i}^2
    \mathds{1}\lt\{ \mathsfit{i}_{*} \neq i \rt\}
    \Big]
    \nonumber
    \\
    &=
    \mathbb{E}\Big[ \max_{j \neq i} \mathsfit{u}_{j}^2 \mathsfit{u}_{i}^2 \Big]
    \nonumber
    &&\text{($\mathds{1} \leq 1$)}
    \\
    &\leq
    \mathbb{E}\Big[ \max_{j \neq i} \mathsfit{u}_{j}^2 \Big] \mathbb{E}\lt[ \mathsfit{u}_{i}^2 \rt]
    \nonumber
    &&\text{($\mathsfit{u}_j \indep \mathsfit{u}_i$ for all $i \neq j$)}
    \\
    &=
    \mathbb{E}\Big[ \max_{j = 1, \ldots, d - 1} \mathsfit{u}_{j}^2 \Big]
    \mathbb{E}\lt[ \mathsfit{u}_{i}^2 \rt]
    \nonumber
    &&\text{($\mathsfit{u}_1$, \ldots, $\mathsfit{u}_d$ are i.i.d.)}
    \\
    &=
    \mathbb{E} \max_{j = 1, \ldots, d - 1} \mathsfit{u}_{j}^2  \; .
    \nonumber
    &&\text{(\cref{assumption:noise})}
\]

Therefore, we finally obtain
\[
    V_{\text{non-const}}
    &\leq
    2 \sum^d_{i=1}  \lt[
    \Big( \mathbb{E} \max_{j = 1, \ldots, d - 1} \mathsfit{u}_{j}^2 + r_4 \Big) 
    \bar{C}_{ii}^2
    +
    \mathbb{E} \max_{j = 1, \ldots, d} \mathsfit{u}_{j}^2 \bar{m}_i^2
    \rt]
    \nonumber
    \\
    &\leq
    2 
    \Big( \mathbb{E} \max_{j = 1, \ldots, d} \mathsfit{u}_{j}^2 + r_4 \Big) \lt( \norm{\bar{m}}_2^2 + \norm{\bar{C}}_{\mathrm{F}}^2 \rt)
    \nonumber
    &&\text{($\max_{j = 1, \ldots, d - 1} \mathsfit{u}_{j}^2 \leq \max_{j = 1, \ldots, d} \mathsfit{u}_{j}^2 $)}
    \\
    &=
    2 
    \Big( \mathbb{E} \max_{j = 1, \ldots, d} \mathsfit{u}_{j}^2 + r_4 \Big) \norm{\lambda - \lambda'}_2^2 \; .
    \label{eq:thm_gradient_variance_upperbound_meanfield_general_vnonconst}
\]
Combining \cref{eq:meanfield_gradient_variance_decomposition,eq:general_gradient_variance_vloc,eq:thm_gradient_variance_upperbound_meanfield_general_vscale,eq:thm_gradient_variance_upperbound_meanfield_general_vconst,eq:thm_gradient_variance_upperbound_meanfield_general_vnonconst} yields the statement.
\end{proofEnd}

\vspace{-1ex}
\subsection{Unimprovability}\label{section:unimprovability}
\vspace{-1ex}

We also demonstrate a lower bound, which implies that \cref{thm:gradient_variance_upperbound_meanfield_general} cannot be improved by the spectral bounds of $\nabla^2 \ell$.
From \cref{eq:meanfield_gradient_variance_decomposition} and the fundamental theorem of calculus, 
    {%
    \setlength{\belowdisplayskip}{1ex} \setlength{\belowdisplayshortskip}{1ex}
    \setlength{\abovedisplayskip}{1ex} \setlength{\abovedisplayshortskip}{1ex}
\[
    \mathbb{E}\norm{ 
    \widehat{\nabla f}\lt(\lambda; \mathsfit{u}\rt) }_2^2
    \quad\geq\quad
    \mathbb{E}\norm{
    \mathsfit{U} \lt(
    \nabla \ell\left( \mathsfit{z} \right) 
    - \nabla \ell\left( \bar{z} \right)
    \rt)
    }_{2}^2 
    \quad=\quad
    \mathbb{E}\norm[\big]{\textstyle
    \mathsfit{U}
        \int_0^1
        \nabla^2 \ell\lt( \mathsfit{z}^w \rt)
        \lt( \mathsfit{z} - \bar{z} \rt)
        \mathrm{d}w
    }_{2}^2
    \; ,
    \label{eq:lowerbound_subject}
\]
}%
where $\bar{z} \in \{ z \mid \nabla \ell\lt(z\rt) = 0 \}$ is a stationary point of $\ell$, $\mathsfit{z} \triangleq \mathcal{T}_{\lambda}(\mathsfit{u})$, and $\mathsfit{z}^w \triangleq w \mathsfit{z} + \lt(1 - w\rt) \bar{z}$.
There exists a matrix-valued function with bounded singular values that lower-bounds this quantity:

\begin{theoremEnd}[%
    restate,
    category=gradientvariancelowerbound,
    text link={\textit{Proof}. The \hyperref[proof:prAtEnd\pratendcountercurrent]{\textit{full proof}} can be found in~\cref{section:proof_gradient_variance_lowerbound}, p.~\pageref{proof:prAtEnd\pratendcountercurrent}. \qed},
    text proof={Proof.}
]{proposition}\label{thm:gradient_variance_lowerbound}
    Suppose \cref{assumption:noise} holds and \(\mathcal{Q}\) is a mean-field location-scale family.
    Then, for any $t > 0$, \(d > 0\), $\mu, L \in (0, +\infty)$ satisfying \(\mu \leq L\), there exists a matrix-valued function $H\lt(z\rt) : \mathbb{R}^d \to \mathbb{S}_{\succ 0}^d $ satisfying $\mu \mathrm{I}_d \preceq H \preceq L \mathrm{I}_d$ almost surely and a set of parameters $\lambda = (m, C) \in \mathbb{R}^d \times \mathbb{D}_{\succ 0}^d$ such that
    {%
    \setlength{\belowdisplayskip}{1ex} \setlength{\belowdisplayshortskip}{1ex}
    \setlength{\abovedisplayskip}{1ex} \setlength{\abovedisplayshortskip}{1ex}
    \[
        \mathbb{E}\norm[\big]{\textstyle
            \mathsfit{U}
            \int_0^1
            H\lt( \mathsfit{z}^w \rt)
            \lt( \mathsfit{z} - \bar{z} \rt)
            \mathrm{d}w
        }_{2}^2
        \geq
        \bigg\{ 
        \frac{{(L - \mu)}^2}{4}
        -
        \frac{L^2}{2}
        \frac{\mathbb{E} \underset{i=1, \ldots, d}{\max} \mathsfit{u}_{i}^4 }{d} 
        \bigg\} 
        c\lt(t, \varphi\rt)
        \Big\{
        \mathbb{E}\underset{i = 1, \ldots, d - 1}{\max}
        \mathsfit{u}_{i}^2 
        -
        t
        \Big\}
        \norm{ C }_{\mathrm{F}}^2 \; .
        \nonumber
    \]
    }%
    where $c(t, \varphi) > 0$ is a constant only dependent on $t$ and $\varphi$.
\end{theoremEnd}
\vspace{-1ex}
\begin{proofEnd}


    Recall $H_{\mathrm{worst}}$ in \cref{eq:hworst}.
    By inspection, we know that $H_{\mathrm{worst}}(\mathsfit{z}^w)$ only depends on the quantities $\mathsfit{i}_*$ and $\mathsfit{z}^w$.
    Then \cref{eq:matrix_constant_w} states that $w \mapsto H_{\mathrm{worst}}(\mathsfit{z}^w)$ is a constant function.
    Therefore, 
    \[
        \mathbb{E}\norm*{
        \mathsfit{U}
        \int_0^1
        H_{\mathrm{worst}}\lt( \mathsfit{z}^w \rt)
        \lt( \mathsfit{z} - \bar{z} \rt)
        \mathrm{d}w
        }_{2}^2
        &=
        \mathbb{E}\norm*{
        \mathsfit{U}
        H_{\mathrm{worst}}\lt( \mathsfit{z}^0 \rt)
        \lt( \mathsfit{z} - \bar{z} \rt)
        }_{2}^2
        &&\text{(\cref{eq:matrix_constant_w})}
        \nonumber
        \\
        &=
        \mathbb{E}\norm*{  
            \mathsfit{U}
            \lt(
            \alpha \mathrm{I}_d + \frac{\beta}{2} \lt( \mathrm{e}_{\mathsfit{i}_*} { \hat{\mathsfit{z}} }^{\top} + \hat{\mathsfit{z}} \, \mathrm{e}_{\mathsfit{i}_*}^{\top} \rt)
            \rt)
            \mathsfit{z}
        }_{2}^2 \; .
        &&\text{(\cref{eq:hworst})}
        \label{eq:lowerbound_eq0}
    \]
    This can be decomposed as  
    \[
        &
        \mathbb{E}\norm*{  
            \mathsfit{U}
            \lt(
            \alpha \mathrm{I}_d + \frac{\beta}{2} \lt( \mathrm{e}_{\mathsfit{i}_*} { \hat{\mathsfit{z}} }^{\top} + \hat{\mathsfit{z}} \, \mathrm{e}_{\mathsfit{i}_*}^{\top} \rt)
            \rt)
            \mathsfit{z}
        }_{2}^2
        \nonumber
        \\
        &=
        \mathbb{E}\norm*{  
            \alpha
            \mathsfit{U}
            \mathsfit{z}
            +
            \frac{\beta}{2} 
            \mathsfit{U} \mathrm{e}_{\mathsfit{i}_*} \lt( { \hat{\mathsfit{z}} }^{\top}  \mathsfit{z} \rt)
            + 
            \frac{\beta}{2} 
            \mathsfit{U} \hat{\mathsfit{z}} \, \lt( \mathrm{e}_{\mathsfit{i}_*}^{\top} \mathsfit{z} \rt)
        }_{2}^2
        \nonumber
        \\
        &=
        \mathbb{E}\norm*{  
            \alpha
            \mathsfit{U}
            \mathsfit{z}
            +
            \frac{\beta}{2}  
            \mathsfit{U} \mathrm{e}_{\mathsfit{i}_*} 
            \norm{\mathsfit{z}}_2
            + 
            \frac{\beta}{2} 
            \mathsfit{U} \hat{\mathsfit{z}}
            \mathsfit{z}_{\mathsfit{i}_*}
        }_{2}^2
        \nonumber
        \\
        &=
        \mathbb{E}\norm*{  
            \alpha
            \mathsfit{U}
            \mathsfit{z}
            +
            \lt(
            \frac{\beta}{2}  
            \norm{\mathsfit{z}}_2
            \rt)
            \mathsfit{U} \mathrm{e}_{\mathsfit{i}_*} 
            + 
            \lt(
            \frac{\beta}{2} 
            \hat{\mathsfit{z}}_{\mathsfit{i}_*}
            \rt)
            \mathsfit{U} \mathsfit{z}
        }_{2}^2
        \nonumber
        \\
        &=
        \mathbb{E}\norm*{  
            \lt(
            \frac{\beta}{2}  
            \norm{\mathsfit{z}}_2
            \rt)
            \mathsfit{U} \mathrm{e}_{\mathsfit{i}_*} 
            + 
            \lt(
            \alpha
            +
            \frac{\beta}{2} 
            \hat{\mathsfit{z}}_{\mathsfit{i}_*}
            \rt)
            \mathsfit{U}
            \mathsfit{z}
        }_{2}^2
        \nonumber
        \\
        &=
        \mathbb{E}
        \sbra[\Bigg]{
        \frac{\beta^2}{4}
        \norm{\mathsfit{z}}_2^2
        \norm{\mathsfit{U} \mathrm{e}_{\mathsfit{i}_*}}_{2}^2
        +
        {\lt( \alpha + \frac{\beta}{2} \hat{\mathsfit{z}}_{\mathsfit{i}_*} \rt)}^2 \norm{\mathsfit{U} \mathsfit{z}}_{2}^2
        +
        \beta
        \lt( \alpha  + \frac{\beta}{2} \hat{\mathsfit{z}}_{\mathsfit{i}_*} \rt) 
        \norm{\mathsfit{z}}_2 
        \lt( \mathrm{e}_{\mathsfit{i}_*}^{\top} \mathsfit{U}^2 \mathsfit{z} \rt)
        }
        \nonumber
        \\
        &=
        \mathbb{E}
        \sbra[\Bigg]{
        \frac{\beta^2}{4}
        \mathsfit{u}_{\mathsfit{i}_*}^2 
        \norm{\mathsfit{z}}_2^2 
        }
        +
        \underbrace{
        \mathbb{E}
        \lt[
        {\lt( \alpha + \frac{\beta}{2} \hat{\mathsfit{z}}_{\mathsfit{i}_*} \rt)}^2 \norm{\mathsfit{U} \mathsfit{z}}_{2}^2
        \rt]
        }_{\triangleq V_1}
        +
        \underbrace{
        \mathbb{E}
        \lt[
        \beta
        \lt( \alpha  + \frac{\beta}{2} \hat{\mathsfit{z}}_{\mathsfit{i}_*} \rt) 
        \norm{\mathsfit{z}}_2 
        \lt( \mathrm{e}_{\mathsfit{i}_*}^{\top} \mathsfit{U}^2 \mathsfit{z} \rt)
        \rt]
        }_{\triangleq V_2}
        \; .
        \label{eq:thm_lowerbound_eq2}
    \]
    Here, the first term $\beta^2/4 \mathsfit{u}_{\mathsfit{i}_*}^2 \norm{\mathsfit{z}}_2^2$ is the worst-case behavior we expect from solving \cref{eq:lower_bound_goal}.
    The remaining terms $V_1$ and $V_2$ are the error caused by inexactly solving \cref{eq:lower_bound_goal}.
    It suffices to show that $\beta^2/4 \mathsfit{u}_{\mathsfit{i}_*}^2 \norm{\mathsfit{z}}_2^2$ dominates lower bounds on $V_1$ and $V_2$ asymptotically in $L$ and $d$.

    $V_1 \geq 0$ trivially holds and can immediately be lower-bounded.
    $V_2$, on the other hand, is not necessarily non-negative.
    Therefore, we will use the bound $V_2 \geq -\abs{\mathbb{E} V_2}$.
    \[
        \abs{\mathbb{E}V_2} 
        &\leq
        \beta
        \lt( \alpha  + \frac{\beta}{2} \rt) 
        \mathbb{E} 
        \norm{\mathsfit{z}}_2  \,
        \abs{ \mathrm{e}_{\mathsfit{i}_*}^{\top} \mathsfit{U}^2 \mathsfit{z} }
        \nonumber
        \\
        &\leq
        \beta
        \lt( \alpha  + \frac{\beta}{2} \rt) 
        \mathbb{E} 
        \norm{\mathsfit{z}}_2  \,
        \mathsfit{u}_{\mathsfit{i}_*}^2 \abs{ \mathsfit{z}_{\mathsfit{i}_*}  }
        \nonumber
        \\
        &\leq
        \beta
        \lt( \alpha  + \frac{\beta}{2} \rt) 
        {\lt(
        \mathbb{E} 
            \norm{\mathsfit{z}}_2^2
            \mathsfit{u}_{\mathsfit{i}_*}^2
        \rt)}^{1/2}
        {\Big(
        \underbrace{
            \mathbb{E} 
            \mathsfit{u}_{\mathsfit{i}_*}^2
            \mathsfit{z}_{\mathsfit{i}_*}^2
        }_{\triangleq V_3}
        \Big)}^{1/2}
        \; .
        &&\text{(Cauchy-Schwarz)}
        \label{eq:lowerbound_v2_mag_bound}
    \]
    For $V_3$, we can use an argument similar to \cref{eq:gradient_variance_upperbound_vmax} where we distribute the influence of the maximum coordinate over the $d$ coordinates.
    \[
        V_3 
        &= 
        \mathbb{E} 
        \mathsfit{u}_{\mathsfit{i}_*}^2
        \mathsfit{z}_{\mathsfit{i}_*}^2
        = 
        \mathbb{E} 
        \sum^d_{i=1} 
        \mathsfit{u}_{\mathsfit{i}_*}^2
        \mathsfit{z}_{i}^2
        \,
        \mathds{1}\lt\{ \mathsfit{i}_* = i \rt\}
        \nonumber
        \\
        &= 
        \sum^d_{i=1} 
        \mathbb{E} \lt[
        \mathsfit{u}_{\mathsfit{i}_*}^2
        C_{ii}^2 \mathsfit{u}_i^2
        \,
        \mathds{1}\lt\{ \mathsfit{i}_* = i \rt\}
        \rt]
        \nonumber
        \\
        &= 
        \sum^d_{i=1} 
        C_{ii}^2 
        \mathbb{E} \lt[ \mathsfit{u}_{\mathsfit{i}_*}^4 \rt]
        \mathbb{E} \lt[ \mathds{1}\lt\{ \mathsfit{i}_* = i \rt\} \rt]
        &&\text{($\mathsfit{u}_{\mathsfit{i}_*} \indep \mathsfit{i}_*$)}
        \nonumber
        \\
        &= 
        \sum^d_{i=1} 
        C_{ii}^2 
        \mathbb{E} \lt[ \mathsfit{u}_{\mathsfit{i}_*}^4 \rt]
        \mathbb{P} \lt[ \mathsfit{i}_* = i \rt]
        \nonumber
        \\
        &= 
        \frac{1}{d} 
        \mathbb{E} \lt[ \mathsfit{u}_{\mathsfit{i}_*}^4 \rt]
        \norm{C}_{\mathrm{F}}^2
        \nonumber
        \\
        &= 
        \frac{1}{d} 
        \lt( \mathbb{E} \mathsfit{u}_{\mathsfit{i}_*}^4 \rt) \mathbb{E} \norm{\mathsfit{z}}_2^2 \, .
        \label{eq:lowerbound_v3_bound}
    \]
    The last equality follows by applying \cref{thm:noise} to the identity $\mathbb{E}\norm{\mathsfit{z}}_2^2 = \mathbb{E} \mathsfit{u}^{\top} C^{\top} C \mathsfit{u}$.
    By applying \cref{eq:lowerbound_v3_bound} into \cref{eq:lowerbound_v2_mag_bound}, we can now notice that $V_2$ decreases by a factor of $\mathbb{E}\mathsfit{u}_{\mathsfit{i}_*}^4/d$.
    \[
        \mathbb{E}V_2 
        &\geq 
        - 
        \beta \lt(\alpha + \frac{\beta}{2}\rt) 
        \frac{\mathbb{E} \lt[ \mathsfit{u}_{\mathsfit{i}_*}^4 \rt]}{d} 
        \sqrt{ \mathbb{E} \mathsfit{u}_{\mathsfit{i}_*}^2 \norm{\mathsfit{z}}_2^2 }
        \, \sqrt{ \mathbb{E} \norm{\mathsfit{z}}_2^2 }
        \nonumber
        \\
        &\geq 
        - 
        \beta \lt(\alpha + \frac{\beta}{2}\rt) 
        \frac{\mathbb{E} \lt[ \mathsfit{u}_{\mathsfit{i}_*}^4 \rt]}{d} 
        \mathbb{E} \lt[ \mathsfit{u}_{\mathsfit{i}_*}^2 \norm{\mathsfit{z}}_2^2 \rt]
        \; .
        &&\text{(\cref{assumption:noise})}
        \nonumber
    \]
    It is clear that $V_2$ vanishes as $d \to \infty$.
    
    Applying the lower bound on $V_2$ into \cref{eq:lowerbound_eq0,eq:thm_lowerbound_eq2}, we have
    \[
        &\mathbb{E}\norm*{
        \mathsfit{U}
        \int_0^1
        H_{\mathrm{worst}}\lt( \mathsfit{z}^w \rt)
        \lt( \mathsfit{z} - \bar{z} \rt)
        \mathrm{d}w
        }_{2}^2
        \nonumber
        \\
        &\quad\geq
        \frac{\beta^2}{4}
        \mathbb{E}\lt[
        \mathsfit{u}_{\mathsfit{i}_*}^2
        \norm{\mathsfit{z}}_2^2
        \rt]
        - 
        \beta \lt(\alpha + \frac{\beta}{2}\rt) 
        \frac{\mathbb{E} \lt[ \mathsfit{u}_{\mathsfit{i}_*}^4 \rt]}{d} 
        \mathbb{E} \lt[ \mathsfit{u}_{\mathsfit{i}_*}^2 \norm{\mathsfit{z}}_2^2 \rt]
        \nonumber
        \\
        &\quad=
        \lt\{
        \frac{\beta^2}{4}
        -
        \lt( \alpha \beta + \frac{\beta^2}{2} \rt)
        \frac{\mathbb{E} \lt[ \mathsfit{u}_{\mathsfit{i}_*}^4 \rt]}{d} 
        \rt\}
        \mathbb{E}
        \mathsfit{u}_{\mathsfit{i}_*}^2
        \norm{\mathsfit{z}}_2^2
        \nonumber
        \\
        &\quad=
        \lt\{
        \frac{{(L - \mu)}^2}{16}
        -
        \lt( \frac{L^2 - \mu^2}{4} + \frac{ {(L - \mu)}^2 }{ 8 } \rt)
        \frac{\mathbb{E} \lt[ \mathsfit{u}_{\mathsfit{i}_*}^4 \rt]}{d} 
        \rt\}
        \mathbb{E}
        \mathsfit{u}_{\mathsfit{i}_*}^2
        \norm{\mathsfit{z}}_2^2
        \nonumber
        &&\text{(\cref{eq:thm_gradient_variance_lowerbound_alpha_beta})}
        \\
        &\quad\geq
        \lt\{
        \frac{{(L - \mu)}^2}{16}
        -
        \lt( \frac{L^2 - \mu^2}{4} + \frac{ L^2 + \mu^2 }{ 4 } \rt)
        \frac{\mathbb{E} \lt[ \mathsfit{u}_{\mathsfit{i}_*}^4 \rt]}{d} 
        \rt\}
        \mathbb{E}
        \mathsfit{u}_{\mathsfit{i}_*}^2
        \norm{\mathsfit{z}}_2^2
        \nonumber
        &&\text{(Young's inequality)}
        \\
        &\quad=
        \lt\{
        \frac{{(L - \mu)}^2}{16}
        -
        \frac{L^2}{2}
        \frac{\mathbb{E} \underset{i=1, \ldots, d}{\max} \mathsfit{u}_{i}^4 }{d} 
        \rt\}
        \mathbb{E}
        \mathsfit{u}_{\mathsfit{i}_*}^2
        \norm{C \mathsfit{u}}_2^2 \; .
        \label{eq:thm_lowerbound_eq3}
    \]
    It remains to solve the expectation.
    
    Let us decompose the events where the $i$th coordinate attains the maximum ($\mathsfit{i}_* = i$) or not ($\mathsfit{i}_* \neq i$) as done in \cref{thm:gradient_variance_upperbound_meanfield_general}.
    \[
         \mathbb{E} 
         \mathsfit{u}_{\mathsfit{i}_*}^2
         \norm{ C \mathsfit{u} }_2^2
         &=
         \mathbb{E} \lt[
         \sum_{i=1}^d C_{ii} \mathsfit{u}_i^2 
         \mathsfit{u}_{\mathsfit{i}_*}^2
         \rt]
         \nonumber
         \\
         &=
         \sum_{i=1}^d 
         C_{ii}^2
         \Big\{
         \mathbb{E}\lt[
            \mathsfit{u}_i^2 
            \mathsfit{u}_{\mathsfit{i}_*}^2
            \mathds{1}_{\mathsfit{i}_* = i}
         \rt]
         +
         \mathbb{E}\lt[
            \mathsfit{u}_i^2 
            \mathsfit{u}_{\mathsfit{i}_*}^2
            \mathds{1}_{\mathsfit{i}_* \neq i}
         \rt]
         \Big\}
         \nonumber
         \\
         &\geq
         \sum_{i=1}^d 
         C_{ii}^2
         \mathbb{E}\lt[
            \mathsfit{u}_i^2 
            \mathsfit{u}_{\mathsfit{i}_*}^2
            \mathds{1}_{\mathsfit{i}_* \neq i}
         \rt]
         \; .
         \nonumber
    \]
    We are left with the expectation over the event $\mathsfit{i}_* \neq i$.
    For the upper bound in~\cref{thm:gradient_variance_upperbound_meanfield_general}, the expectation was solved by noticing that $\mathsfit{u}_i^2$ and $\mathsfit{u}_{\mathsfit{i}_*}^2$ can be made independent after upper bounding the indicator.
    For a lower bound, however, breaking up the expectation for $\mathsfit{u}_i^2$ and $\mathsfit{u}_{\mathsfit{i}_*}^2$ is more involved.
    \[
        \mathbb{E}\lt[
           \mathsfit{u}_i^2 
           \mathsfit{u}_{\mathsfit{i}_*}^2 \,
           \mathds{1}_{\mathsfit{i}_* = i}
        \rt]
        &=
        \mathbb{E}\lt[
           \mathsfit{u}_i^2 
           \max_{j \neq i} \mathsfit{u}_{j}^2 \,
           \mathds{1}_{\mathsfit{i}_* = i}
        \rt]
        \nonumber
        \\
        &=
        \mathbb{E}\lt[
           \mathsfit{u}_i^2 
           \max_{j \neq i} \mathsfit{u}_{j}^2 \,
           \mathds{1}\lt\{
                \mathsfit{u}_{i}^2 < \max_{j \neq i} \mathsfit{u}_{j}^2
           \rt\}
        \rt] \; .
        \label{eq:gradient_variance_lowerbound_eq_last}
    \]
    By introducing a free variable $t > 0$, we can break up the indicator 
    \[
        \mathds{1}\lt\{
            \mathsfit{u}_{i}^2 < \max_{j \neq i} \mathsfit{u}_{j}^2
        \rt\}
        &\geq
        \mathds{1}\lt\{
            \mathsfit{u}_{i}^2 < \max_{j \neq i} \mathsfit{u}_{j}^2, \,
            t < \max_{j \neq i} \mathsfit{u}_{j}^2
        \rt\}
        \nonumber
        \\
        &\geq
        \mathds{1}\lt\{
            \mathsfit{u}_{i}^2 < t, \,
            \max_{j \neq i} \mathsfit{u}_{j}^2 > t
        \rt\}
        \nonumber
        \\
        &=
        \mathds{1}\lt\{
            \mathsfit{u}_{i}^2 < t, \,
        \rt\}
        \mathds{1}\Big\{
            \max_{j \neq i} \mathsfit{u}_{j}^2 > t
        \Big\} 
        \; .
        \label{eq:gradient_variance_lowerbound_eq_after_last}
    \]
    This then allows the expectation to break up between terms depending on $\mathsfit{u}_i^2$ and $\max_{j \neq i}\mathsfit{u}_j$, which is the independence that we were after.
    That is, applying \cref{eq:gradient_variance_lowerbound_eq_after_last} to \cref{eq:gradient_variance_lowerbound_eq_last},
    \[
        \mathbb{E}\lt[
           \mathsfit{u}_i^2 
           \mathsfit{u}_{\mathsfit{i}_*}^2 \,
           \mathds{1}_{\mathsfit{i}_* = i}
        \rt]
        &\geq
        \mathbb{E}\lt[
            \mathsfit{u}_i^2 
            \max_{j \neq i} \mathsfit{u}_{j}^2 \,
            \mathds{1}\lt\{
                \mathsfit{u}_{i}^2 < t, \,
            \rt\}
            \mathds{1}\Big\{
                \max_{j \neq i} \mathsfit{u}_{j}^2 > t
            \Big\} 
        \rt]
        \nonumber
        \\
        &=
        \mathbb{E}\lt[
           \mathsfit{u}_i^2  \,
           \mathds{1}\lt\{
                \mathsfit{u}_{i}^2 < t
           \rt\}
        \rt]
        \mathbb{E}\lt[
           \max_{j = 1, \ldots, d - 1} \mathsfit{u}_{j}^2 \,
           \mathds{1}\Big\{
                \max_{j = 1, \ldots, d - 1} \mathsfit{u}_{j}^2 > t
           \Big\}
        \rt]
        \nonumber
        \\
        &=
        \mathbb{E}\lt[
           \mathsfit{u}_i^2  \,
           \mathds{1}\lt\{
                \mathsfit{u}_{i}^2 < t
           \rt\}
        \rt]
        \lt(
        \mathbb{E}\lt[
           \max_{j = 1, \ldots, d - 1} \mathsfit{u}_{j}^2 
        \rt]
        -
        \mathbb{E}\lt[
           \max_{j = 1, \ldots, d - 1} \mathsfit{u}_{j}^2 \,
           \mathds{1}\Big\{
                \max_{j = 1, \ldots, d - 1} \mathsfit{u}_{j}^2 \leq t
           \Big\}
        \rt] 
        \rt)
        \nonumber
        \\
        &\geq
        \lt(
        \int^t_0 \mathbb{P}\lt[ \mathsfit{u}_{i}^2 > s \rt] \mathrm{d}s
        \rt)
        \lt(
        \mathbb{E}\lt[
           \max_{j = 1, \ldots, d - 1} \mathsfit{u}_{j}^2 
        \rt]
        -
        t
        \rt) \; .
        \nonumber
    \]
    Notice that the function $ (t, \varphi) \mapsto \int^t_0 \mathbb{P}\lt[ \mathsfit{u}_{i}^2 > s \rt] \mathrm{d}s$ is strictly positive as long as $t > 0$ and only dependent on $t$ and the base distribution $\varphi$.

    We now obtain our final result by combining the results into \cref{eq:thm_lowerbound_eq3}.
    With explicit constants, 
    \[
        \mathbb{E}\norm*{
        \int_0^1
        H_{\mathrm{worst}}\lt( \mathsfit{z}^w \rt)
        \lt( \mathsfit{z} - \bar{z} \rt)
        \mathrm{d}w
        }_{\mathsfit{U}^2}^2
        &\geq
        \lt\{
        \frac{{(L - \mu)}^2}{4}
        -
        \frac{L^2}{2}
        \frac{\mathbb{E} \underset{i=1, \ldots, d}{\max} \mathsfit{u}_{i}^4 }{d} 
        \rt\}
        \nonumber
        \\
        &\quad\qquad
        \times
        \lt(
        \int^t_0 \mathbb{P}\lt[ \mathsfit{u}_{i}^2 > s \rt] \mathrm{d}s
        \rt)
        \lt(
        \mathbb{E}\lt[
           \max_{i = 1, \ldots, d - 1} \mathsfit{u}_{i}^2 
        \rt]
        -
        t
        \rt)
        \norm{C}_{\mathrm{F}}^2 
        \; .
        \nonumber
    \]
    Substituting $ c\lt(t, \varphi\rt) \triangleq \int^t_0 \mathbb{P}\lt[ \mathsfit{u}_{i}^2 > s \rt] \mathrm{d}s$ into this yields the stated result.
\end{proofEnd}

For Gaussians, $\mathbb{E}\max_{i} \mathsfit{u}_i^4$ is upper bounded as $\mathrm{O}(\sqrt{d})$~\citep[Eq. 1.6]{gumbel_maxima_1954}, which means the negative term vanishes at a $\mathrm{O}(1/\sqrt{d})$ rate.
Furthermore, $\mathbb{E} \max_{i} \mathsfit{u}_i^2 \geq {(\mathbb{E} \max_{i} \mathsfit{u}_i)}^2 = \Omega\lt(\log d\rt)$ by the well-known lower bound on the expected maximum of i.i.d. Gaussians~\citep[Exercise 2.11.(b)]{wainwright_highdimensional_2019}.
Combining these facts with \cref{thm:gradient_variance_lowerbound} yield a $\Omega\lt( L^2 \log d \rt)$ bound on \cref{eq:lowerbound_subject}.

\begin{remark}\label{remark:limitation}
    It is not obvious that the rows of our worst-case example $H_{\mathrm{worst}}$ form conservative vector fields.
    This means that \cref{thm:gradient_variance_lowerbound} does not assert the existence of a function $\ell$ that satisfies $\nabla^2 \ell = H_{\mathrm{worst}}$.
    However, it does suggest that one cannot improve \cref{thm:gradient_variance_upperbound_meanfield_general} by  relying only on spectral bounds on the Hessian.
\end{remark}

\vspace{-1ex}
\section{Discussion}
\vspace{-1ex}
\subsection{Related Works}
\vspace{-1ex}
Early results analyzing VI had to rely on assumptions that either:
\begin{enumerate*}[label=(\roman*)]
    \item do not hold on Gaussian targets,
    \item are difficult to verify, or
    \item require bounds on the domain~\citep{buchholz_quasimonte_2018,khan_faster_2016,regier_fast_2017,fan_fast_2015,fujisawa_multilevel_2021,liu_quasimonte_2021,alquier_concentration_2020,nguyen_wasserstein_2025}.
\end{enumerate*}
coordinate-ascent VI (CAVI), in particular, was studied on specific models~\citep{ghorbani_instability_2019,zhang_theoretical_2020} only. 
Under general and verifiable assumptions, \citet{xu_computational_2022} obtained asymptotic convergence guarantees, while partial results, such as bounds on the gradient variance~\citep{domke_provable_2019,kim_practical_2023,fan_fast_2015}, or regularity of the ELBO~\citep{domke_provable_2020,titsias_doubly_2014,challis_gaussian_2013}, were known.

It was only recently that non-asymptotic quantitative convergence under realizable and verifiable assumptions was established.
For BBVI specifically, \citet{hoffman_blackbox_2020} first proved convergence on Gaussian targets (quadratic $\ell$), while \citet{domke_provable_2023,kim_convergence_2023,kim_linear_2024} proved the first results on strongly convex and smooth functions with location-scale families.
\citet{surendran_theoretical_2025} extended these results to non-convex smooth functions and more complex variational family parametrizations, and \citet{cheng_kernel_2024} analyzed a variant of semi-implicit VI.
The results of \citeauthor{hoffman_blackbox_2020,domke_provable_2023,kim_linear_2024}, who focused on full-rank families, suggest a $\mathrm{O}(d)$ dimension dependence in the iteration complexity.
On the other hand, \citet{kim_convergence_2023} reported a $\mathrm{O}(\sqrt{d})$ dimension dependence for mean-field location-scale families, while conjecturing $\mathrm{O}(\log d)$ dependence, based on the partial result of \citet{kim_practical_2023}.
For targets with a diagonal Hessian structure, \citet[Corollary 1]{ko_provably_2024} show that mean-field families are dimension-independent.

Apart from BBVI, Wasserstein VI algorithms---which minimize the KL divergence on the Wasserstein geometry---provide non-asymptotic convergence guarantees.
In particular, the algorithms by \citet{lambert_variational_2022,diao_forwardbackward_2023} optimize over the full-rank Gaussian family, while that of \citet{jiang_algorithms_2025} optimizes over all mean-field families with bounded second moments.
To guarantee $\mathbb{E}\mathrm{W}_2{\lt(q_{\lambda_T}, q_{\lambda_*}\rt)}^2 \leq \epsilon$ on strongly log-concave and log-smooth targets, they all report an iteration complexity of $\mathrm{O}(d \epsilon^{-1} \log \epsilon^{-1})$.
Meanwhile, under the same conditions,~\citet{arnese_convergence_2024,lavenant_convergence_2024} analyzed (block) CAVI, and reported an iteration complexity of $\mathrm{O}(d \log \epsilon^{-1})$.
\citet{bhattacharya_convergence_2025} provides a concurrent result on CAVI,  but relies on an assumption that departs from log-concavity and smoothness.
Finally, \citet{bhatia_statistical_2022} analyzes a specialized algorithm optimizing over only the scale of Gaussians, which has a gradient query complexity of $\mathrm{O}(d k \epsilon^{-3})$, where $k$ is the user-chosen number of rank-1 factors in the scale matrix.

\vspace{-1ex}
\subsection{Conclusions}
\vspace{-1ex}
In this work, we proved that BBVI with mean-field location-scale families is able to converge with an iteration complexity with only a $\mathrm{O}(\log d)$ dimension dependence, as long as the tails of the family are sub-Gaussian.
For high-dimensional targets, this suggests a substantial speed advantage over BBVI with full-rank families.
In practice, the mean-field approximation can be combined with other design elements such as control variates~\citep{roeder_sticking_2017,miller_reducing_2017,geffner_approximation_2020,geffner_using_2018,wang_joint_2024,boustati_amortized_2020} and data-point subsampling~\citep{titsias_doubly_2014,kucukelbir_automatic_2017}.
Our analysis strategy should easily be combined with existing analyses~\citep{kim_linear_2024,kim_demystifying_2024} for such design elements.

For a target distribution $\pi$ with a condition number of $\kappa$ and a target accuracy level $\epsilon$, we now know how to improve the dependence on $d$ and $\epsilon$ in the iteration complexity:
Using less-expressive families such as mean-field (\cref{thm:bbvi_complexity}) or structured~\citep{ko_provably_2024} families improves the dependence on $d$, while applying control variates to gradient estimators~\citep{kim_linear_2024} improves the dependence on $\epsilon$.
However, it is currently unclear whether the dependence on $\kappa$ is tight or improvable. 
If it is tight, it would be worth investigating whether this can be provably improved through algorithmic modifications, for example, via stochastic second-order optimization methods~\citep{byrd_stochastic_2016,fan_fast_2015,meng_fast_2020,regier_fast_2017,liu_quasimonte_2021}.

Another future direction would be to develop methods that are able to adaptively adjust the computational cost between $\mathrm{O}(\log d)$ and $\mathrm{O}(d)$ by trading statistical accuracy akin to the method of \citet{bhatia_statistical_2022}.
Existing BBVI schemes with ``low-rank(-plus-diagonal)'' families~\citep{tomczak_efficient_2020,ong_gaussian_2018,rezende_stochastic_2014} result in a non-smooth, non-Lipschitz, and non-convex landscape.
This not only rules out typical theoretical convergence guarantees but also exhibits unstable and slow convergence in practice~\citep{modi_batch_2025}.
Furthermore, understanding the statistical side of this trade-off will be an important direction.
As of now, our understanding is restricted to either mean-field or full-rank families~\citep{wang_variational_2019,wang_frequentist_2019,katsevich_approximation_2024,margossian_variational_2025,margossian_shrinkagedelinkage_2023,yang_avariational_2020,zhang_convergence_2020} with little in between except for the work of~\citet{bhatia_statistical_2022}.

\newpage
\begin{ack}
The authors thank Anton Xue for helpful discussions and the reviewers for helpful suggestions.

K. Kim, J. R. Gardner were supported through the NSF award [IIS2145644]; 
Y.-A. Ma was supported by the NSF Award CCF-2112665 (TILOS), the DARPA AIE program, and the CDC-RFA-FT-23-0069;
T. Campbell was supported by the NSERC Discovery Grant RGPIN-2025-04208.
\end{ack}

\bibliographystyle{plainnat}
\bibliography{references}

\newpage

\newpage
\section*{NeurIPS Paper Checklist}

\begin{enumerate}

\item {\bf Claims}
    \item[] Question: Do the main claims made in the abstract and introduction accurately reflect the paper's contributions and scope?
    \item[] Answer: \answerYes{}
    \item[] Justification: Yes, we obtain a $\mathrm{O}( (\log d) \kappa^2 \epsilon^{-1})$ iteration complexity result for BBVI on strong log-concave and log-smooth target distributions. 
    This corresponds to a nearly dimension-independent iteration complexity.
    \item[] Guidelines:
    \begin{itemize}
        \item The answer NA means that the abstract and introduction do not include the claims made in the paper.
        \item The abstract and/or introduction should clearly state the claims made, including the contributions made in the paper and important assumptions and limitations. A No or NA answer to this question will not be perceived well by the reviewers. 
        \item The claims made should match theoretical and experimental results, and reflect how much the results can be expected to generalize to other settings. 
        \item It is fine to include aspirational goals as motivation as long as it is clear that these goals are not attained by the paper. 
    \end{itemize}

\item {\bf Limitations}
    \item[] Question: Does the paper discuss the limitations of the work performed by the authors?
    \item[] Answer: \answerYes{} 
    \item[] Justification: As stated in \cref{remark:limitation}, the main limitation of our work is that the unimprovability result~\cref{thm:gradient_variance_lowerbound} does not fully assert that \cref{thm:gradient_variance_upperbound_meanfield_general} is tight for any target function $\ell$. 
    It only shows that our specific proof strategy, which uses only spectral bounds on the Hessian of $\ell$, is unimprovable.
    In principle, using additional properties of the Hessian could result in a tighter bound.
    
    \item[] Guidelines:
    \begin{itemize}
        \item The answer NA means that the paper has no limitation while the answer No means that the paper has limitations, but those are not discussed in the paper. 
        \item The authors are encouraged to create a separate "Limitations" section in their paper.
        \item The paper should point out any strong assumptions and how robust the results are to violations of these assumptions (e.g., independence assumptions, noiseless settings, model well-specification, asymptotic approximations only holding locally). The authors should reflect on how these assumptions might be violated in practice and what the implications would be.
        \item The authors should reflect on the scope of the claims made, e.g., if the approach was only tested on a few datasets or with a few runs. In general, empirical results often depend on implicit assumptions, which should be articulated.
        \item The authors should reflect on the factors that influence the performance of the approach. For example, a facial recognition algorithm may perform poorly when image resolution is low or images are taken in low lighting. Or a speech-to-text system might not be used reliably to provide closed captions for online lectures because it fails to handle technical jargon.
        \item The authors should discuss the computational efficiency of the proposed algorithms and how they scale with dataset size.
        \item If applicable, the authors should discuss possible limitations of their approach to address problems of privacy and fairness.
        \item While the authors might fear that complete honesty about limitations might be used by reviewers as grounds for rejection, a worse outcome might be that reviewers discover limitations that aren't acknowledged in the paper. The authors should use their best judgment and recognize that individual actions in favor of transparency play an important role in developing norms that preserve the integrity of the community. Reviewers will be specifically instructed to not penalize honesty concerning limitations.
    \end{itemize}

\item {\bf Theory assumptions and proofs}
    \item[] Question: For each theoretical result, does the paper provide the full set of assumptions and a complete (and correct) proof?
    \item[] Answer: \answerYes{}
    \item[] Justification: All assumptions are stated in either \cref{section:background,section:main_results} or the propositional statements.
    \item[] Guidelines:
    \begin{itemize}
        \item The answer NA means that the paper does not include theoretical results. 
        \item All the theorems, formulas, and proofs in the paper should be numbered and cross-referenced.
        \item All assumptions should be clearly stated or referenced in the statement of any theorems.
        \item The proofs can either appear in the main paper or the supplemental material, but if they appear in the supplemental material, the authors are encouraged to provide a short proof sketch to provide intuition. 
        \item Inversely, any informal proof provided in the core of the paper should be complemented by formal proofs provided in appendix or supplemental material.
        \item Theorems and Lemmas that the proof relies upon should be properly referenced. 
    \end{itemize}

    \item {\bf Experimental result reproducibility}
    \item[] Question: Does the paper fully disclose all the information needed to reproduce the main experimental results of the paper to the extent that it affects the main claims and/or conclusions of the paper (regardless of whether the code and data are provided or not)?
    \item[] Answer: \answerNA{}
    \item[] Justification: The paper does not contain any experiments.
    \item[] Guidelines:
    \begin{itemize}
        \item The answer NA means that the paper does not include experiments.
        \item If the paper includes experiments, a No answer to this question will not be perceived well by the reviewers: Making the paper reproducible is important, regardless of whether the code and data are provided or not.
        \item If the contribution is a dataset and/or model, the authors should describe the steps taken to make their results reproducible or verifiable. 
        \item Depending on the contribution, reproducibility can be accomplished in various ways. For example, if the contribution is a novel architecture, describing the architecture fully might suffice, or if the contribution is a specific model and empirical evaluation, it may be necessary to either make it possible for others to replicate the model with the same dataset, or provide access to the model. In general. releasing code and data is often one good way to accomplish this, but reproducibility can also be provided via detailed instructions for how to replicate the results, access to a hosted model (e.g., in the case of a large language model), releasing of a model checkpoint, or other means that are appropriate to the research performed.
        \item While NeurIPS does not require releasing code, the conference does require all submissions to provide some reasonable avenue for reproducibility, which may depend on the nature of the contribution. For example
        \begin{enumerate}
            \item If the contribution is primarily a new algorithm, the paper should make it clear how to reproduce that algorithm.
            \item If the contribution is primarily a new model architecture, the paper should describe the architecture clearly and fully.
            \item If the contribution is a new model (e.g., a large language model), then there should either be a way to access this model for reproducing the results or a way to reproduce the model (e.g., with an open-source dataset or instructions for how to construct the dataset).
            \item We recognize that reproducibility may be tricky in some cases, in which case authors are welcome to describe the particular way they provide for reproducibility. In the case of closed-source models, it may be that access to the model is limited in some way (e.g., to registered users), but it should be possible for other researchers to have some path to reproducing or verifying the results.
        \end{enumerate}
    \end{itemize}

\item {\bf Open access to data and code}
    \item[] Question: Does the paper provide open access to the data and code, with sufficient instructions to faithfully reproduce the main experimental results, as described in supplemental material?
    \item[] Answer: \answerNA{}
    \item[] Justification: The paper does not contain any experiments.
    \item[] Guidelines:
    \begin{itemize}
        \item The answer NA means that paper does not include experiments requiring code.
        \item Please see the NeurIPS code and data submission guidelines (\url{https://nips.cc/public/guides/CodeSubmissionPolicy}) for more details.
        \item While we encourage the release of code and data, we understand that this might not be possible, so “No” is an acceptable answer. Papers cannot be rejected simply for not including code, unless this is central to the contribution (e.g., for a new open-source benchmark).
        \item The instructions should contain the exact command and environment needed to run to reproduce the results. See the NeurIPS code and data submission guidelines (\url{https://nips.cc/public/guides/CodeSubmissionPolicy}) for more details.
        \item The authors should provide instructions on data access and preparation, including how to access the raw data, preprocessed data, intermediate data, and generated data, etc.
        \item The authors should provide scripts to reproduce all experimental results for the new proposed method and baselines. If only a subset of experiments are reproducible, they should state which ones are omitted from the script and why.
        \item At submission time, to preserve anonymity, the authors should release anonymized versions (if applicable).
        \item Providing as much information as possible in supplemental material (appended to the paper) is recommended, but including URLs to data and code is permitted.
    \end{itemize}

\item {\bf Experimental setting/details}
    \item[] Question: Does the paper specify all the training and test details (e.g., data splits, hyperparameters, how they were chosen, type of optimizer, etc.) necessary to understand the results?
    \item[] Answer: \answerNA{}
    \item[] Justification: The paper does not contain any experiments.
    \item[] Guidelines:
    \begin{itemize}
        \item The answer NA means that the paper does not include experiments.
        \item The experimental setting should be presented in the core of the paper to a level of detail that is necessary to appreciate the results and make sense of them.
        \item The full details can be provided either with the code, in appendix, or as supplemental material.
    \end{itemize}

\item {\bf Experiment statistical significance}
    \item[] Question: Does the paper report error bars suitably and correctly defined or other appropriate information about the statistical significance of the experiments?
    \item[] Answer: \answerNA{}
    \item[] Justification: The paper does not contain any experiments.
    \item[] Guidelines:
    \begin{itemize}
        \item The answer NA means that the paper does not include experiments.
        \item The authors should answer "Yes" if the results are accompanied by error bars, confidence intervals, or statistical significance tests, at least for the experiments that support the main claims of the paper.
        \item The factors of variability that the error bars are capturing should be clearly stated (for example, train/test split, initialization, random drawing of some parameter, or overall run with given experimental conditions).
        \item The method for calculating the error bars should be explained (closed form formula, call to a library function, bootstrap, etc.)
        \item The assumptions made should be given (e.g., Normally distributed errors).
        \item It should be clear whether the error bar is the standard deviation or the standard error of the mean.
        \item It is OK to report 1-sigma error bars, but one should state it. The authors should preferably report a 2-sigma error bar than state that they have a 96\% CI, if the hypothesis of Normality of errors is not verified.
        \item For asymmetric distributions, the authors should be careful not to show in tables or figures symmetric error bars that would yield results that are out of range (e.g. negative error rates).
        \item If error bars are reported in tables or plots, The authors should explain in the text how they were calculated and reference the corresponding figures or tables in the text.
    \end{itemize}

\item {\bf Experiments compute resources}
    \item[] Question: For each experiment, does the paper provide sufficient information on the computer resources (type of compute workers, memory, time of execution) needed to reproduce the experiments?
    \item[] Answer: \answerNA{}
    \item[] Justification: The paper does not contain any experiments.
    \item[] Guidelines:
    \begin{itemize}
        \item The answer NA means that the paper does not include experiments.
        \item The paper should indicate the type of compute workers CPU or GPU, internal cluster, or cloud provider, including relevant memory and storage.
        \item The paper should provide the amount of compute required for each of the individual experimental runs as well as estimate the total compute. 
        \item The paper should disclose whether the full research project required more compute than the experiments reported in the paper (e.g., preliminary or failed experiments that didn't make it into the paper). 
    \end{itemize}
    
\item {\bf Code of ethics}
    \item[] Question: Does the research conducted in the paper conform, in every respect, with the NeurIPS Code of Ethics \url{https://neurips.cc/public/EthicsGuidelines}?
    \item[] Answer: \answerYes{}
    \item[] Justification:
    \item[] Guidelines: The content of the paper is a theoretical study of an inference algorithm and does not involve real data.
    \begin{itemize}
        \item The answer NA means that the authors have not reviewed the NeurIPS Code of Ethics.
        \item If the authors answer No, they should explain the special circumstances that require a deviation from the Code of Ethics.
        \item The authors should make sure to preserve anonymity (e.g., if there is a special consideration due to laws or regulations in their jurisdiction).
    \end{itemize}

\item {\bf Broader impacts}
    \item[] Question: Does the paper discuss both potential positive societal impacts and negative societal impacts of the work performed?
    \item[] Answer: \answerNo{}
    \item[] Justification: The content of the paper is a theoretical study of an inference algorithm and does not have direct societal consequences.
    \item[] Guidelines:
    \begin{itemize}
        \item The answer NA means that there is no societal impact of the work performed.
        \item If the authors answer NA or No, they should explain why their work has no societal impact or why the paper does not address societal impact.
        \item Examples of negative societal impacts include potential malicious or unintended uses (e.g., disinformation, generating fake profiles, surveillance), fairness considerations (e.g., deployment of technologies that could make decisions that unfairly impact specific groups), privacy considerations, and security considerations.
        \item The conference expects that many papers will be foundational research and not tied to particular applications, let alone deployments. However, if there is a direct path to any negative applications, the authors should point it out. For example, it is legitimate to point out that an improvement in the quality of generative models could be used to generate deepfakes for disinformation. On the other hand, it is not needed to point out that a generic algorithm for optimizing neural networks could enable people to train models that generate Deepfakes faster.
        \item The authors should consider possible harms that could arise when the technology is being used as intended and functioning correctly, harms that could arise when the technology is being used as intended but gives incorrect results, and harms following from (intentional or unintentional) misuse of the technology.
        \item If there are negative societal impacts, the authors could also discuss possible mitigation strategies (e.g., gated release of models, providing defenses in addition to attacks, mechanisms for monitoring misuse, mechanisms to monitor how a system learns from feedback over time, improving the efficiency and accessibility of ML).
    \end{itemize}
    
\item {\bf Safeguards}
    \item[] Question: Does the paper describe safeguards that have been put in place for responsible release of data or models that have a high risk for misuse (e.g., pretrained language models, image generators, or scraped datasets)?
    \item[] Answer: \answerNA{}
    \item[] Justification: The paper does not involve real data or models.
    \item[] Guidelines:
    \begin{itemize}
        \item The answer NA means that the paper poses no such risks.
        \item Released models that have a high risk for misuse or dual-use should be released with necessary safeguards to allow for controlled use of the model, for example by requiring that users adhere to usage guidelines or restrictions to access the model or implementing safety filters. 
        \item Datasets that have been scraped from the Internet could pose safety risks. The authors should describe how they avoided releasing unsafe images.
        \item We recognize that providing effective safeguards is challenging, and many papers do not require this, but we encourage authors to take this into account and make a best faith effort.
    \end{itemize}

\item {\bf Licenses for existing assets}
    \item[] Question: Are the creators or original owners of assets (e.g., code, data, models), used in the paper, properly credited and are the license and terms of use explicitly mentioned and properly respected?
    \item[] Answer: \answerNA{}
    \item[] Justification: The paper does not involve real data.
    \item[] Guidelines:
    \begin{itemize}
        \item The answer NA means that the paper does not use existing assets.
        \item The authors should cite the original paper that produced the code package or dataset.
        \item The authors should state which version of the asset is used and, if possible, include a URL.
        \item The name of the license (e.g., CC-BY 4.0) should be included for each asset.
        \item For scraped data from a particular source (e.g., website), the copyright and terms of service of that source should be provided.
        \item If assets are released, the license, copyright information, and terms of use in the package should be provided. For popular datasets, \url{paperswithcode.com/datasets} has curated licenses for some datasets. Their licensing guide can help determine the license of a dataset.
        \item For existing datasets that are re-packaged, both the original license and the license of the derived asset (if it has changed) should be provided.
        \item If this information is not available online, the authors are encouraged to reach out to the asset's creators.
    \end{itemize}

\item {\bf New assets}
    \item[] Question: Are new assets introduced in the paper well documented and is the documentation provided alongside the assets?
    \item[] Answer: \answerNA{}
    \item[] Justification: The paper does not involve real data.
    \item[] Guidelines:
    \begin{itemize}
        \item The answer NA means that the paper does not release new assets.
        \item Researchers should communicate the details of the dataset/code/model as part of their submissions via structured templates. This includes details about training, license, limitations, etc. 
        \item The paper should discuss whether and how consent was obtained from people whose asset is used.
        \item At submission time, remember to anonymize your assets (if applicable). You can either create an anonymized URL or include an anonymized zip file.
    \end{itemize}

\item {\bf Crowdsourcing and research with human subjects}
    \item[] Question: For crowdsourcing experiments and research with human subjects, does the paper include the full text of instructions given to participants and screenshots, if applicable, as well as details about compensation (if any)? 
    \item[] Answer: \answerNA{}
    \item[] Justification: The paper does not involve human subjects.
    \item[] Guidelines:
    \begin{itemize}
        \item The answer NA means that the paper does not involve crowdsourcing nor research with human subjects.
        \item Including this information in the supplemental material is fine, but if the main contribution of the paper involves human subjects, then as much detail as possible should be included in the main paper. 
        \item According to the NeurIPS Code of Ethics, workers involved in data collection, curation, or other labor should be paid at least the minimum wage in the country of the data collector. 
    \end{itemize}

\item {\bf Institutional review board (IRB) approvals or equivalent for research with human subjects}
    \item[] Question: Does the paper describe potential risks incurred by study participants, whether such risks were disclosed to the subjects, and whether Institutional Review Board (IRB) approvals (or an equivalent approval/review based on the requirements of your country or institution) were obtained?
    \item[] Answer: \answerNA{}
    \item[] Justification: The paper does not involve human subjects.
    \item[] Guidelines:
    \begin{itemize}
        \item The answer NA means that the paper does not involve crowdsourcing nor research with human subjects.
        \item Depending on the country in which research is conducted, IRB approval (or equivalent) may be required for any human subjects research. If you obtained IRB approval, you should clearly state this in the paper. 
        \item We recognize that the procedures for this may vary significantly between institutions and locations, and we expect authors to adhere to the NeurIPS Code of Ethics and the guidelines for their institution. 
        \item For initial submissions, do not include any information that would break anonymity (if applicable), such as the institution conducting the review.
    \end{itemize}

\item {\bf Declaration of LLM usage}
    \item[] Question: Does the paper describe the usage of LLMs if it is an important, original, or non-standard component of the core methods in this research? Note that if the LLM is used only for writing, editing, or formatting purposes and does not impact the core methodology, scientific rigorousness, or originality of the research, declaration is not required.
    \item[] Answer: \answerNo{}
    \item[] Justification: Part of the supporting results were obtained after some minor interaction with LLMs. However, all of the proofs were written and proofread by humans.
    Therefore, LLMs did not play an important, original role nor did they contribute any non-standard components.
    \item[] Guidelines:
    \begin{itemize}
        \item The answer NA means that the core method development in this research does not involve LLMs as any important, original, or non-standard components.
        \item Please refer to our LLM policy (\url{https://neurips.cc/Conferences/2025/LLM}) for what should or should not be described.
    \end{itemize}

\end{enumerate}

\newpage
\appendix 

\renewcommand{\baselinestretch}{0.75}\normalsize
{\hypersetup{linkbordercolor=black,linkcolor=black}
\tableofcontents
}
\renewcommand{\baselinestretch}{1.0}\normalsize

\newpage

\section{Auxiliary Lemmas}

\begin{lemma}\label{thm:kurtosis_bound}
    Suppose \cref{assumption:noise} holds.
    Then $r_4 = \mathbb{E}\mathsfit{u}_i^4 \geq 1$.
\end{lemma}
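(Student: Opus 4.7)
The plan is to apply Jensen's inequality to the convex function $x \mapsto x^2$. Specifically, viewing $\mathsfit{u}_i^4 = (\mathsfit{u}_i^2)^2$ and letting $Y \triangleq \mathsfit{u}_i^2$, Jensen's inequality gives $\mathbb{E}[Y^2] \geq (\mathbb{E}[Y])^2$. Combined with the normalization $\mathbb{E}[\mathsfit{u}_i^2] = 1$ from \cref{assumption:noise}(i), this immediately yields
\[
    r_4 = \mathbb{E}[\mathsfit{u}_i^4] = \mathbb{E}[(\mathsfit{u}_i^2)^2] \geq (\mathbb{E}[\mathsfit{u}_i^2])^2 = 1 \; .
\]

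Equivalently, one could invoke Cauchy--Schwarz: $1 = \mathbb{E}[\mathsfit{u}_i^2] = \mathbb{E}[1 \cdot \mathsfit{u}_i^2] \leq \sqrt{\mathbb{E}[1]} \sqrt{\mathbb{E}[\mathsfit{u}_i^4]} = \sqrt{r_4}$, squaring both sides gives $r_4 \geq 1$. Either route is a one-line argument and there is no real obstacle; the statement is essentially the observation that the variance of $\mathsfit{u}_i^2$ is non-negative, i.e., $\mathrm{Var}(\mathsfit{u}_i^2) = r_4 - 1 \geq 0$. The finiteness assumption \cref{assumption:noise}(iii) ensures all quantities are well-defined so that Jensen's inequality applies without qualification.
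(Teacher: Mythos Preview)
Your proof is correct and matches the paper's own proof essentially verbatim: apply Jensen's inequality to the convex function $x\mapsto x^2$ with $Y=\mathsfit{u}_i^2$, then use $\mathbb{E}\mathsfit{u}_i^2=1$ from \cref{assumption:noise}.
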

\begin{proof}
    By Jensen's inequality $\mathbb{E}\mathsfit{u}_i^4 \geq {\lt(\mathbb{E}\mathsfit{u}_i^2\rt)}^2$.
    Lastly, $\mathbb{E}\mathsfit{u}_i^4 \geq {(\mathbb{E}\mathsfit{u}_i^2)}^2 = 1$ by \cref{assumption:noise}.
\end{proof}

\begin{lemma}\label{thm:noise}
    Suppose \cref{assumption:noise} holds and denote $\mathsfit{U} = \mathrm{diag}\lt(\mathsfit{u}_1, \ldots, \mathsfit{u}_d\rt)$.
    Then we have the following identities:
    \begin{enumerate*}[label=(\roman*)]
        \item $\mathbb{E} \mathsfit{u} \mathsfit{u}^{\top} = \mathrm{I}_d$, \label{eq:thm_noise_item1}
        \item $\mathbb{E} \mathsfit{U}^2 = \mathrm{I}_d$. \label{eq:thm_noise_item2}
    \end{enumerate*}
\end{lemma}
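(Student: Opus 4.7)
The plan is to verify both identities entry-by-entry using the standardization and i.i.d.\ properties encoded in \cref{assumption:noise,def:locscale}. Since these are statements about a diagonal and an outer-product matrix built from i.i.d.\ random variables with mean zero and unit variance, each claim reduces to checking the $(i,j)$ entry against the corresponding entry of $\mathrm{I}_d$.

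For part~(i), I would write $[\mathbb{E}\mathsfit{u}\mathsfit{u}^{\top}]_{ij} = \mathbb{E}[\mathsfit{u}_i \mathsfit{u}_j]$ and split into cases. The diagonal case $i=j$ is exactly $\mathbb{E}\mathsfit{u}_i^2 = 1$ by \cref{assumption:noise}(i), matching the diagonal of $\mathrm{I}_d$. The off-diagonal case $i\neq j$ uses the i.i.d.\ assumption from \cref{def:locscale} to factor $\mathbb{E}[\mathsfit{u}_i \mathsfit{u}_j] = (\mathbb{E}\mathsfit{u}_i)(\mathbb{E}\mathsfit{u}_j) = 0$, again by the zero-mean part of \cref{assumption:noise}(i). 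This matches the off-diagonal entries of $\mathrm{I}_d$, proving the identity.

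For part~(ii), I would exploit the diagonal structure of $\mathsfit{U}$: since $\mathsfit{U} = \mathrm{diag}(\mathsfit{u}_1,\ldots,\mathsfit{u}_d)$, its square is also diagonal with entries $\mathsfit{u}_i^2$. Taking expectations commutes with the diagonal embedding, so $\mathbb{E}\mathsfit{U}^2 = \mathrm{diag}(\mathbb{E}\mathsfit{u}_1^2,\ldots,\mathbb{E}\mathsfit{u}_d^2)$, which collapses to $\mathrm{I}_d$ again via \cref{assumption:noise}(i). There is essentially no obstacle: both statements are immediate consequences of the standardization and independence assumptions, and the proof is a two-line case analysis with no technical subtlety to manage.
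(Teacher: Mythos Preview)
Your proposal is correct and follows essentially the same approach as the paper's proof: an entry-by-entry case analysis using independence and zero mean for the off-diagonal of (i), unit variance for the diagonals, and the observation that $\mathsfit{U}^2$ is automatically diagonal for (ii).
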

\begin{proof}
    From \cref{assumption:noise}, we know that $\mathbb{E}\mathsfit{u}_i^2 = 1$.
    Then \labelcref{eq:thm_noise_item1} follows from
    \[
        {[ \mathbb{E}\mathsfit{u} \mathsfit{u}^{\top} ]}_{ij} = \mathbb{E}\mathsfit{u}_i \mathsfit{u}_j
        \quad=\quad
        \begin{cases}
            \mathbb{E}\mathsfit{u}_i^2 & \text{if $i = j$} \\
            \mathbb{E}\mathsfit{u}_i \mathbb{E}\mathsfit{u}_j & \text{if $i \neq j$}
        \end{cases}
        \quad=\quad
        \begin{cases}
            1 & \text{if $i = j$} \\
            0 & \text{if $i \neq j$}
        \end{cases}
        \; .
        \nonumber
    \]
    For \labelcref{eq:thm_noise_item2}, we only need to focus on the diagonal since the off-diagonal is already zero.
    \[
        {[ \mathbb{E}  \mathsfit{U}^2 ]}_{ii}
        \quad=\quad
        {[ \mathbb{E} {\mathrm{diag} \lt(\mathsfit{u}_1, \ldots, \mathsfit{u}_d\rt)}^2 ]}_{ii}
        \quad=\quad
        \mathbb{E} \mathsfit{u}_i^2
        \quad=\quad
        1 \; .
        \nonumber
    \]
\end{proof}

\begin{lemma}\label{thm:reparam_identity}
    Suppose \cref{assumption:noise} holds, $\mathcal{T}_{\lambda}$ is the reparametrization function for a location-scale family, and the linear parametrization is used.
    Then 
    \[
        \mathbb{E}\norm{\mathcal{T}_{\lambda}\lt(\mathsfit{u}\rt) - \mathcal{T}_{\lambda'}\lt(\mathsfit{u}\rt)}_2^2
        =
        \norm{\lambda - \lambda'}_2^2 \; .
        \nonumber
    \]
\end{lemma}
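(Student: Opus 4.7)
The plan is to expand the squared norm explicitly, use linearity of expectation, and invoke the moment identities from Lemma~\ref{thm:noise} to reduce everything to $\|m-m'\|_2^2 + \|C-C'\|_{\mathrm{F}}^2$, which is precisely $\|\lambda-\lambda'\|_2^2$ under the linear parametrization defined in \cref{eq:linear_parametrization}.

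Concretely, write $\lambda=(m,C)$ and $\lambda'=(m',C')$. By the definition of the reparametrization function,
\[
\mathcal{T}_\lambda(\mathsfit{u})-\mathcal{T}_{\lambda'}(\mathsfit{u}) = (C-C')\mathsfit{u} + (m-m').
\]
Expanding the squared Euclidean norm yields three terms: $\|(C-C')\mathsfit{u}\|_2^2$, the cross term $2\langle (C-C')\mathsfit{u},\, m-m'\rangle$, and $\|m-m'\|_2^2$. Taking expectation, the cross term vanishes because $\mathbb{E}\mathsfit{u}=0$ by the standardization in \cref{assumption:noise}. For the quadratic term, using the cyclic property of the trace together with \labelcref{eq:thm_noise_item1} from \cref{thm:noise} gives
\[
\mathbb{E}\|(C-C')\mathsfit{u}\|_2^2 = \mathrm{tr}\!\big((C-C')^\top(C-C')\,\mathbb{E}\mathsfit{u}\mathsfit{u}^\top\big) = \mathrm{tr}\!\big((C-C')^\top(C-C')\big) = \|C-C'\|_{\mathrm{F}}^2.
\]

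Combining, we obtain $\mathbb{E}\|\mathcal{T}_\lambda(\mathsfit{u})-\mathcal{T}_{\lambda'}(\mathsfit{u})\|_2^2 = \|m-m'\|_2^2 + \|C-C'\|_{\mathrm{F}}^2$. The final step is to identify this sum with $\|\lambda-\lambda'\|_2^2$ under the linear parametrization: since $\lambda$ is the flattened vector of the location coordinates and (diagonal) scale entries, the Euclidean norm on the parameter space decomposes exactly as $\|\lambda-\lambda'\|_2^2 = \|m-m'\|_2^2 + \|C-C'\|_{\mathrm{F}}^2$, which completes the proof. There is no real obstacle here beyond bookkeeping; the only thing worth noting is that the result holds more generally than mean-field (for any location-scale family with $C$ arbitrary), since the only property of $C$ used is the Frobenius identity, and the statement only requires \cref{assumption:noise} rather than diagonality of $C$.
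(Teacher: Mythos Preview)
Your proof is correct and follows essentially the same approach as the paper: expand the squared norm, kill the cross term via $\mathbb{E}\mathsfit{u}=0$, and reduce the quadratic term to $\|C-C'\|_{\mathrm{F}}^2$ using the trace identity and \cref{thm:noise}. Your closing remark about the result holding beyond the mean-field case is accurate but not something the paper explicitly notes.
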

\begin{proof}
    Denoting $\lambda = (m, C)$ and $\lambda^{\prime} = (m', C')$, 
    \[
        &\mathbb{E}\norm{\mathcal{T}_{\lambda}\lt(\mathsfit{u}\rt) - \mathcal{T}_{\lambda'}\lt(\mathsfit{u}\rt)}_2^2
        \nonumber
        \\
        &\;=
        \mathbb{E}\norm{\lt(C \mathsfit{u} + m\rt) - \lt(C' \mathsfit{u} - m'\rt)}_2^2
        \nonumber
        \\
        &\;=
        \mathbb{E}\norm{\lt(C - C'\rt) \mathsfit{u} + \lt(m - m'\rt)}_2^2
        \nonumber
        \\
        &\;=
        \mathbb{E}\norm{\lt(C - C'\rt) \mathsfit{u}}_2^2
        +
        2 \inner{ \lt(C - C'\rt) \mathbb{E}\mathsfit{u} ,\, m - m' }
        +
        \mathbb{E}\norm{m - m'}_2^2
        \nonumber
        \\
        &\;=
        \mathbb{E}\norm{\lt(C - C'\rt) \mathsfit{u}}_2^2
        +
        \mathbb{E}\norm{m - m'}_2^2 \; .
        &&\text{(\cref{assumption:noise})}
        \label{eq:reparam_identity}
    \]
    Lastly, 
    \[
        \mathbb{E}\norm{\lt(C - C'\rt) \mathsfit{u}}_2^2
        &=
        \mathbb{E}
        \mathsfit{u}^{\top} {\lt(C - C'\rt)}^{\top} \lt(C - C'\rt) \mathsfit{u}
        \nonumber
        \\
        &=
        \mathbb{E}
        \tr \mathsfit{u}^{\top} {\lt(C - C'\rt)}^{\top} \lt(C - C'\rt) \mathsfit{u}
        \nonumber
        \\
        &=
        \tr {\lt(C - C'\rt)}^{\top} \lt(C - C'\rt) \mathbb{E} \mathsfit{u} \mathsfit{u}^{\top} 
        &&\text{(cyclic property of trace)}
        \nonumber
        \\
        &=
        \tr {\lt(C - C'\rt)}^{\top} \lt(C - C'\rt) \mathrm{I}
        &&\text{(\cref{thm:noise})}
        \nonumber
        \\
        &=
        \tr {\lt(C - C'\rt)}^{\top} \lt(C - C'\rt)
        \nonumber
        \\
        &=
        \norm{C - C'}_{\mathrm{F}}^2 \; .
        \nonumber
    \]
    Combining this with \cref{eq:reparam_identity} yields the result.
\end{proof}

\begin{lemma}\label{thm:smoothness_implication}
    Suppose $f$ is $\mu$-strongly convex and \cref{assumption:expected_smoothness} holds.
    Then $f$ is $L$-Lipschitz smooth, while the constants satisfy the ordering
    \[
        \mu \quad\leq\quad L \quad\leq\quad \mathcal{L} \; .
        \nonumber
    \]
\end{lemma}
\begin{proof}
For all $\lambda, \lambda' \in \Lambda $, the unbiasedness of $\widehat{\nabla f}$ and Jensen's inequality states that
\[
    \norm{
    \nabla f\lt(\lambda\rt)
    -
    \nabla f\lt(\lambda'\rt)
    }_2^2
    \;=\;
    \norm{
    \mathbb{E}\widehat{\nabla f}\lt(\lambda; \mathsfit{u}\rt)
    -
    \mathbb{E} \widehat{\nabla f}\lt(\lambda'; \mathsfit{u}\rt)
    }_2^2
    \;\leq\;
    \mathbb{E}
    \norm{
    \widehat{\nabla f}\lt(\lambda; \mathsfit{u}\rt)
    -
    \widehat{\nabla f}\lt(\lambda'; \mathsfit{u}\rt)
    }_2^2 
    \nonumber
    \; .
\]
Then the $\mu$-strong convexity of $f$ and \cref{assumption:expected_smoothness} yields the inequality
\[
    \mu^2
    \norm{\lambda - \lambda'}_2^2
    \;\leq\;
    \norm{
    \nabla f\lt(\lambda\rt)
    -
    \nabla f\lt(\lambda'\rt)
    }_2^2
    \;\leq\;
    \mathbb{E}
    \norm{
    \widehat{\nabla f}\lt(\lambda; \mathsfit{u}\rt)
    -
    \widehat{\nabla f}\lt(\lambda'; \mathsfit{u}\rt)
    }_2^2
    \;\leq\;
    \mathcal{L}^2 \norm{\lambda - \lambda'}_2^2 \; ,
    \nonumber
\]
from which the statement follows immediately.
\end{proof}

\newpage

\section{Proofs}\label{section:proofs}

\subsection{Proofs of Results in \cref{section:background}}

\subsubsection{Proof of \cref{thm:spgd_complexity}}
\label{section:proof_spgd_complexity}
Under the stated assumptions, we first establish a convergence bound which bounds $\mathbb{E}\norm{\lambda_T - \lambda_*}_2^2$ after $T$ iterations under a given step size schedule.
We will invert this convergence bound into a complexity guarantee by identifying the conditions on $T$, $t_*$, and $\gamma_0$ that guarantee $\mathbb{E}\norm{\lambda_T - \lambda_*}_2^2 \leq \epsilon$ for a given $\epsilon > 0$.

\begin{theoremEnd}[%
    restate,
    text proof={},
    category=spgdconvergencebound,
    text link={\textit{Proof}. The \hyperref[proof:prAtEnd\pratendcountercurrent]{\textit{full proof}} is deferred to~\cref{section:proof_spgd_convergence_bound}, p.~\pageref{proof:prAtEnd\pratendcountercurrent}. \qed}
]{lemma}\label{thm:spgd_convergence_bound}
    Suppose $f$ is $\mu$-strongly convex, $h$ satisfies \cref{assumption:hregular}, and $\nabla f$ satisfies \cref{assumption:expected_smoothness,assumption:bounded_variance}. 
    Then, for the global optimum $\lambda_* = \argmin_{\lambda \in \Lambda} F\lt(\lambda\rt)$, any $t_*$ satisfying $4 \mathcal{L}^2/\mu^2 \leq t_* \leq T$, and the step size schedule in \cref{eq:stepsize_schedule}, the contraction coefficient $\rho \triangleq 1 - \mu \gamma_0$ satisfies $\rho \in (0, 1)$ and the last iterate of SPGD after $T$ iterations, $\lambda_T$, satisfies
{%
\setlength{\belowdisplayskip}{1ex} \setlength{\belowdisplayshortskip}{1ex}
\setlength{\abovedisplayskip}{1ex} \setlength{\abovedisplayshortskip}{1ex}
    \[
        \mathbb{E} \norm{\lambda_{T} - \lambda_*}_2^2
        \leq
        \norm{\lambda_{0} - \lambda_*}_2^2 \,
        {\rho}^{t_*} 
        \lt(\frac{{t_*}^2}{T^2}\rt)
        +
        2 \gamma_0
        \frac{\sigma^2}{\mu}
        \frac{{t_*}^2}{T^2}
        +
        \frac{8 \sigma^2}{\mu^2}
        \frac{T - t_*}{T^2}
        \; .
        \nonumber
    \]
}%
\end{theoremEnd}
\vspace{-1ex}
\begin{proofEnd}
    Since $f$ is strongly convex and $h$ is convex, $F$ is also strongly convex.
    This implies that $F$ has a unique global optimum, which we denote as $\lambda_*$.
    Furthermore, under the stated assumptions on $h$, the proximal operator $\mathrm{prox}_{\gamma h}\lt(\cdot\rt)$ is non-expansive for any $\gamma \in (0, \infty)$~\citep[Lemma 8.17]{garrigos_handbook_2023} and any $\lambda, \lambda' \in \mathbb{R}^p$ such that
    \[
        \norm{  \mathrm{prox}_{\gamma h}\lt(\lambda\rt) - \mathrm{prox}_{\gamma h}\lt(\lambda'\rt) }_2 \leq \norm{\lambda - \lambda'}_2
        \label{eq:prox_non_expansive}
    \]
    and $\lambda_*$ is the fixed-point of the deterministic proximal gradient descent step~\citep[Lemma 8.18]{garrigos_handbook_2023} such that
    \[
        \mathrm{prox}_{\gamma h}\lt(\lambda_* - \gamma \nabla f\lt(\lambda_*\rt)\rt)
        =
        \lambda_* \; .
        \label{eq:prox_fixed_point}
    \]
    Using these facts, 
    \[
        \norm{\lambda_{t+1} - \lambda_*}_2^2 
        &\leq
        \norm{ \mathrm{prox}_{\gamma_t h}( \lambda_{t} - \gamma_t \widehat{\nabla f}\lt(\lambda_t; \mathsfit{u}\rt) ) - \mathrm{prox}_{\gamma_t h}\lt( \lambda_* - \gamma_t \nabla f\lt(\lambda_*\rt) \rt) }_2^2 
        &&\text{(\cref{eq:prox_fixed_point})}
        \nonumber
        \\
        &\leq
        \norm{  \lambda_{t} - \gamma_t \widehat{\nabla f}\lt(\lambda_t; \mathsfit{u}\rt) ) -  \lambda_* + \gamma_t \nabla f\lt(\lambda_*\rt) }_2^2 \; .
        &&\text{(\cref{eq:prox_non_expansive})}
        \nonumber
    \]
    Expanding the square, 
    \[
        \norm{\lambda_{t+1} - \lambda_*}_2^2 
        &\leq
        \norm{\lambda_{t} - \lambda_*}_2^2
        -
        2 \gamma_t
        \inner{
            \widehat{\nabla f}\lt(\lambda_t; \mathsfit{u}\rt) - \nabla f\lt(\lambda_*\rt) ,
            \lambda_t - \lambda_*
        }
        +
        \gamma_t^2
        \norm{ 
            \widehat{\nabla f}\lt(\lambda_t; \mathsfit{u}\rt) - \nabla f\lt(\lambda_*\rt)
        }_2^2
        \; .
        \nonumber
    \]
    Denoting the filtration of the $\sigma$-field of the iterates generated up to iteration $t$ as $\mathcal{F}_t$, 
    \[
        &
        \mathbb{E}\lt[\norm{\lambda_{t+1} - \lambda_*}_2^2 \mid \mathcal{F}_t \rt]
        \nonumber
        \\
        &\;\leq
        \norm{\lambda_{t} - \lambda_*}_2^2
        -
        2 \gamma_t^2
        \inner*{
            \mathbb{E}\lt[\widehat{\nabla f}\lt(\lambda_t; \mathsfit{u}\rt) \mid \mathcal{F}_t \rt]
            - \nabla f\lt(\lambda_*\rt) ,
            \lambda_t - \lambda_*
        }
        +
        \gamma_t^2
        \mathbb{E}\lt[
        \norm{ 
            \widehat{\nabla f}\lt(\lambda_t; \mathsfit{u}\rt) - \nabla f\lt(\lambda_*\rt)
        }_2^2
        \mid \mathcal{F}_t \rt]
        \nonumber
        \\
        &=
        \norm{\lambda_{t} - \lambda_*}_2^2
        -
        2 \gamma_t
        \inner*{
            \nabla f\lt(\lambda_t\rt) - \nabla f\lt(\lambda_*\rt), \lambda_t - \lambda_*
        }
        +
        \gamma_t^2
        \mathbb{E}\lt[
        \norm{ 
            \widehat{\nabla f}\lt(\lambda_t; \mathsfit{u}\rt) - \nabla f\lt(\lambda_*\rt)
        }_2^2
        \mid \mathcal{F}_t \rt] 
        \; ,
        \label{eq:thm_spgd_convergence_eq1}
    \]
    where the equality follows from the fact that $\widehat{\nabla f}$ is unbiased conditional on any $\lambda_t \in \Lambda$.

    From the $\mu$-strong convexity of $f$,
    \[
        &-
        2 \gamma_t
        \inner*{
            \nabla f\lt(\lambda_t\rt) - \nabla f\lt(\lambda_*\rt), \lambda_t - \lambda_*
        }
        \nonumber
        \\
        &\qquad=
        -
        2 \gamma_t
        \inner*{
            \nabla f\lt(\lambda_t\rt) , \lambda_t - \lambda_*
        }
        + 
        2 \gamma_t
        \inner*{
            \nabla f\lt(\lambda_*\rt) , \lambda_t - \lambda_*
        }
        \nonumber
        \\
        &\qquad\leq
        -
        \gamma_t \mu \norm{\lambda_t - \lambda_*}_2^2
        -
        2 \gamma_t 
        \big\{
        f\lt(\lambda_t\rt) - f\lt(\lambda_*\rt)
        -
        \inner*{
            \nabla f\lt(\lambda_*\rt) , \lambda_t - \lambda_*
        }
        \big\}
        && \text{($\mu$-strong convexity of $f$)}
        \nonumber
        \\
        &\qquad=
        -
        \gamma_t \mu \norm{\lambda_t - \lambda_*}_2^2
        -
        2 \gamma_t \mathrm{D}_f\lt(\lambda_t, \lambda_*\rt)
        && \text{(\cref{eq:bregman})}
        \label{eq:spgd_decrease_objective}
        \; .
    \]
    The gradient variance at $\lambda_t$, on the other hand, can be compared against the gradient variance at $\lambda_*$ through the variance transfer strategy as 
    \[
        &\gamma_t^2
        \mathbb{E}\lt[
        \norm{ 
            \widehat{\nabla f}\lt(\lambda_t; \mathsfit{u}\rt) - \nabla f\lt(\lambda_*; \mathsfit{u}\rt)
        }_2^2
        \mid \mathcal{F}_t \rt] 
        \nonumber
        \\
        &\;=
        \gamma_t^2
        \mathbb{E}\lt[
        \norm{ 
            \widehat{\nabla f}\lt(\lambda_t; \mathsfit{u}\rt) - \widehat{\nabla f}\lt(\lambda_*;\mathsfit{u}\rt) + \widehat{\nabla f}\lt(\lambda_*; \mathsfit{u}\rt) - \nabla f\lt(\lambda_*; \mathsfit{u}\rt)
        }_2^2
        \mid \mathcal{F}_t \rt] 
        \nonumber
        \\
        &\;\leq
        2 \,
        \gamma_t^2
        \mathbb{E}\lt[
        \norm{ 
            \widehat{\nabla f}\lt(\lambda_t; \mathsfit{u}\rt) - \widehat{\nabla f}\lt(\lambda_*; \mathsfit{u}\rt) 
        }_2^2
        \mid \mathcal{F}_t \rt] 
        +
        2
        \gamma_t^2
        \mathbb{E}\lt[
        \norm{ 
            \widehat{\nabla f}\lt(\lambda_*; \mathsfit{u}\rt) - \nabla f\lt(\lambda_*; \mathsfit{u}\rt)
        }_2^2
        \mid \mathcal{F}_t \rt] 
        \nonumber
        &&\text{(Young's inequality)}
        \\
        &\;\leq
        2 \gamma_t^2 \mathcal{L}^2 \norm{\lambda_t - \lambda_*}_2^2
        +
        2 
        \gamma_t^2 \sigma^2 \; , 
        \nonumber
        &&\text{(\cref{assumption:expected_smoothness,assumption:bounded_variance})}
        \\
        &=
        4 \gamma_t^2 \frac{\mathcal{L}^2}{\mu} \, \Big(
            f\lt(\lambda_t\rt) - f\lt(\lambda_*\rt) - \inner{ \nabla f\lt(\lambda_t\rt), \lambda_t - \lambda_* }
        \Big)
        + 2 \gamma_t^2 \sigma^2
        &&\text{($\mu$-strong convexity of $f$)}
        \nonumber
        \\
        &\;=
        4 \gamma_t^2 \frac{\mathcal{L}^2}{\mu} \, \mathrm{D}_f\lt(\lambda_t, \lambda_*\rt)
        + 2 \gamma_t^2 \sigma^2 \; .
        \label{eq:spgd_gradient_variance}
    \]
    
    Applying \cref{eq:spgd_decrease_objective,eq:spgd_gradient_variance} to \cref{eq:thm_spgd_convergence_eq1}, 
    \[
        \mathbb{E}\lt[\norm{\lambda_{t+1} - \lambda_*}_2^2 \mid \mathcal{F}_t \rt]        
        &\leq
        \norm{\lambda_{t} - \lambda_*}_2^2
        -
        \gamma_t \Big(
        \mu
        \norm{ \lambda_t - \lambda_* }_2^2
        +
        2 \, \mathrm{D}_f\lt(\lambda_t, \lambda_*\rt)
        \Big)
        +
        2 \gamma_t^2 \lt(
        2 \frac{\mathcal{L}^2}{\mu} \mathrm{D}_f\lt(\lambda_t, \lambda_*\rt) + \sigma^2
        \rt)
        \nonumber
        \\
        &=
        \lt(
        1 - \gamma_t \mu 
        \rt)
        \norm{\lambda_{t} - \lambda_*}_2^2
        +
        2
        \gamma_t
        \lt( 2 \gamma_t \frac{\mathcal{L}^2}{\mu} - 1 \rt)
        \mathrm{D}_f\lt(\lambda_t, \lambda_*\rt)
        +
        2 \gamma_t^2 \sigma^2
        \; .
        \nonumber
    \]
    Taking expectation over all randomness, we obtain our general partial contraction bound
    \[
        \mathbb{E} \norm{\lambda_{t+1} - \lambda_*}_2^2
        \leq
        \lt(
        1 - \gamma_t \mu 
        \rt)
        \mathbb{E}\norm{\lambda_{t} - \lambda_*}_2^2
        +
        2
        \gamma_t
        \lt( 2 \gamma_t \frac{\mathcal{L}^2}{\mu} - 1 \rt)
        \mathbb{E} \lt[ \mathrm{D}_f\lt(\lambda_t, \lambda_*\rt) \rt]
        +
        2 \gamma_t^2 \sigma^2
        \; .
        \label{eq:thm_spgd_convergence_general_onestep_convergence}
    \]

    Due to the form of the step size schedule, SPGD operates in two different regimes: the first stage with a fixed step size $\gamma_t = \gamma_0$ ($t \in \{0, \ldots, t_*\}$) and the second stage with a decreasing step size $\gamma_{t+1} < \gamma_t$ ($t \in \{t_* + 1, \ldots, T\}$).
    In the first stage, $\gamma_t = \gamma_0 \leq \frac{\mu}{2 \mathcal{L}^2} $.
    Then the Bregman divergence term in \cref{eq:thm_spgd_convergence_general_onestep_convergence} is negative such that
    \[
        \mathbb{E} \norm{\lambda_{t+1} - \lambda_*}_2^2
        \leq
        \lt(
        1 - \gamma_t \mu 
        \rt)
        \mathbb{E}\norm{\lambda_{t} - \lambda_*}_2^2
        +
        2 \gamma_t^2 \sigma^2
    \]
    Unrolling the recursion yields
    \[
        \mathbb{E} \norm{\lambda_{t_*} - \lambda_*}_2^2
        &\leq
        {\lt(
        1 - \gamma_0 \mu 
        \rt)}^{t_*}
        \norm{\lambda_{0} - \lambda_*}_2^2 
        + 
        2 \gamma_0^2 \sigma^2
        \sum_{t=0}^{t_* - 1}
        {\lt(
        1 - \gamma_0 \mu 
        \rt)}^t
        \nonumber
        \\
        &\leq
        {\lt(
        1 - \gamma_0 \mu 
        \rt)}^{t_*}
        \norm{\lambda_{0} - \lambda_*}_2^2 
        + 
        2 \gamma_0 
        \frac{\sigma^2}{\mu}
        \nonumber
        &&\text{(geometric series sum formula)}
        \\
        &\leq
        \rho^{t_*}
        \norm{\lambda_{0} - \lambda_*}_2^2 
        + 
        2 \gamma_0 
        \frac{\sigma^2}{\mu}
        \; .
        \label{eq:thm_spgd_convergence_eq2}
    \]
    From \cref{thm:smoothness_implication}, we deduce that $\gamma_0 \mu = \mu^2/(2 \mathcal{L}^2) \leq 1/2$, which implies $\rho \in (0, 1)$.

    We now turn to the second stage, where the step size starts decreasing.
    Notice that \cref{eq:stepsize_schedule} satisfies
    \[
        \gamma_t
        \quad=\quad
        \frac{1}{\mu} \frac{2 t + 1}{ {(t + 1)}^2 }
        \quad\leq\quad
        \frac{1}{\mu} \frac{2 t_* + 1}{ {(t_* + 1)}^2 }
        \quad\leq\quad
        \frac{1}{\mu} \frac{2}{t_*}
        \quad\leq\quad
        \frac{1}{\mu} \frac{2 \mu^2}{4 \mathcal{L}^2}
        \quad\leq\quad
        \frac{\mu}{2 \mathcal{L}^2} \; .
        \nonumber
    \]
    Therefore, $\gamma_t  \leq \frac{\mu}{2 \mathcal{L}^2}$ for all $t \geq 0$.
    Again, the Bregman term in \cref{eq:thm_spgd_convergence_general_onestep_convergence} is negative such that
    \[
        \mathbb{E}\norm{\lambda_{t+1} - \lambda_*}_2^2
        \leq
        \lt(
        1 - \gamma_t \mu 
        \rt)
        \mathbb{E}\norm{\lambda_{t} - \lambda_*}_2^2
        +
        2 \gamma_t^2 \sigma^2 \; .
        \nonumber
    \]
    Subtituting $\gamma_t$ with the choice in \cref{eq:stepsize_schedule}, we obtain 
    \[
        \mathbb{E} \norm{\lambda_{t+1} - \lambda_*}_2^2
        &\leq
        \lt( 1 -  \frac{ 2t + 1 }{ {\lt(t + 1\rt)}^2 } \rt)
        \mathbb{E}\norm{\lambda_{t} - \lambda_*}_2^2 
        + 
        2
        \frac{\sigma^2}{\mu^2}
        \frac{ {\lt( 2t + 1 \rt)}^2 }{ {\lt(t + 1\rt)}^4 }
        \nonumber
        \\
        &=
        \frac{ t^2 }{ {\lt(t + 1\rt)}^2 }
        \mathbb{E}\norm{\lambda_{t} - \lambda_*}_2^2 
        + 
        2
        \frac{\sigma^2}{\mu^2}
        \frac{ {\lt( 2t + 1 \rt)}^2 }{ {\lt(t + 1\rt)}^4 } \; .
        \nonumber
    \]
    Multiplying ${(t+1)}^2$ to both sides, 
    \[
        {\lt(t + 1\rt)}^2
        \mathbb{E} \norm{\lambda_{t+1} - \lambda_*}_2^2
        &\leq
        t^2
        \mathbb{E}\norm{\lambda_{t} - \lambda_*}_2^2 
        + 
        2
        \frac{\sigma^2}{\mu^2}
        \frac{ {\lt( 2t + 1 \rt)}^2 }{ {\lt(t + 1\rt)}^2 }  \; .
        \nonumber
    \]
    Let us choose the Lyapunov function $V_{t} \triangleq {\lt(t + 1\rt)}^2 \mathbb{E} \norm{\lambda_{t+1} - \lambda_*}_2^2 $ .
    Then the discrete derivative of the Lyapunov,
    \[
        V_{t+1} - V_t 
        \quad\leq\quad
        2
        \frac{\sigma^2}{\mu^2}
        \frac{ {\lt( 2t + 1 \rt)}^2 }{ {\lt(t + 1\rt)}^2 }
        \quad\leq\quad
        8 \frac{\sigma^2}{\mu^2} \; ,
        \nonumber
    \]
    shows that the energy is increasing only by a constant.
    By integrating the Lyapunov over the time interval $t = t_*, \ldots, T - 1$, 
    \[
        &\qquad
        &V_{T} - V_{t_*}
        \;&\leq\;
        8 \frac{\sigma^2}{\mu^2}  \lt( T - t_*\rt) 
        \nonumber
        \\
        &\Leftrightarrow\qquad
        &V_{T} 
        \;&\leq\;
        V_{t_*}
        +
        8 \frac{\sigma^2}{\mu^2} \lt( T - t_*\rt) 
        \nonumber
        \\
        &\Leftrightarrow\qquad
        &T^2 \, \mathbb{E} \norm{\lambda_{T} - \lambda_*}_2^2
        \;&\leq\;
        t_*^2 \mathbb{E} \norm{\lambda_{t_*} - \lambda_*}_2^2
        +
        8 \frac{\sigma^2}{\mu^2} \lt( T - t_*\rt) 
        \nonumber
        \\
        &\Leftrightarrow\qquad
        &\mathbb{E} \norm{\lambda_{T} - \lambda_*}_2^2
        \;&\leq\;
        \frac{{t_*}^2}{T^2}
        \mathbb{E} \norm{\lambda_{t_*} - \lambda_*}_2^2
        +
        8 \frac{\sigma^2}{\mu^2} \frac{T - t_*}{T^2}
        \; .
        \nonumber
    \]
    Substuting $\norm{\lambda_{t_*} - \lambda_*}_2^2$ with the error in 
    \cref{eq:thm_spgd_convergence_eq2}, 
    \[
        \mathbb{E} \norm{\lambda_{T} - \lambda_*}_2^2
        &\leq
        \lt(
        \rho^{t_*}
        \norm{\lambda_{0} - \lambda_*}_2^2 \,
        + 
        2 \gamma_0
        \frac{\sigma^2}{\mu}
        \rt)
        \frac{{t_*}^2}{T^2}
        +
        8 \frac{\sigma^2}{\mu^2} \frac{T - t_*}{T^2}
        \nonumber
        \\
        &=
        \norm{\lambda_{0} - \lambda_*}_2^2 \,
        \rho^{t_*}
        \frac{{t_*}^2}{T^2}
        +
        2 \gamma_0
        \frac{\sigma^2}{\mu}
        \frac{{t_*}^2}{T^2}
        +
        \frac{8 \sigma^2}{\mu^2}
        \frac{T - t_*}{T^2}
        \; ,
        \label{thm:spgd_convergence_precise}
    \]
    which is our stated result.
\end{proofEnd}

This is a slight generalization of a result~\citep[Theorem 7]{domke_provable_2023}, where the switching time $t_*$ was fixed to some $t_* \propto \mathcal{L}/\mu$.
While the choice of $t_* \propto \mathcal{L}/\mu$ results in the typical $\mathrm{O}(1/\epsilon)$ asymptotic complexity, it suffers from a suboptimal polynomial dependence on the initialization error $\Delta = \norm{\lambda_0 - \lambda_*}_2$.
Picking an alternative $t_*$, which is what we do in the proof, improves the iteration complexity to $\mathrm{O}\big(1/\epsilon + 1/\sqrt{\epsilon} \log \Delta^2 + \log (\Delta^2/\epsilon) \big)$.

\printProofs[spgdcomplexity]

\newpage
\subsubsection{Proof of \cref{thm:spgd_convergence_bound}}
\label{section:proof_spgd_convergence_bound}

The proof closely mirrors the strategy of~\citet[Theorem 12.9]{garrigos_handbook_2023}, which is a combination of previous analyses of SPGD~\citep{khaled_unified_2023,gorbunov_unified_2020} with the analysis of SGD strongly convex objectives with a decreasing step size schedule~\citep{gower_sgd_2019}.
The main difference is that \citeauthor{garrigos_handbook_2023} utilize a different condition on the gradient variance instead of \cref{assumption:expected_smoothness}.
Specificially, they assume that, for all $\lambda, \lambda' \in \Lambda$, there exists some function of $L\lt(u\rt)  :  \mathrm{supp}\lt(\mathsfit{u}\rt) \to [0, \infty)$ such that, for each $u \in \mathrm{supp}\lt(\mathsfit{u}\rt)$, the function $\widehat{\nabla f}\lt(\lambda; u\rt) : \Lambda \to \mathbb{R}^p$ is $L\lt(u\rt)$-smooth with respect to $\lambda$.
This then enables the use of the ``convex expected smoothness''~\citep{gorbunov_unified_2020,khaled_unified_2023} condition, which postulates that, for all $\lambda \in \Lambda$, there exists some $\mathcal{L} < \infty$ such that
\[
    \mathbb{E}\norm{ \widehat{\nabla f}\lt(\lambda; \mathsfit{u}\rt) - \widehat{\nabla f}\lt(\lambda'; \mathsfit{u}\rt) }_2^2
    \leq
    \mathcal{L}^2 \mathrm{D}_f\lt(\lambda, \lambda'\rt) \; ,
    \label{eq:convex_expected_smoothness}
\]
where 
\[
    \mathrm{D}_f\lt(\lambda, \lambda'\rt) \triangleq f(\lambda) - f(\lambda') - \inner{\nabla f(\lambda'), \lambda - \lambda'}
    \label{eq:bregman}
\]
is the Bregman divergence associated with $f$.
Note that \cref{assumption:expected_smoothness} and the $\mu$-strong convexity of $f$ implies \cref{eq:convex_expected_smoothness}.
Therefore, under our assumptions, one can invoke the results that assume \cref{eq:convex_expected_smoothness}, which was the strategy by some previous analyses of BBVI~\citep{ko_provably_2024,kim_convergence_2023}.
Here, we will take a more straightforward approach that uses \cref{assumption:expected_smoothness} directly in the convergence proof, but the results are identical to the indirect approach of establishing \cref{eq:convex_expected_smoothness}.

\printProofs[spgdconvergencebound]


\newpage
\subsection{Proofs of Results in \cref{section:main_results}}

\subsubsection{Proof of \cref{thm:bbvi_complexity}}
\label{section:proof_bbvi_complexity}
\printProofs[bbvicomplexity]

\subsubsection{Proof of \cref{thm:special_case_gaussian}}
\label{section:proof_special_case_gaussian}

The result follows from a well-known bound on the expected maximum of sub-exponential random variables.
We state the proof for completeness.

\begin{theoremEnd}[%
    category=expectedmaximummgf,
    text proof={},
    text link={},
]{lemma}\label{thm:expected_maximum_mgf}
    Let \(\mathsfit{x}_1, \ldots, \mathsfit{x}_d\) be i.i.d. random variables.
    Suppose there exists some $t > 0$ such that their moment-generating function (MGF) satisfies $M_{\mathsfit{x}_i}\left(t\right) < \infty$.
    Then
{%
\setlength{\belowdisplayskip}{0ex} \setlength{\belowdisplayshortskip}{0ex}
\setlength{\abovedisplayskip}{0ex} \setlength{\abovedisplayshortskip}{0ex}
    \[
        \mathbb{E} \max_{i=1, \dots, d} \mathsfit{x}_i \quad\leq\quad \frac{1}{t} \lt( \log M_{\mathsfit{x}_i}\left(t\right)  + \log d\rt) \; .
        \nonumber
    \]
}%
\end{theoremEnd}
\vspace{-1ex}
\begin{proofEnd}
    \[
        \mathbb{E}\lt[ t \max_{i=1, \dots, d} \mathsfit{x}_i  \rt]
        &=
        \log \exp\left(
        \mathbb{E}\lt[ t \max_{i=1, \dots, d} \mathsfit{x}_i  \rt]
        \right)
        \nonumber
        \\
        &\leq
        \log \mathbb{E} \exp\left(t \max_{i=1, \dots, d} \mathsfit{x}_i \right)
        &&\quad\text{(Jensen's inequality)}
        \nonumber
        \\
        &=
        \log \mathbb{E}\max_{i=1, \dots, d} \exp\left(t  \mathsfit{x}_i \right)
        \nonumber
        \\
        &\leq
        \log \mathbb{E} \sum_{i=1}^d \exp\left(t  \mathsfit{x}_i \right)
        \nonumber
        \\
        &=
        \log \sum_{i=1}^d M_{\mathsfit{x}_i}\lt(t\rt)
        &&\quad\text{(Definition of MGFs)}
        \nonumber
        \\
        &=
        \log \lt( d M_{\mathsfit{x}_i}\lt(t\rt) \rt) \; .
        &&\quad\text{($\mathsfit{x}_1, \ldots, \mathsfit{x}_d$ are i.i.d.)}
        \nonumber
    \]
    Dividing both sides by \(t\) yields the statement.
\end{proofEnd}
\printProofs[expectedmaximummgf]

Applying \cref{thm:expected_maximum_mgf} to $\mathsfit{u}_i^2$ yields the result.

\printProofs[specialcasegaussian]

\newpage
\subsubsection{Proof of \cref{thm:special_case_studentt}}\label{section:proof_special_case_studentt}

The result follows from the following moment-based bound on the expected maximum of random variables, which is a non-asymptotic refinement of the proof by~\citet{rana_answer_2017}.

\begin{theoremEnd}[%
    category=expectedmaximumhighestmoment,
    text link={},
    text proof={},
]{lemma}\label{thm:expected_maximum_highestmoment}
    Let \(\mathsfit{x}_1, \ldots, \mathsfit{x}_d\) be i.i.d. non-negative random variables where, for \(k \geq 2\), their \(k\)th moment is finite. That is, \(\mathbb{E}\mathsfit{x}_i^k = r_{k} < \infty\).
    Then
{%
\setlength{\belowdisplayskip}{1ex} \setlength{\belowdisplayshortskip}{1ex}
\setlength{\abovedisplayskip}{0ex} \setlength{\abovedisplayshortskip}{0ex}
    \[
        \mathbb{E}\max_{i = 1, \ldots, d}  \mathsfit{x}_i
        \quad\leq\quad
        d^{1/k}
        \lt(k/(k - 1)\rt)^{(k-1)/k} r_k^{1/k} \; .
        \nonumber
    \]
}%
\end{theoremEnd}
\vspace{-1ex}
\begin{proofEnd}
    For any \(\epsilon > 0\), we have
    \[
        \mathbb{E}\max_{i = 1, \ldots, d}  \mathsfit{x}_i
        &=
        \int^{\epsilon d^{1/k}}_0 \mathbb{P}\left[ \max_{i = 1, \ldots, d} \mathsfit{x}_i \geq t \right] \, \mathrm{d}t
        +
        \int_{\epsilon d^{1/k}}^{\infty} \mathbb{P}\left[ \max_{i = 1, \ldots, d} \mathsfit{x}_i \geq t \right] \, \mathrm{d}t
        \nonumber
        \\
        &\leq
        \int^{\epsilon d^{1/k}}_0 \mathrm{d}t
        +
        \int_{\epsilon d^{1/k}}^{\infty} d \, \mathbb{P}\left[ \mathsfit{x}_i \geq t \right] \, \mathrm{d}t
        &&\text{(i.i.d. and \(\mathbb{P}\lt[\cdot \rt] \leq 1\))}
        \nonumber
        \\
        &=
        d^{1/k}
        \lt(
        \epsilon 
        +
        \frac{1}{k \epsilon^{k-1}}
        \int_{\epsilon d^{1/k}}^{\infty} k {\lt( \epsilon d^{1/k} \rt)}^{k - 1} \, \mathbb{P}\left[ \mathsfit{x}_i \geq t \right] \, \mathrm{d}t
        \rt)
        \nonumber
        \\
        &\leq
        d^{1/k}
        \lt(
        \epsilon 
        +
        \frac{1}{k \epsilon^{k-1}}
        \int_{\epsilon d^{1/k}}^{\infty} k t^{k - 1} \, \mathbb{P}\left[ \mathsfit{x}_i \geq t \right] \, \mathrm{d}t
        \rt)
        &&\text{($\epsilon d^{1/k} \leq t$)}
        \nonumber
        \\
        &\leq
        d^{1/k}
        \lt(
        \epsilon 
        +
        \frac{1}{k \epsilon^{k-1}}
        \int_{0}^{\infty} k t^{k - 1} \, \mathbb{P}\left[ \mathsfit{x}_i \geq t \right] \, \mathrm{d}t
        \rt)
        \nonumber
        &&\text{(Decreased lower limit of integral)}
        \; .
        \nonumber
    \]
    Now, from the definition of moments, we know that 
    \[
        \int_{0}^{\infty} k t^{k-1} \, \mathbb{P}\left[ \mathsfit{x}_i \geq t \right] \, \mathrm{d}t
        &=
        \int_{0}^{\infty}
        \int_{-\infty}^{\infty} 
        k t^{k-1} \, 
        \mathds{1}_{\mathsfit{x}_i > t} 
        \,  \mathrm{d}\mathbb{P}[x_i]
        \, \mathrm{d}t
        \nonumber
        \\
        &=
        \int_{-\infty}^{\infty} 
        \int_{0}^{\infty} 
        k t^{k-1} \, 
        \mathds{1}_{\mathsfit{x}_i > t} \, 
        \, \mathrm{d}t
        \, \mathrm{d}\mathbb{P}[x_i]
        &&\quad\text{(Fubini's Theorem)}
        \nonumber
        \\
        &=
        \int_{-\infty}^{\infty} 
        \int_{0}^{x_i} 
        k t^{k-1} 
        \, \mathrm{d}t
        \, \mathrm{d}\mathbb{P}[x_i]
        \nonumber
        \\
        &=
        \int_{-\infty}^{\infty} 
        x_i^k \, 
        \, \mathrm{d}\mathbb{P}[x_i]
        \nonumber
        \\
        &=
        r_k  \; .
        \nonumber
    \]
    Therefore, 
    \[
        \mathbb{E}\max_{i = 1, \ldots, d}  \mathsfit{x}_i
        &\leq
        d^{1/k}
        \lt(
        \epsilon 
        +
        \frac{1}{k \epsilon^{k-1}} r_k
        \rt)
        \; .
        \nonumber
    \]
    The bound is minimized when setting
    \[
        \epsilon = {\lt( \frac{k - 1}{k} r_k \rt)}^{1/k} \; .
        \nonumber
    \]
    Then
    \[
        \mathbb{E}\max_{i = 1, \ldots, d}  \mathsfit{x}_i
        &\leq
        d^{1/k}
        \lt(
        {\lt( \frac{k - 1}{k} r_k \rt)}^{1/k}
        +
        \frac{1}{k} m_k
        {\lt( \frac{k - 1}{k} r_k \rt)}^{-(k - 1)/k}
        \rt)
        \nonumber
        \\
        &=
        d^{1/k}
        \lt(
        {\lt( \frac{k - 1}{k} r_k \rt)}^{1/k}
        +
        \frac{1}{k-1}
        {\lt( \frac{k - 1}{k} r_k \rt)}^{1/k}
        \rt)
        \nonumber
        \\
        &=
        d^{1/k}
        \lt(
        1
        +
        \frac{1}{k-1}
        \rt)
        {\lt( \frac{k - 1}{k} r_k \rt)}^{1/k}
        \nonumber
        \\
        &=
        d^{1/k}
        {\lt(
        \frac{k}{k - 1}
        \rt)}^{(k - 1)/k}
        r_k^{1/k} \; .
        \nonumber
    \]
\end{proofEnd}
\printProofs[expectedmaximumhighestmoment]

If the $k$th moment of $\mathsfit{u}_i^2$ is finite, this then immediately implies a polynomial $\mathrm{O}(d^{1/k})$ bound on $g$.

\printProofs[specialcasestudentt]

\newpage
\subsection{Proofs of Results in \cref{section:gradient_variance_analysis}}
\subsubsection{Proof of \cref{thm:gradient_variance_upperbound_meanfield_general}}\label{section:proof_gradient_variance_upperbound_meanfield_general}

Under the assumption that $\nabla^2 \ell \preceq L \mathrm{I}_d$ and twice differentiability, it is well known that $\nabla^2 \ell \preceq L \mathrm{I}_d \; \Rightarrow \text{$\ell$ is $L$-smooth}$.
We will prove a supporting result analogous to this under \cref{assumption:almost_constant_hessian}, which will allow us to bound the relative growth of $\nabla \ell$.

\begin{theoremEnd}[%
    restate,
    category=weightednormsmoothness,
    text link={\textit{Proof.} The \hyperref[proof:prAtEnd\pratendcountercurrent]{\textit{full proof}} is deferred to~\cref{section:proof_weighted_norm_smoothness}, p.~\pageref{proof:prAtEnd\pratendcountercurrent}. \qed},
    text proof={Proof.}
]{lemma}\label{thm:weighted_norm_smoothness}
    Suppose $\ell : \mathbb{R}^d \to \mathbb{R}$ satisfies \cref{assumption:almost_constant_hessian}.
    Then, for any \(W \in \mathbb{R}^{d \times d}\) satisfying $\norm{W}_2 < \infty$,
    {%
    \setlength{\belowdisplayskip}{1ex} \setlength{\belowdisplayshortskip}{1ex}
    \setlength{\abovedisplayskip}{1ex} \setlength{\abovedisplayshortskip}{1ex}
    \[
        \norm{ W \lt( \nabla \ell\lt(z\rt) - \nabla \ell\lt(z'\rt) \rt) }_{2}
        \leq
        \norm*{W H \lt( z - z' \rt) }_2
        +
        \delta
        \norm{W}_2
        \norm{z - z'}_2 \; .
        \nonumber
    \]
    }%
\end{theoremEnd}
\vspace{-1ex}
\begin{proofEnd}
From twice differentiability of $\ell$ (\cref{assumption:almost_constant_hessian}) and the fundamental theorem of calculus, we know that
\[
    \norm{ W \lt( \nabla \ell\lt(z\rt) - \nabla \ell\lt(z'\rt) \rt) }_{2}
    &=
    \norm*{ 
        W
        \int_0^1
        \nabla^2 \ell\lt(t z + (1 - t) z'\rt) \lt( z - z'\rt) 
        \, \mathrm{d}t
    }_{2} \; .
    \nonumber
\]    
Denoting $z_t \triangleq t z + (1 - t) z'$ for clarity, 
\[
    &\norm{ W \lt( \nabla \ell\lt(z\rt) - \nabla \ell\lt(z'\rt) \rt) }_{2}
    \nonumber
    \\ 
    &\quad=
    \norm*{ 
        \int_0^1
        W
        \nabla^2 \ell\lt(z_t\rt) \lt( z - z'\rt) 
        \, \mathrm{d}t
    }_{2}
    \nonumber
    \\
    &\quad\leq
    \int_0^1
    \norm{ W \nabla^2 \ell\lt(z_t\rt) \lt( z - z'\rt) }_2
    \, \mathrm{d}t
    \nonumber
    &&\text{(Jensen's inequality)}
    \\
    &\quad=
    \int_0^1
    \norm*{
        W \lt( \nabla^2 \ell\lt(z_t\rt) - H + H \rt) \lt( z - z'\rt) 
    }_2
    \, \mathrm{d}t
        \nonumber
    \\
    &\quad\leq
    \int_0^1
    \lt\{
    \norm*{ W H \lt( z - z' \rt) }_2
    +
    \norm{W}_2
    \norm*{\nabla^2 \ell\lt(z_t\rt) - H}_2
    \norm*{z - z'}_2
    \rt\}
    \, \mathrm{d}t
        \nonumber
    &&\text{(Triangle inequality)}
    \\
    &\quad\leq
    \int_0^1
    \lt\{
    \norm*{ W H \lt( z - z' \rt) }_2
    +
    \delta
    \norm{W}_2
    \norm{z - z'}_2
    \rt\}
    \, \mathrm{d}t
    \nonumber
    &&\text{(\cref{assumption:almost_constant_hessian})}
    \\
    &\quad=
    \norm*{ W H \lt( z - z' \rt) }_2
    +
    \delta
    \norm{W}_2
    \norm{z - z'}_2 \; .
    \nonumber
\]
\end{proofEnd}

Using this, we can now simplify the $\nabla \ell$ terms in \cref{eq:meanfield_gradient_variance_decomposition}.
Applying \cref{thm:weighted_norm_smoothness} to $V_{\text{loc}}$ with $W = \mathrm{I}_d$ and Young's inequality, 
\[
    V_{\text{loc}}
    &\leq
    \mathbb{E}
    {\lt(
    \norm{H \lt( \mathcal{T}_{\lambda}\left(\mathsfit{u}\right) - \mathcal{T}_{\lambda'}\left(\mathsfit{u}\right) \rt) }_2
    +
    \delta \, \norm{\mathcal{T}_{\lambda}\left(\mathsfit{u}\right) - \mathcal{T}_{\lambda'}\left(\mathsfit{u}\right) }_2
    \rt)}^2
    \nonumber
    &&\text{(\cref{thm:weighted_norm_smoothness})}
    \\
    &\leq
    2 \, \mathbb{E}\norm{H \lt( \mathcal{T}_{\lambda}\left(\mathsfit{u}\right) - \mathcal{T}_{\lambda'}\left(\mathsfit{u}\right) \rt) }_2^2
    +
    2 \delta^2 \, \mathbb{E}\norm{\mathcal{T}_{\lambda}\left(\mathsfit{u}\right) - \mathcal{T}_{\lambda'}\left(\mathsfit{u}\right) }_2^2
    &&\text{(Young's inequality)}
    \nonumber
    \\
    &\leq
    2 \, \norm{H}_2^2 \mathbb{E}\norm{ \mathcal{T}_{\lambda}\left(\mathsfit{u}\right) - \mathcal{T}_{\lambda'}\left(\mathsfit{u}\right) }_2^2
    +
    2 \delta^2 \, \mathbb{E}\norm{\mathcal{T}_{\lambda}\left(\mathsfit{u}\right) - \mathcal{T}_{\lambda'}\left(\mathsfit{u}\right) }_2^2
    &&\text{(Operator norm)}
    \nonumber
    \\
    &=
    2 \lt( \norm{H}_2^2 +  \delta^2 \rt) \mathbb{E}\norm{ \mathcal{T}_{\lambda}\left(\mathsfit{u}\right) - \mathcal{T}_{\lambda'}\left(\mathsfit{u}\right) }_2^2
    \nonumber
    \\
    &=
    2 \lt( \norm{H}_2^2 +  \delta^2 \rt) \norm{\lambda - \lambda' }_2^2 
    \; .
    &&\text{(\cref{thm:reparam_identity})}
    \label{eq:general_gradient_variance_vloc}
\]
Similarly, applying \cref{thm:weighted_norm_smoothness} to $V_{\text{scale}}$ with $W = \mathsfit{U}$ and Young's inequality,
\[
    V_{\text{scale}}
    &\leq
    \mathbb{E}
    {\lt(
    \norm{\mathsfit{U} H \lt( \mathcal{T}_{\lambda}\left(\mathsfit{u}\right) - \mathcal{T}_{\lambda'}\left(\mathsfit{u}\right) \rt) }_2
    +
    \delta \norm{\mathsfit{U}}_2 \, \mathbb{E}\norm{\mathcal{T}_{\lambda}\left(\mathsfit{u}\right) - \mathcal{T}_{\lambda'}\left(\mathsfit{u}\right) }_2
    \rt)}^2
    \nonumber
    &&\text{(\cref{thm:weighted_norm_smoothness})}
    \\
    &\leq
    2
    \underbrace{
    \mathbb{E}\norm{ \mathsfit{U} H \lt( \mathcal{T}_{\lambda}\lt(\mathsfit{u}\rt) - \mathcal{T}_{\lambda'}\lt(\mathsfit{u}\rt) \rt) }_{2}^2
    }_{V_{\text{const}}}
    +
    2 \delta^2
    \underbrace{
    \mathbb{E}\norm{\mathsfit{U}}_2^2 \norm{ \mathcal{T}_{\lambda}\lt(\mathsfit{u}\rt) - \mathcal{T}_{\lambda'}\lt(\mathsfit{u}\rt) }_{2}^2
    }_{V_{\text{non-const}}}
    &&\text{(Young's inequality)}
    \; .
    \label{eq:thm_gradient_variance_upperbound_meanfield_general_vscale}
\]
$V_{\text{const}}$ corresponds to the constant component of the Hessian $\nabla^2 \ell$, whereas $V_{\text{non-const}}$ corresponds to the non-constant residual.
Denote the location and scale parameters of $\lambda$ and $\lambda'$ as
\[
    \lambda = (m, C) \qquad\text{and}\qquad \lambda' = (m', C') \; .
    \nonumber
\]
For $V_{\text{const}}$, we can use the following lemma:

\begin{theoremEnd}[%
    restate, 
    category=constanthessian, 
    text link={See the \hyperref[proof:prAtEnd\pratendcountercurrent]{\textit{full proof}} in~\cref{section:proof_constant_hessian}, p.~\pageref{proof:prAtEnd\pratendcountercurrent}.},
    text proof={Proof.}
]{lemma}\label{thm:constant_hessian}
    Suppose \(\mathcal{T}_{\lambda}\) is the reparameterization operator of a mean-field location-family and \cref{assumption:noise} holds.
    Then, for any matrix \(H \in \mathbb{R}^{d \times d}\) and any \(\lambda, \lambda' \in \mathbb{R}^d \times \mathbb{D}^d\),
    \[
        \norm{ \mathsfit{U} H \lt( \mathcal{T}_{\lambda}\lt(\mathsfit{u}\rt) - \mathcal{T}_{\lambda'}\lt(\mathsfit{u}\rt) \rt) }_{2}^2   
        \leq
        r_4 \norm{H}_2^2 \norm{\lambda - \lambda'}_2^2
        \; .
        \nonumber
    \]
\end{theoremEnd}
\vspace{-1ex}
\begin{proofEnd}
For clarity, let us denote $\bar{C} \triangleq C - C'$ \text{and} $\bar{m} \triangleq m - m'$ such that
\[
    \mathcal{T}_{\lambda}\lt(\mathsfit{u}\rt)   
    -
    \mathcal{T}_{\lambda'}\lt(\mathsfit{u}\rt)   
    &=
    \lt( C \mathsfit{u} + m \rt)
    -
    \lt( C' \mathsfit{u} + m' \rt)
    \nonumber
    \\
    &=
    \lt( C - C'\rt) \mathsfit{u} + \lt( m - m' \rt)
    \nonumber
    \\
    &=
    \bar{C} \mathsfit{u} + \bar{m} \; .
    \nonumber
\]
Then 
\[
    \norm{\mathsfit{U} H \lt( \mathcal{T}_{\lambda}\lt(\mathsfit{u}\rt) - \mathcal{T}_{\lambda'}\lt(\mathsfit{u}\rt) \rt) }_{2}^2
    &=
    \norm{ \mathsfit{U} H \lt( \bar{C} \mathsfit{u} + \bar{m} \rt) }_{2}^2
    \nonumber
    \\
    &=
    \underbrace{
        \mathbb{E}\norm{ \mathsfit{U} H \bar{C} \mathsfit{u} }_{2}^2
    }_{V_{\text{scale}}}
    + 
    2
    \underbrace{
    \inner{
        \mathsfit{U} H \bar{m}, H \bar{C} \mathsfit{u}
    }
    }_{V_{\text{cross}}}
    + 
    \underbrace{
    \mathbb{E}\norm{ \mathsfit{U} H \bar{m} }_{2}^2
    }_{V_{\text{loc}}} \; .
    \nonumber
\]

$V_{\text{loc}}$ and $V_{\text{cross}}$ are straightforward.
Under \cref{assumption:noise}, it immediately follows that
\[
    V_{\text{loc}}
    &=
    \mathbb{E}\norm{ \mathsfit{U} H \bar{m} }_{2}^2
    \nonumber
    \\
    &=
    \bar{m}^{\top} H^{\top} \mathbb{E} U^2 H \bar{m}
    \nonumber
    \\
    &=
    \bar{m}^{\top} H^{\top} H \bar{m}
    \qquad\qquad\text{(\cref{thm:noise})}
    \nonumber
    \\
    &=
    \norm{H \bar{z} }_2^2 \; .
    \nonumber
\]
On the other hand, 
\[
    V_{\text{cross}}
    &=
    \mathbb{E}
    \inner{
        \mathsfit{U} H \bar{m}, \, \mathsfit{U} H \bar{C} \mathsfit{u}
    }
    \nonumber
    \\
    &=
    \bar{m}^{\top} H^{\top} \lt( \mathbb{E}  \mathsfit{U}^2 H \bar{C} \mathsfit{u} \rt) \; . 
    \nonumber
\]
The expectation follows as
\[
    {\lt[
    \mathbb{E} U^2 H \bar{C} \mathsfit{u}
    \rt]}_i
    \nonumber
    &=
    \mathbb{E}
    \mathsfit{u}_i^2 \sum_{j=1}^d H_{ij} \bar{C}_{jj} \mathsfit{u}_j
    \nonumber
    \\
    &=
    H_{ii} \bar{C}_{ii} \mathbb{E} \mathsfit{u}_i^3 
    +
    \sum_{j \neq i} H_{ij} \bar{C}_{jj} \mathbb{E} \mathsfit{u}_i^2 \mathbb{E}  \mathsfit{u}_j
    \nonumber
    \\
    &=
    0 \; .
    &&\text{(\cref{assumption:noise})}
    \nonumber
\]
Thus, the cross term $V_{\text{cross}}$ vanishes.

$V_{\text{scale}}$ requires careful elementwise inspection in order to apply \cref{assumption:noise}.
That is,
\[
    V_{\text{scale}}
    &=
    \mathbb{E}\norm{ \mathsfit{U} H \bar{C} \mathsfit{u} }_{2}^2
    \nonumber
    \\
    &=
    \mathbb{E} \sum_{i=1}^d \mathsfit{u}_i^2 {\lt\{ \sum^d_{j=1} H_{ij} \bar{C}_{jj} \mathsfit{u}_j \rt\}}^2
    \nonumber
    \\
    &=
    \mathbb{E} \sum_{i=1}^d \mathsfit{u}_i^2 {\lt\{ H_{ii} \bar{C}_{ii} \mathsfit{u}_i + \sum_{j \neq i} H_{ij} \bar{C}_{jj} \mathsfit{u}_j \rt\}}^2
    \nonumber
    \\
    &=
    \mathbb{E} \sum_{i=1}^d \mathsfit{u}_i^2 
    {\lt\{ 
    H_{ii}^2 \bar{C}_{ii}^2 \mathsfit{u}_i^2 
    + 2 H_{ii} \bar{C}_{ii} \mathsfit{u}_i \lt( \sum_{j \neq i} H_{ij} \bar{C}_{jj} \mathsfit{u}_j \rt)
    + {\lt( \sum_{j \neq i} H_{ij} \bar{C}_{jj} \mathsfit{u}_j \rt)}^2
    \rt\}}
    \nonumber
    &&\text{(expand quadratic)}
    \\
    &=
    \sum_{i=1}^d
    {\lt\{ 
    H_{ii}^2 \bar{C}_{ii}^2 \mathbb{E} \mathsfit{u}_i^4
    + 2 H_{ii} \bar{C}_{ii} \mathbb{E} \mathsfit{u}_i^3 \mathbb{E} \lt( \sum_{j \neq i} H_{ij} \bar{C}_{jj} \mathsfit{u}_j \rt)
    + \mathbb{E} \mathsfit{u}_i^2 \mathbb{E} {\lt( \sum_{j \neq i} H_{ij} \bar{C}_{jj} \mathsfit{u}_j \rt)}^2
    \rt\}}
    &&\text{(distribute $\mathsfit{u}_i^2$)}
    \nonumber
    \\
    &=
    \sum_{i=1}^d
    {\lt\{ 
    r_4 H_{ii}^2 \bar{C}_{ii}^2 
    + 
    \mathbb{E} {\lt( \sum_{j \neq i} H_{ij} \bar{C}_{jj} \mathsfit{u}_j \rt)}^2
    \rt\}}
    \nonumber
    &&\text{(\cref{assumption:noise})}
    \\
    &=
    \sum_{i=1}^d
    {\lt\{ 
    r_4 H_{ii}^2 \bar{C}_{ii}^2 
    + 
    \sum_{j \neq i} 
    {\lt( 
        H_{ij}^2 \bar{C}_{jj}^2 \mathbb{E} \mathsfit{u}_j^2
        +
        \sum_{k \neq j} 
        H_{ij} \bar{C}_{jj} \mathbb{E} \mathsfit{u}_j 
        H_{ik} \bar{C}_{kk} \mathbb{E} \mathsfit{u}_k
    \rt)}
    \rt\}}
    \nonumber
    &&\text{(expand quadratic)}
    \\
    &=
    \sum_{i=1}^d
    \lt\{
    r_4 H_{ii}^2 \bar{C}_{ii}^2 
    + 
    \sum_{j \neq i} 
    H_{ij}^2 \bar{C}_{jj}^2 
    \rt\}
    &&\text{(\cref{assumption:noise})}
    \nonumber
    \\
    &=
    \sum_{i=1}^d
    \sum_{j=1}^d 
    H_{ij}^2 \bar{C}_{jj}^2 
    +
    \lt( r_4 - 1 \rt)
    \sum_{i=1}^d
    H_{ii}^2 \bar{C}_{ii}^2 
    \nonumber
    \\
    &=
    \norm{ H \bar{C} }_{\mathrm{F}}^2
    +
    \lt( r_4 - 1 \rt)
    \norm{\mathrm{diag}\lt(H \bar{C}\rt)}_{\mathrm{F}}^2 \; .
    \nonumber
\]
Combining everything, 
\[
    \norm{ H \lt( \mathcal{T}_{\lambda}\lt(\mathsfit{u}\rt) - z \rt) }_{\mathsfit{U}^2}^2   
    &=
    V_{\text{loc}} + 2 V_{\text{cross}} + V_{\text{scale}}
    \nonumber
    \\
    &=
    \norm{H \bar{m} }_2^2
    +
    \norm{ H \bar{C} }_{\mathrm{F}}^2
    +
    \lt( r_4 - 1 \rt)
    \norm{\mathrm{diag}\lt(H \bar{C}\rt)}_{\mathrm{F}}^2
    \label{eq:constant_hessian_combine}
\]
From the property of the Frobenius norm, for any matrix $A \in \mathbb{R}^{d \times d}$, we can decompose 
\[
    \norm{ A }_{\mathrm{F}}^2
    \quad=\quad
    \sum_{i=1}^d \sum_{j=1}^d A_{ij}^2
    \quad=\quad
    \sum_{i=1}^d A_{ii}^2
    +
    \sum_{i=1}^d \sum_{i \neq j} A_{ij}^2
    \quad=\quad
    \norm{ \mathrm{diag}\lt( A \rt) }_{\mathrm{F}}^2
    +
    \norm{ \mathrm{off}\lt( A \rt) }_{\mathrm{F}}^2  \; ,
    \nonumber
\]
where $\mathrm{off}(A)$ is a function that zeroes-out the diagonal of $A$.
Then from \cref{eq:constant_hessian_combine}, 
\[
    \norm{ H \lt( \mathcal{T}_{\lambda}\lt(\mathsfit{u}\rt) - z \rt) }_{\mathsfit{U}^2}^2   
    &=
    \norm{H \bar{m} }_2^2
    +
    \norm{ \mathrm{off}\lt( H \bar{C} \rt) }_{\mathrm{F}}^2
    +
    \norm{ \mathrm{diag}\lt( H \bar{C} \rt) }_{\mathrm{F}}^2
    +
    \lt( r_4 - 1 \rt)
    \norm{\mathrm{diag}\lt(H \bar{C}\rt)}_{\mathrm{F}}^2
    \nonumber
    \\
    &=
    \norm{H \bar{m} }_2^2
    +
    \norm{ \mathrm{off}\lt( H \bar{C} \rt) }_{\mathrm{F}}^2
    +
    r_4
    \norm{\mathrm{diag}\lt(H \bar{C}\rt)}_{\mathrm{F}}^2
    \nonumber
    \\
    &\leq
    r_4 \norm{H \bar{m} }_2^2
    +
    r_4
    \norm{ \mathrm{off}\lt( H \bar{C} \rt) }_{\mathrm{F}}^2
    +
    r_4
    \norm{\mathrm{diag}\lt(H \bar{C}\rt)}_{\mathrm{F}}^2
    &&\text{(\cref{thm:kurtosis_bound})}
    \nonumber
    \\
    &=
    r_4
    \lt(
    \norm{H \bar{m} }_2^2
    +
    \norm{ H \bar{C} }_{\mathrm{F}}^2
    \rt)
    \nonumber
    \\
    &\leq
    r_4 \norm{H}_2^2 
    \lt(
    \norm{\bar{m}}_2^2
    +
    \norm{\bar{C} }_{\mathrm{F}}^2
    \rt)
    &&\text{(operator norm)}
    \nonumber
    \\
    &=
    r_4 \norm{H}_2^2 \norm{\lambda - \lambda'}_2^2 \; ,
    \nonumber
\]
which is the stated result.
\end{proofEnd}

The remaining part of the proof closely resembles the proof sketch of   \cref{thm:gradient_variance_upperbound_meanfield_general}.
For convenience, we first restate \cref{thm:gradient_variance_upperbound_meanfield_general} and then proceed to the full proof.

\printProofs[gradientvarianceupperboundmeanfieldgeneral]

\newpage
\subsubsection{Proof of \cref{thm:weighted_norm_smoothness}}\label{section:proof_weighted_norm_smoothness}
\printProofs[weightednormsmoothness]

\newpage
\subsubsection{Proof of \cref{thm:constant_hessian}}\label{section:proof_constant_hessian}
\printProofs[constanthessian]

\newpage
\subsubsection{Proof of \cref{thm:gradient_variance_lowerbound}}\label{section:proof_gradient_variance_lowerbound}

For any $\mu, L \in (0, \infty)$ such that $\mu \leq L$, our goal is to obtain a matrix-valued function $H_{\mathrm{worst}} : \mathbb{R}^d \to \mathbb{S}_{\succ 0}^d $ satisfying 
\[
    \mu \mathrm{I}_d \quad\preceq\quad H_{\mathrm{worst}} \quad\preceq\quad  L \mathrm{I}_d 
    \nonumber
\]
that, under the choice $H = H_{\mathrm{worst}}$, maximizes the quantity 
\[
    \norm*{\mathsfit{U} \int^1_0 H\lt(\mathsfit{z}^w\rt) \lt( \mathsfit{z} - \bar{z} \rt) \mathrm{d}w }^2_{2} \; ,
    \label{eq:lower_bound_target}
\]
where $\bar{z} \in \{ z \mid \nabla \ell\lt(z\rt) = 0 \}$ is any stationary point of $\ell$,  $\mathsfit{z} \triangleq \mathcal{T}_{\lambda}(\mathsfit{u})$, and $\mathsfit{z}^w \triangleq w \mathsfit{z} + \lt(1 - w\rt) \bar{z}$.
Given the norm constraint, the worst-case example that maximizes \cref{eq:lower_bound_target} will be the matrix-valued function that approximately results in
\[
    \norm*{\mathsfit{U} \int^1_0 H\lt(\mathsfit{z}^w\rt) \lt( \mathsfit{z} - \bar{z} \rt) \mathrm{d}w }^2_{2}
    \quad\asymp\quad
    L^2
    \norm{\mathsfit{U}}_2^2
    \norm*{\mathsfit{z} - \bar{z}}^2_{2} \; 
    \nonumber
\]
for \textit{any} realization of $\mathsfit{u}$ on $\mathbb{R}^d$.
For this, we will establish the relations
\[
    \norm*{\mathsfit{U} \int^1_0 H\lt(\mathsfit{z}^w\rt) \lt( \mathsfit{z} - \bar{z} \rt) \mathrm{d}w }^2_{2} 
    \quad=\quad
    \norm*{\mathsfit{U} H\lt( \mathsfit{z}^w \rt) \lt( \mathsfit{z} - \bar{z} \rt) }^2_{2} 
    \quad\asymp\quad
    L \norm{\mathsfit{U}}_2^2 \norm*{\mathsfit{z} - \bar{z}}^2_{2} \; 
    \; .
    \label{eq:lowerbound_steps}
\]

The first equality in \cref{eq:lowerbound_steps} follows from identifying the conditions where $H(\mathsfit{z}^w)$ is independent of the value of $w$.
For the specific choice of 
\[
    m = \bar{z} = 0_d , \qquad C = \mathrm{diag}\lt(\delta, \ldots, \delta\rt) ,
    \qquad \text{any $\delta > 0$} \; ,
    \nonumber
\]
$H(\mathsfit{z}^w)$ is independent of $w$ if it only depends on the quantities
\[
  \mathsfit{i}_* = \argmax_{i=1, \ldots, d} \abs{\mathsfit{z}_i^w}
  \qquad\text{and}\qquad
  \hat{\mathsfit{z}}^w \triangleq \frac{\mathsfit{z}^w}{\norm{\mathsfit{z}^w}_2} \; .
\]
That is, with some abuse of notation, $H(\mathsfit{z}^w) = H(\hat{\mathsfit{z}}^w, \mathsfit{i}_*)$.

\begin{lemma}\label{eq:matrix_constant_w}
    Suppose $m = \bar{z} = 0_d$, and for any $\delta > 0$, $C = \mathrm{diag}\lt(\delta, \ldots, \delta\rt)$.
    Then, if $H(\mathsfit{z}^w)$ is a function of $\mathsfit{i}_*$ and $\hat{\mathsfit{z}}^w$, then $H(\mathsfit{z}^w)$ is constant with respect to $w \in [0, 1]$.
\end{lemma}
\begin{proof}
    It suffices to show that, under the stated conditions, the values of $\mathsfit{i}_*$ and $\hat{\mathsfit{z}}^w$ are invariant to $w$.
    For $\hat{\mathsfit{z}}^w$, this is trivially follows from the assumption that $\bar{z} = 0$ as
    \[
        \hat{\mathsfit{z}}^w 
        = \frac{\mathsfit{z}^w}{\norm{\mathsfit{z}^w}_2}
        = \frac{w \mathsfit{z} + (1 - w) \bar{z}}{\norm{w \mathsfit{z} + (1 - w) \bar{z}}_2}
        = \frac{w \mathsfit{z}}{\norm{w \mathsfit{z}}_2}
        = \frac{\mathsfit{z}}{\norm{\mathsfit{z}}_2} \; .
        \nonumber
    \]
    For $\mathsfit{i}_*$, we use the fact that the diagonal matrix $C$ is isotropic as
    \[
        \argmax_{i=1, \ldots, d}\; \abs{
            \mathsfit{z}^w_i
        }
        \quad=\quad \argmax_{i=1, \ldots, d}\; w \, C_{ii} \abs{\mathsfit{u}_i}
        \quad=\quad \argmax_{i=1, \ldots, d}\; w \delta \, \abs{\mathsfit{u}_i}
        \quad=\quad \argmax_{i=1, \ldots, d}\; \abs{\mathsfit{u}_i} \; .
        \nonumber
    \]
\end{proof}
From $H(\mathsfit{z}^w) = H(\hat{\mathsfit{z}}^w, \mathsfit{i}_*)$, the integral in \cref{eq:lower_bound_target} can be solved as
\[
    \norm*{\mathsfit{U} \int^1_0 H\lt(\mathsfit{z}^w\rt) \lt( \mathsfit{z} - \bar{z} \rt) \mathrm{d}w }^2_{2} 
    \quad=\quad
    \norm*{\mathsfit{U} H\lt( \mathsfit{z}^w \rt) \lt( \mathsfit{z} - \bar{z} \rt) }^2_{2} \; .
    \nonumber
\]
It remains to construct $H$ in a way that depends only on $\hat{\mathsfit{z}}^w$ and $\mathsfit{i}_*$ such that 
\[
    \norm*{\mathsfit{U} H\lt( \mathsfit{z}^w \rt) \lt( \mathsfit{z} - \bar{z} \rt) }^2_{2} 
    \quad\asymp\quad
    L \norm{\mathsfit{U}}_2^2 \norm*{\mathsfit{z} - \bar{z}}^2_{2} \; .
    \nonumber
\]
Recalling the spectral constraints, this is equivalent to, for all $z \in \mathbb{R}^d$, $H$ solving the equation
\[
    H\lt( \mathsfit{i}_* \rt) \mathsfit{z}
    =
    L \, \norm{\mathsfit{z}}_2 \,
    \mathrm{e}_{\mathsfit{i}_*}
    \quad \text{subject to} \quad
    \mu \mathrm{I}_d \leq H\lt(  \mathsfit{z}^w \rt) \leq L \mathrm{I}_d
    \; .
    \label{eq:lower_bound_goal}
\]
Notice the equivalence
\[
    H\lt(  \mathsfit{z}^w \rt) \mathsfit{z}
    =
    L \, \norm{\mathsfit{z}}_2 \mathrm{e}_{\mathsfit{i}_*}
    \qquad\Leftrightarrow\qquad
    H\lt(  \mathsfit{z}^w \rt) \,
    \frac{\mathsfit{z}^w}{\norm{\mathsfit{z}^w}_2}
    =
    L
    \mathrm{e}_{\mathsfit{i}_*}
    \; .
    \nonumber
\]
Thus, $\hat{\mathsfit{z}}^w$ and $\mathsfit{i}_*$ contain all the information we need.
The following matrix-valued function almost solves \cref{eq:lower_bound_goal}:
\[
    H_{\mathrm{worst}}\lt(z\rt) = \alpha \mathrm{I}_d + \frac{\beta}{2} \lt( \mathrm{e}_{\mathsfit{i}_*} { \hat{z} }^{\top} + \hat{z} 
 \, \mathrm{e}_{\mathsfit{i}_*}^{\top} \rt) , 
    \quad\text{where}\quad \hat{z} = \frac{z}{\norm{z}_2} \; .
    \label{eq:hworst}
\]
This function is reminiscent of a householder reflector~\citep[Eq. 10.4]{trefethen_numerical_1997} with some modifications to satisfy the eigenvalue constraint.
That is, from the fact that both $\mathrm{e}_{\mathsfit{i}_*}$ and $\hat{z}$ have a unit norm, it is apparent that this matrix satisfies \cref{assumption:almost_constant_hessian} with $H = \alpha \mathrm{I}_d$ and $\delta = \beta$.
Furthermore, by setting the constants as
\[
    \alpha = \frac{L + \mu}{2} 
    \quad\text{and}\quad
    \beta = \frac{L - \mu}{2} \; ,
    \label{eq:thm_gradient_variance_lowerbound_alpha_beta}
\]
the triangle inequality asserts that the eigenvalue constraint $\mu \mathrm{I}_d \leq H_{\mathrm{worst}} \leq L \mathrm{I}_d $ is satisfied almost surely.

Given the specific form of $H_{\mathrm{worst}}$, we are now ready to formally prove \cref{thm:gradient_variance_lowerbound}.
Let us first restate the proposition for convenience and then proceed to the proof.

\printProofs[gradientvariancelowerbound]

\end{document}